\documentclass[11pt]{article}
\usepackage{fullpage}

\setlength{\parindent}{1em}

% \input{common_header}
%% Packages for Math
\usepackage{amsmath}
\usepackage{amssymb}
\usepackage{epsfig, psfrag}
\usepackage{amsthm}
\usepackage{latexsym}
\usepackage{bbm}
\usepackage{soul}
\usepackage{nicefrac,enumitem}
\usepackage{xfrac}

%% Packages for algorithm
\usepackage[ruled,lined]{algorithm2e}
\SetKw{Int}{Initialize:}
\SetKw{Def}{Definition:}

\newcommand{\tcpgray}[1]{\texttt{\footnotesize\color{black!70}// #1}}
\newcommand{\tcpblue}[1]{\texttt{\footnotesize\color{black!60}// {\color{blue}#1}}}

%% Packages for Math fonts
\usepackage{amsfonts} 
\usepackage{dsfont} 
\usepackage{euscript} 
\usepackage{bm}

%% For better readability of math
\allowdisplaybreaks
\usepackage{mathtools}
%\mathtoolsset{showonlyrefs=true}

%% Packages for Figures
\usepackage{epstopdf}
\usepackage{subfig}
\usepackage{graphics,color}
\usepackage{graphicx}
\usepackage[dvipsnames]{xcolor}
% \usepackage[colorlinks=true,
% linkcolor=blue,
% urlcolor=blue,
% citecolor=Bittersweet
% ]{hyperref}       % hyperlinks
\usepackage{hyperref}
 
\usepackage{tikz}
\usetikzlibrary{shapes.geometric, arrows}
\tikzstyle{startstop} = [rectangle, rounded corners, minimum width=7.5cm, minimum height=2.0cm,text centered, draw=black, fill=gray!25, text width=7.5cm]
\tikzstyle{startstopwide} = [rectangle, rounded corners, minimum width=8.5cm, minimum height=2.5cm,text centered, draw=black, fill=gray!25, text width=8.5cm]
 
%% Other Packages
\usepackage[utf8]{inputenc} % allow utf-8 input
\usepackage[T1]{fontenc}    % use 8-bit T1 fonts
\usepackage{booktabs}       % professional-quality tables
\usepackage{lipsum}
\usepackage{microtype}      % microtypography
\usepackage{natbib}
% \usepackage{cite}
% \usepackage{url}
% \usepackage{xspace}
% \usepackage[bookmarks]{hyperref}
% \usepackage{fancyhdr}
% %\usepackage[noend]{algpseudocode}
% \usepackage{etoolbox}

\newtheorem{remark}{Remark}
\newtheorem{definition}{Definition}
\newtheorem{lemma}{Lemma}
\newtheorem{theorem}{Theorem}

\newenvironment{proofsketch}{\textit{Proof sketch.}}{\hfill$\square$}

%% symbols
\def\expe{\mathbb{E}}   % expectation
\def\P{\mathsf{P}}   % probability

\def\argmax{\mathop{\rm arg\,max}}

%% font shortcuts

%% new commands
\newcommand{\indicate}[1]{\mathbf{1}{\left\{#1\right\}}}

\usepackage{mathtools}

%% Compress

% \usepackage{titlesec}
% \titlespacing\section{0pt}{2pt plus 1pt minus 1pt}{1pt plus 1pt minus 1pt}
% \titlespacing\subsection{0pt}{2pt plus 1pt minus 1pt}{1pt plus 1pt minus 1pt}

% \setlength{\intextsep}{1pt}
% \setlength{\abovedisplayskip}{3.0pt plus 1.0pt minus 2.0pt}
% \setlength{\belowdisplayskip}{3.0pt plus 1.0pt minus 2.0pt}

\graphicspath{{./Images/}}
\DeclareGraphicsRule{.JPG}{eps}{*}{jpeg2ps#1}

\interfootnotelinepenalty=10000 % prevents the footnote from spanning two pages
%
%
%%%%%%%%%%%%%%%%%%%%%%%%%%%%%%%%%%

 \date{}

\title{\huge When to Call Your Neighbor?\\ Strategic Communication in Cooperative Stochastic Bandits}

\author{%
	Udari Madhushani\thanks{Princeton University, 41 Olden Street, Princeton, NJ 08544,
		\texttt{email:(udarim, naomi) @princeton.edu}} \qquad    
	Naomi Ehrich Leonard\footnotemark[1]\qquad
}

\begin{document}

\maketitle

\begin{abstract}
In cooperative bandits, a framework that captures essential features of collective sequential decision making, agents can minimize group regret, and thereby improve performance, by leveraging shared information. However, sharing information can be costly, which motivates developing policies that minimize group regret while also reducing the number of messages communicated by agents. Existing cooperative bandit algorithms obtain optimal performance when agents share information with their neighbors at \textit{every time step}, i.e., full communication. This requires $\Theta(T)$ number of messages, where $T$ is the time horizon of the decision making process. We propose \textit{ComEx},  a novel cost-effective communication protocol in which the group achieves the same order of performance as full communication while communicating only $O(\log T)$ number of messages. Our key step is developing a method to identify and only communicate the information crucial to achieving optimal performance. Further we propose novel algorithms for several benchmark cooperative bandit frameworks and show that our algorithms obtain \textit{state-of-the-art} performance while consistently incurring a significantly smaller communication cost than existing algorithms.
\end{abstract}

%%%%%%%%%%%%%%%%%%%%%%%%%%%%%%

\section{Introduction}
\label{sec:intro}
Sequential decision making in uncertain environments has been extensively studied over the past several decades due to its wide range of real world applications including recommender systems, user-targeted online advertising ~\citep{tossou2016algorithms}, clinical trials ~\citep{Durand2018ContextualBF} and target  searching (e.g. finding nuclear or a temperature source) in robotics. Making optimal decisions under uncertainty requires striking a balance between exploring the environment to identify better decisions and exploiting the decisions that are already known to produce higher outcomes. In collective decision making, i.e., a group of agents making sequential decisions, performance can be greatly improved through cooperative communication by sharing information about the environment. However, often times communication is time consuming and expensive. For example, consider a recommender systems, in which multiple servers networked to handle high demands. In this case high communication between servers can lead to service latency. Similarly, for a group of robots, communication can increase battery power consumption. Thus the cost associated with communication makes it desirable to reduce the amount of shared information. Motivated by this we ask:

\textit{Can we minimize communication without sacrificing performance in sequential decision making?}

A crucial step in answering this question is, identifying which information is most valuable. We study this problem in bandit framework, which  models sequential decision making in uncertain environments ~\citep{lai1985asymptotically}. In stochastic bandits, an agent repeatedly pulls an arm from a given set of arms and receives a  reward drawn from the probability distribution associated with the arm. The goal is maximizing cumulative reward. In an uncertain environment, the agent is required to execute a combination of \textit{exploiting actions}, i.e., pulling the arms that are known to provide high rewards, and \textit{exploring actions}, i.e., pulling  lesser known arms in order to identify arms that might potentially provide higher rewards \cite{auer2002finite}. In cooperative bandits a group of agents are faced with the same bandit problem and the goal is maximizing cumulative group reward \cite{landgren2016distributed}. Agents can obtain optimal performance by sharing all information they obtained about the arms, i.e., full communication. Thus more specifically we ask how we can minimize communication while obtaining same level of performance as full communication?

In cooperative bandits it is most useful for agents to obtain information about suboptimal arms. Each agent can reduce the number of pulls drawn from suboptimal arms by leveraging communication to reduce the uncertainty associated with the estimates of suboptimal arms. Any efficient stochastic bandit algorithm pulls suboptimal arms logarithmically in time. Thus, when communication is costly, it is desirable to communicate reward values received from suboptimal arms only. Thus our problem effectively reduces to identifying when is it more likely to pull a suboptimal arm? 

We solve this problem by proposing ComEx, a new communication protocol, in which agents only communicate the rewards they receive from exploring actions. This is because exploring actions, typically lead to pulling suboptimal arms. Combining ComEx with a cooperative Upper Confidence Bound (UCB) sampling rule \cite{kolla2018collaborative}, we prove that ComEx obtains the same order of performance as full communication, while incurring a significantly smaller communication cost than full communication. We analyze performance of the algorithm using expected group cumulative regret, which is defined as the total expected loss suffered by agents due to pulling suboptimal arms. Measuring the communication cost by the number of messages shared by agents, we prove that with ComEx agents only suffer a $O(\log T)$ cost while with full communication they suffer a $\Theta(T)$ cost. 

We show that ComEx can be incorporated in a wide range of cooperative bandit algorithms to obtain same order of performance as full communication for a significantly smaller communication cost than full communication. Incorporating ComEx, we propose novel algorithms for bench mark cooperative bandit frameworks: decentralized bandits with 1.)  instantaneous rewards sharing, 2.) message passing, 3.) estimate sharing and centralized bandits with 4.) instantaneous rewards sharing 5.)  message passing. We propose another algorithm by combining ComEx with message passing and Thompson sampling. Further we provide results illustrating that our algorithms obtain \textit{state-of-the-art} performance while consistently incurring a significantly smaller communication cost than existing algorithms in these benchmark frameworks.

\noindent \textbf{Key contributions.}
We make following key contributions in this work:
\begin{itemize}[leftmargin=20pt]
    \item We propose ComEx, a novel and cost-effective communication protocol for cooperative bandits.
    \item We provide theoretical guarantees that ComEx obtains the same order group regret as full communication while incurring a $O(\log T)$ communication cost. In contrast, full communication incurs a $\Theta(T)$ communication cost.
    \item Incorporating ComEx, we propose novel algorithms in several benchmark cooperative bandit frameworks. We provide both theoretical guarantees and experimental results validating \textit{state-of-the-art} performance of our proposed algorithms. 
\end{itemize}

%%%%%%%%%%%%%%%%%%%%%%%%%

\section{Related work}
\label{sec:related_work}
\noindent \textbf{Decentralized reward sharing.}
In decentralized reward sharing agents share instantaneous rewards with their neighbors \cite{chakraborty2017coordinated,kolla2018collaborative,madhushani2020distributedCons,madhushani2020doesn,wang2020optimal}.
% \citealt{chakraborty2017coordinated}, developed algorithms for the problem where at each time step, agents decide either to pull an arm or to broadcast the last obtained reward to the entire group.
The paper \cite{kolla2018collaborative} considered that neighbors are defined according to a fixed communication graph and provide graph structure dependent regret bounds. The paper \cite{chakraborty2017coordinated,madhushani2019heterogeneous,madhushani2020dynamic,madhushani2020heterogeneous} studied the cooperative bandit problem with time varying communication structures. The papers \cite{cesa2016delay,bar2019individual,dubey2020cooperative} considered message passing communication rules where each agent initiates a message and send the message to its neighbors. A message received from a neighbor is subsequently forwarded to other neighbors. 
% Each message contain agent id, time step, arm id and reward.

\noindent \textbf{Decentralized estimate sharing.}
In estimate sharing each agent share the estimated average reward and number of arm pulls from each arm with its neighbors defined according to a fixed communication graph. The paper \cite{szorenyi2013gossip} considered a P2P communication where an agent is only allowed to communicate with two other agents at each time step. The papers 
\cite{landgren2016distributed,landgren2016distributedCDC,martinez2019decentralized,landgren2020distributed} used a running consensus algorithm to update estimates and provide graph-structure-dependent performance.
% measures that predict the relative performance of agents and networks. 
% In \cite{martinez2019decentralized} the authors
% proposed an accelerated consensus procedure in the case that agents know the spectral gap of the communication graph and design a decentralized UCB algorithm based on delayed rewards. 

\noindent \textbf{Centralized leader-follower setting.} 
 A communication strategy where agents observe the rewards and choices of their neighbors according to a leader-follower setting is considered in \cite{landgren2018social,kolla2018collaborative,wang2020optimal}. In \cite{landgren2018social,kolla2018collaborative}, followers pull the last arm pulled by their neighbors. 
%  The paper \citealt{kolla2018collaborative} proposed an algorithm, which uses a deterministic communication protocol and exploits degree heterogeneity of the communication network graph.  
 In \cite{wang2020optimal}
 one leader explores and estimates the mean reward of arms, while all other agents pull the arm with highest estimated mean per the leader. 
 
\noindent \textbf{Communication cost.}
% In \cite{chakraborty2017coordinated} authors considered that each agent suffers from an opportunity cost, whenever it decides to broadcast information. 
The paper \cite{tao2019collaborative} considered a pure exploration bandit problem and measures the communication by the number of times agents communicate. The paper \cite{madhushani2020dynamic} proposed a communication protocol where agents observe their neighbors when they have high uncertainty about arms.  
% This reduces the cost of observation, which is proportional to the number of observations.
\citealt{wang2020optimal} proposed a
% improved results presented in \citep{martinez2019decentralized} for Bernoulli rewards by utilizing a 
leader-follower algorithm with a constant communication cost. The paper \cite{Wang2020Distributed} proposed an algorithm that achieves near-optimal performance where agents achieve sublinear expected regret. In their work,  communication cost is independent of time and measured by the amount of data transmitted. 
 
\noindent \textbf{Distributed Thompson sampling.} Recently \cite{verstraeten2019multi,lalitha2020bayesian} proposed distributed Thompson sampling rules. The paper  \cite{verstraeten2019multi} studied the problem with sparse communication structures. The paper  \cite{lalitha2020bayesian} provided regret guarantees that matches the corresponding centralized regret guarantees.
 
 %%%%%%%%%%%%%%%%%%%%%%%%%%%%%%

\section{ComEx: Communicate When Exploring}\label{sec:ComEx}
In this section we provide mathematical formulation and and intuition of our communication protocol.

\noindent \textbf{Notations.}
For any positive integer $M$ we denote the set $\{1,2,\ldots, M\}$ as $[M]$. We define $\indicate{x}$ as an indicator variable that takes value 1 if $x$ is true and 0 otherwise. Further, we use $X\symbol{92}x$ to denote the set $X$ excluding the element $x.$ We use $ | X |$ to denote the number of elements in set $X.$ For any general graph $G$ we define $\Bar{\chi}(G),\Bar{\gamma}(G)$ as clique covering number and dominating number respectively. We use $G_{\gamma}$ to denote the $\gamma^{\mathrm{th}}$ power graph of $G.$ Let $g(M,x)=M+\sum_{i=1}^N\left(12\log (3(x+1)) +3\log { (x+1)}\right).$

\noindent \textbf{Cooperative stochastic bandits.}
We consider the cooperative bandit problem with $K$ arms and $N$ agents. Reward distributions of each arm $k\in [K]$ is assumed to be sub-Gaussian with mean $\mu_k$ and variance proxy $\sigma_k^2.$ At each time step $t\in [T]$ each agent $i\in [N]$ pulls an arm $A_t^{(i)}$ and receives a numerical reward $X_t^{(i)}$ drawn from the probability distribution associated with the pulled arm. Without loss of generality we assume that $\mu_1\geq \mu_2\ldots \geq \mu_K$ and define $\Delta_k:=\mu_1-\mu_k, \forall k>1$ to be the expected reward gap between optimal arm, i.e., the arm with highest mean reward, and arm $k.$ Let $\Bar{\Delta}:=\min_{k\neq 1, k\in [K]}\Delta_{k}$ be the minimum expected reward gap. We make following assumptions.

{\bf{Assumptions:}}\\
\indent(\textbf{A1}) When more than one agent pulls the same arm at the same time they receive rewards independently drawn from the probability distribution associated with the pulled arm.\\
\indent(\textbf{A2}) All the agents know $\sigma^2\geq\sigma^2_k, \forall k,$ an upper bound of the variance proxy associated with arms.

\noindent \textbf{Communication over a general graph.} 
Let $G(V,E)$ be a general graph that encodes the hard communication constraints among agents. The vertex set $V$ is the set of agents $[N]$ and each edge $(i,j) \in E$ indicates that agents $i$ and $j$ are neighbors. We consider that agents directly communicate with their neighbors only. Let $\indicate{(i,i)\in E}=1,\forall i.$ At each time step $t$ we define the communication between agents by $G_t(V,E_t)$ where $E_t\subseteq E.$ Let $d^{(i)}$ be the degree of agent $i$. Let $G_{\gamma}$ denote the $\gamma^{\mathrm{th}}$ power graph of $G.$  Denote $d_{\gamma}^{(i)}$ to be the degree of agent $i$ in graph $G_{\gamma}$, i.e., number of agents within a distance of $\gamma$ from agent $i$ in graph $G.$ For any $\gamma$ let $d_{\gamma}^{(i)^{+}}=d_{\gamma}^{(i)}+1.$

We denote $\mathbf{m}^{(i)}_t$ as the message shared by agent $i$ at time $t$ with its neighbors. This can be either a single message containing information about a particular arm pull, typically the last arm pull of agent $i,$ or a concatenation of information about several arm pulls by more than one agent over several previous time steps. We define $n^{(i)}_k(t):= \sum_{\tau= 1}^t \indicate{A^{(i)}_{\tau} = k}$ and $N^{(i)}_k(t):= \sum_{\tau = 1}^t \sum_{j=1}^N \indicate{A^{(j)}_{\tau} = k} \indicate{(i,j) \in E_{\tau}}$ to be the number of times until time step $t$ that agent $i$ pulled arm $k$ and observed reward values from arm $k$, respectively. Note that the number of observations $N^{(i)}_k(t)$ is the sum of the number of pulls drawn by agent $i$ of arm $k$ and the number of times agent $i$ received reward values of arm $k$ from its neighbors. Let $\widehat{\mu}_k^{(i)}(t)$ denote agent $i$'s estimated average reward of arm $k$ at time $t.$

\noindent \textbf{Regret and communication cost.}
Following the convention we define regret as the loss suffered by agents due to pulling suboptimal arms. Let $R(t)$ be the cumulative group regret at time $t.$ Then the expected cumulative group regret can be given as $
\expe \left[R(t)\right]:= \sum_{i=1}^N\sum_{k=2}^K\Delta_k\expe[n^{(i)}_k(t)].$
We define the communication cost as the number of messages shared by agents. We consider the cost of sharing a concatenated message to be the number of single messages included in it. Let $L(t)$ be the cumulative group communication cost at time $t$. Then, the expected group communication cost can be given as 
$
\expe[L(t)]:=\sum_{i=1}^N\sum_{\tau=1}^t\expe\left[\Big | \mathbf{m}_{\tau}^{(i)}\Big |\right].
$

%%%%%%%%%%%%%%%%%%%%%%%%%

\noindent \textbf{Proposed communication protocol: ComEx.}
We propose ComEx, a cost-effective partial communication protocol that obtains same order of performance as full communication. 

\begin{minipage}{0.65\linewidth}
\begin{algorithm}[H] 
\caption{ComEx}
    \SetAlgoLined
    \KwIn{Bandit environment, algorithm  parameters}
    
    \For{\text{\normalfont\ each iteration} $t \in [T]$}{
            
         \For{\text{\normalfont\ each agent} $i \in [N]$}{
         \vspace{2pt}
           \tcpgray{Sampling phase}
           
           Sampling rules: Cooperative UCB,
            Cooperative Thompson
        
        \vspace{2pt}
           \tcpgray{Message generating phase}
           
           \tcpblue{Replace full communication with ComEx}
            
            %  \tcc{ComEx communication protocol}
            \If {$A_{t}^{(i)}\neq \argmax_k{\widehat{\mu}_k^{(i)}(t-1)}$}{ 
             $\textsc{Create}\left(m_t^{(i)}:=\Big \langle i,t, A_t^{(i)}, X_t^{(i)}\Big \rangle\right)$
                
            }
        }
        \For {\text{\normalfont\ each agent} $i \in [N]$}{
        \vspace{2pt}
           \tcpgray{Communication phase}
           
           Communication rule: 
               Decentralized (or centralized) instantaneous reward sharing, 
               Decentralized (or centralized) message passing
 
            \vspace{2pt}
              \tcpgray{Estimate updating phase}
              
        }
        
        \For {\text{\normalfont\ each arm} $k \in [K]$}{
            
                \textsc{Calculate}   $\left( \widehat{\mu}_k^{(i)}(t),N_k^{(i)}(t)\right)$ 
            }
    }
\end{algorithm}
\end{minipage}
\hfill
\begin{minipage}{0.32\linewidth}
    \centering
    \includegraphics[width=0.95\linewidth]{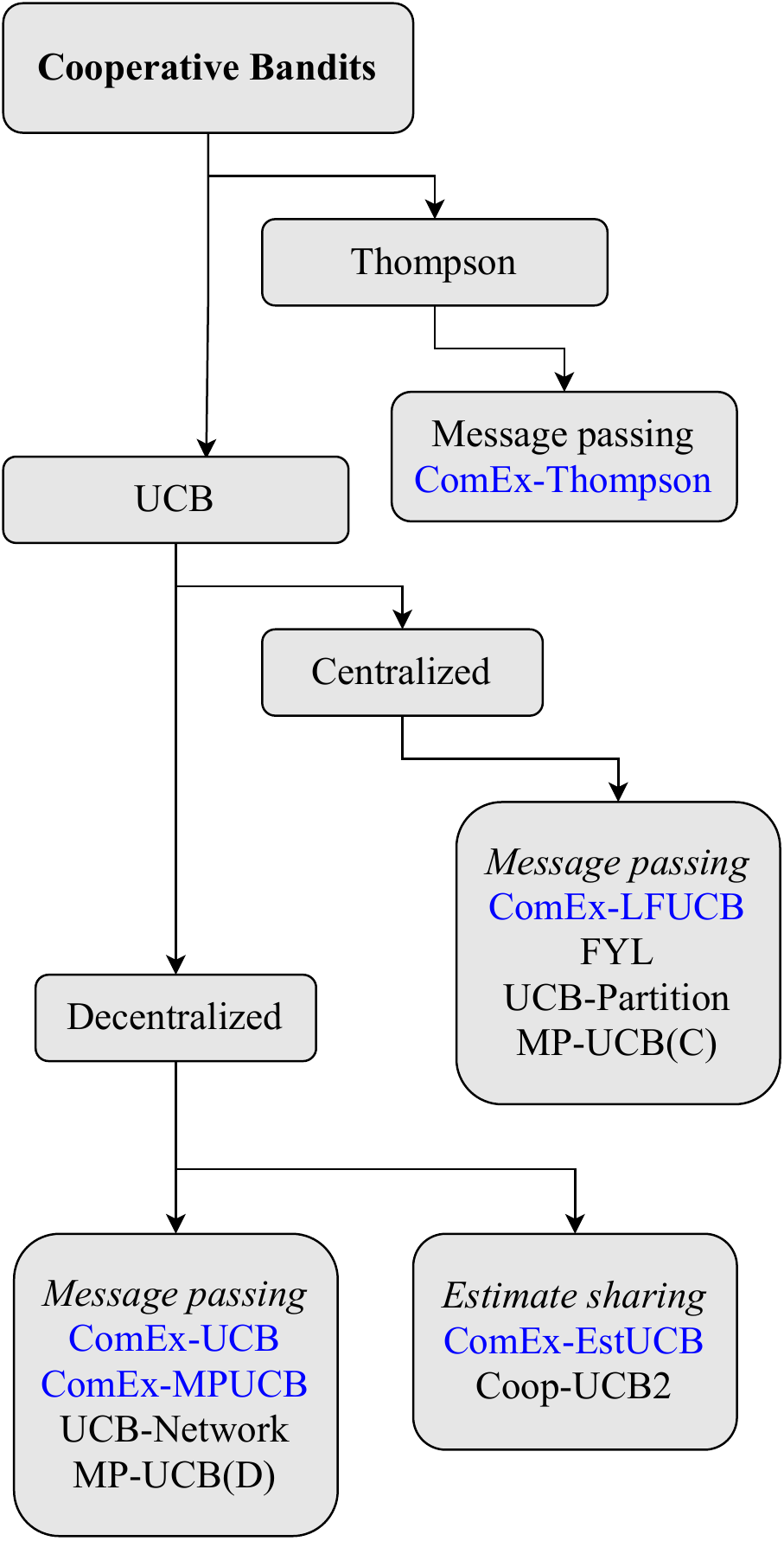}
      \captionof{figure}{A summary of our proposed algorithms and existing state-of-the-art algorithms for different cooperative bandit frameworks.}\label{fig:summary}
    
    % \Huge
    % \centering
    % \resizebox{0.99\linewidth}{!}{
    % \begin{tikzpicture}[node distance=2cm][H]
    %     \node (sampling) [startstop] {\textbf{Cooperative Bandits}};
    %     \node (ucb) [startstop, below of=sampling, xshift=-5cm, yshift=-10cm] {UCB};
    %     \node (thompson) [startstop, below of=sampling, xshift=4cm, yshift-10cm] {Thompson};
    %     \node (decentral) [startstop, below of=ucb, xshift=-3cm, yshift=0cm] {Decentralized};
    %     \node (central) [startstop, below of=ucb, xshift=3cm, yshift=0cm] {Centralized};
    %     \node (message) [startstop, below of=decentral, xshift=-2cm, yshift=-1cm] {\textit{Message passing} \bl{ComEx-UCB ComEx-MPUCB} UCB-Network MP-UCB(D)};
    %     \node (estimate) [startstop, below of=decentral, xshift=4cm, yshift=-1cm] {\textit{Estimate sharing} \bl{ComEx-EstUCB} Coop-UCB2};
    %     \node (messageC) [startstopwide, below of=central, xshift=4cm, yshift=-1cm] {\textit{Message passing} \bl{ComEx-LFUCB} \\FYL\\ UCB-Partition MP-UCB(C)};
    %     \node (messageT) [startstopwide, below of=thompson] {\textit{Message passing} \\\bl{ComEx-Thompson}};
        
    %     \draw [] (sampling) -- (ucb);
    %     \draw [] (sampling) -- (thompson);
    %     \draw [] (ucb) -- (decentral);
    %     \draw [] (ucb) -- (central);
    %     \draw [] (decentral) -- (message);
    %     \draw [] (decentral) -- (estimate);
    %     \draw [] (central) -- (messageC);
    %     \draw [] (thompson) -- (messageT);
    % \end{tikzpicture}}
    % \captionof{figure}{A summary of our proposed algorithms and existing state-of-the-art algorithms for different cooperative bandit frameworks.}\label{fig:summary}
    % \vspace{-10pt}
\end{minipage}

As motivated above, information about suboptimal arms is most valuable to agents seeking to maximize expected cumulative reward. This is because, with information from neighbors on a suboptimal arm, an agent can obtain a sufficiently accurate estimate of the expected reward of the suboptimal arm without having to pull the arm by itself. Agents typically pull suboptimal arms when they are exploring. Thus, to provide the means to maintain high performance with low communication costs, we propose a new communication protocol as follows in which agents only share information they obtained through exploring. 

\begin{definition} {\normalfont{(ComEx communication protocol)}}\label{def:comrule}
Each agent $i$ initiates sharing the message $m_t^{(i)}:=\Big\langle i, t, A_t^{(i)},  X_{t}^{(i)}\Big\rangle$ if $A_t^{(i)}\neq \argmax_{k\in [K]} \widehat{\mu}_k^{(i)}(t-1)$
\end{definition}

Note that according to the above communication protocol agents initiate sharing messages only about the rewards received from the arms that are instantaneously suboptimal i.e., arm that does not have the maximum estimated expected reward. This maximizes the chance of sharing information about suboptimal arms.

\noindent \textbf{Generalizability of ComEx.} As we will demonstrate in next few sections, our communication protocol is an easily implementable general communication protocol that can be incorporated in a wide range of cooperative bandit algorithms. We illustrate the generality by proposing novel algorithms incorporating ComEx in several cooperative bandit frameworks. Figure \ref{fig:summary} provides a summary of our algorithms and state-of-the-art algorithms in several benchmark cooperative bandit frameworks.

\section{Decentralized Cooperative Bandits}
\label{sec:dec}
In this section we propose novel algorithms for decentralized cooperative bandits.
\subsection{Decentralized instantaneous reward sharing UCB}
We present our first algorithm ComEx-UCB by combining the above communication protocol with instantaneous reward sharing. Each agent follows a sampling rule that balances exploiting with exploring. We use a natural extension of Upper Confidence Bound (UCB) algorithm as a sampling rule. In UCB at each time step $t$ for each arm $k$ each agent $i$ constructs an upper confidence bound, i.e., the sum of its estimated expected reward (empirical average of the observed rewards) and the uncertainty associated with the estimate $
C_k^{(i)}(t):=\sigma\sqrt{\frac{2(\xi+1)\log t}{N_k^{(i)}(t)}}$ where $\xi>1$, and pull the arm with highest bound. If the pulled arm is instantaneously suboptimal, the agent sends a message $m_t^{(i)}:=\Big\langle A_t^{(i)},X_t^{(i)}\Big\rangle$ to its neighbors (see Definition \ref{def:comrule}). Note that under this communication rule agents do not share concatenated messages. Thus  passing information about time step and agent id is redundant. Pseudo code for ComEx-UCB is given in Appendix \ref{sec:algUCB}.

\begin{theorem}\label{thm:regretB}{{\normalfont{(Group regret of ComEx-UCB)}}} Consider a group of $N$ agents following ComEx-UCB  while sharing instantaneous rewards over a general communication graph $G.$ Then for any $\xi\geq1.1$ expected cumulative group regret satisfies:
\begin{align*}
 \expe \left[R(T)\right] \leq\sum_{k=2}^K & \frac{8(\xi+1)\sigma}{\Delta_k}\Bar{\chi}(G)\log T 
 + \sum_{k=2}^K\Delta_k g(4N,d^{(i)})
\end{align*}
\end{theorem}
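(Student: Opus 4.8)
The plan is to bound the expected number of suboptimal arm pulls $\expe[n_k^{(i)}(t)]$ for each agent and arm, following the standard UCB analysis but accounting for the fact that each agent now has access to $N_k^{(i)}(t)$ observations (its own pulls plus rewards received from neighbors) rather than just $n_k^{(i)}(t)$. The key quantity is the expected group regret $\expe[R(T)] = \sum_{i=1}^N\sum_{k=2}^K \Delta_k \expe[n_k^{(i)}(t)]$, so I first reduce everything to controlling how often a suboptimal arm $k$ is pulled. A pull of arm $k$ at time $t$ requires its confidence bound to exceed that of the optimal arm, which (as in the classical Auer et al.\ argument) can happen only if one of three events holds: the empirical mean of arm $1$ underestimates $\mu_1$ beyond its confidence radius, the empirical mean of arm $k$ overestimates $\mu_k$ beyond its radius, or the confidence radius $C_k^{(i)}(t)$ is still so large that $\mu_1 < \mu_k + 2C_k^{(i)}(t)$, i.e.\ $N_k^{(i)}(t)$ is below the threshold $\frac{8\sigma^2(\xi+1)\log T}{\Delta_k^2}$.

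First I would control the two deviation events using sub-Gaussian concentration (Hoeffding/Chernoff), summing a union bound over the $N_k^{(i)}(t)$ possible values, which yields the usual $\sum_t t^{-2(\xi+1)}$-type tail; for $\xi \geq 1.1$ this converges and contributes only a constant per agent per arm, feeding into the $g(4N, d^{(i)})$ additive term. The genuinely cooperative part is the third event: I need to translate the ``$N_k^{(i)}(t)$ is small'' threshold on the \emph{observation} count back into a bound on the number of \emph{pulls} $n_k^{(i)}(t)$ that the group collectively makes. Here the clique covering number $\Bar\chi(G)$ enters: because an agent observes every suboptimal pull made by any neighbor (ComEx shares exactly the instantaneously-suboptimal pulls, which is where the coupling to the sampling rule matters), one can partition the agents into $\Bar\chi(G)$ cliques and argue that within each clique the observation counts are shared, so the total number of pulls needed to drive every agent's $N_k^{(i)}(t)$ above the threshold scales like $\Bar\chi(G)\cdot \frac{8\sigma^2(\xi+1)\log T}{\Delta_k^2}$ rather than $N$ times that. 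Multiplying by $\Delta_k$ and summing over $k$ produces the leading $\sum_{k=2}^K \frac{8(\xi+1)\sigma}{\Delta_k}\Bar\chi(G)\log T$ term.

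The main obstacle I anticipate is making the third step rigorous, specifically the bookkeeping that connects ComEx's selective (explore-only) sharing to the observation counts $N_k^{(i)}(t)$. Because a message is sent only when $A_t^{(i)} \neq \argmax_k \widehat\mu_k^{(i)}(t-1)$, I must verify that every pull of a truly suboptimal arm $k\neq 1$ that occurs while the agent has not yet accumulated enough observations is in fact \emph{shared}: an agent pulling arm $k$ under UCB does so precisely because arm $k$ has the highest confidence bound, which need not coincide with having the highest \emph{estimated mean}, so I must show that in the regret-relevant regime (before concentration kicks in) the instantaneously-suboptimal condition is satisfied, ensuring the reward is communicated and the clique-sharing argument applies. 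The delicate point is that the event defining ComEx uses $\widehat\mu$ while the sampling uses the confidence bound $C_k^{(i)}$; I expect to handle this by showing that the pulls missed by ComEx (those where the pulled arm also happens to have the maximal estimated mean) are themselves rare under the same concentration bounds, so they only inflate the constant $g(4N,d^{(i)})$ term. Finally I would assemble the per-agent bounds, sum over $i\in[N]$ using the clique cover to avoid an extra factor of $N$ in the leading term, and collect all the constant-order contributions into $g(4N, d^{(i)})$ to obtain the stated inequality.
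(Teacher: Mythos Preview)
Your proposal is correct and follows essentially the same route as the paper: the clique-cover decomposition, the three-event UCB split, and—crucially—the observation that the ``missed'' pulls (suboptimal arm $k$ pulled while also being the empirical argmax, hence not shared by ComEx) are themselves controlled by the same tail probabilities, contributing only to the additive constant. The one point you state a bit loosely is the concentration step: because the communication indicators $\indicate{(i,j)\in E_\tau}$ are random (adapted to $\mathcal F_{\tau-1}$), the paper uses a supermartingale/peeling argument over geometric buckets of $N_k^{(i)}(t)$ rather than a plain Hoeffding-plus-union-bound, but your ``union bound over the $N_k^{(i)}(t)$ possible values'' is the right instinct and leads to the same place.
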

\begin{proofsketch} We follow an approach similar to the standard UCB analysis \cite{auer2002finite,dubey2020cooperative} with a few key modifications. We partition the communication graph into a set of non overlapping cliques and analyze the regret of each clique and take the summation over cliques to obtain the regret of the group. When agents are using full communication group regret can be given as the summation of a $\log T$ term that scales with the clique covering number $\Bar{\chi}(G)$ and a term, which is independent of $T.$ The second term depends on the summation of tail probabilities of arms, i.e., $\P\left(\Big | \widehat{\mu}_k^{(i)}(t)-\mu_k\Big |\geq C_k^{(i)}(t)\right).$ For full communication a similar result can be found in \citep{dubey2020cooperative}. Note that full communication is a deterministic communication protocol and ComEx-UCB is a stochastic communication protocol that depends on the decision making process. Two major technical challenges in proving the regret bound for ComEx-UCB are 1.) deriving a tail probability bound for the case in which the communication between agents are stochastic and 2.) bounding the additional regret incurred by not sharing information when pulling the arm with highest estimated average reward, i.e., $A_t^{(i)}= \argmax_{k\in [K]} \widehat{\mu}_k^{(i)}(t-1)$. We overcome the first challenge by noticing that communication random variables $\indicate{(i,j)\in E_{t}}, \forall i,j, t$ are previsible, i.e., measurable with respect to the sigma algebra generated by information obtained up to time $t-1.$ We address the second challenge by proving that the number of times agents do not share information about any suboptimal arm $k$ can be bounded by tail probabilities of arm $k$ and the optimal arm.
% This is because whenever agents are not sharing information about a suboptimal arm $k$ the estimated average reward of arm $k$ should be greater than  the estimated reward of optimal arm. The probability of this happening exponentially decrease as agents pull the suboptimal arm. 
A complete proof of Theorem \ref{thm:regretB} is given in Appendix \ref{sec:regretB}.
\end{proofsketch}

\begin{remark}
By replacing ComEx with full communication in ComEx-UCB algorithm agents obtain an expected cumulative group regret of $ \expe \left[R(T)\right]= O\left(K\Bar{\chi}(G)\log T+KN\right)$ (Appendix H
% \ref{sec:regfullcom}
). Thus from Theorem \ref{thm:regretB} we see that ComEx obtains the same order of performance as full communication.
\end{remark}

Recall that expected communication cost under full communication is $\Theta(T).$ Now we prove that expected communication cost under ComEx is logarithmic in time. 
% Recall that under ComEx agents share information obtained from pulling instantaneously suboptimal arms. 
In ComEx-UCB algorithm agents are only sending single messages (not concatenated). Thus expected group communication cost at time step $t$ can be given as $
\expe\left[L(t)\right]=\sum_{i=1}^N\sum_{\tau=1}^T \P\left(A_{\tau}^{(i)}\neq \argmax_{k\in [K]}\widehat{\mu}_{k}^{(i)}(\tau-1)\right).
$

\begin{theorem}{{\normalfont{(Communication cost of ComEx-UCB)}}}\label{thm:comcostB} Consider a group of $N$ agents following ComEx-UCB while sharing instantaneous rewards over a general communication graph $G.$ Then for any $\xi\geq1.1$ expected group communication cost satisfies:
\begin{align*}
\expe \left[L(T)\right]\leq & 8\sigma(\xi+1)\left[\frac{N}{\Bar{\Delta}^2}+\sum_{k=2}^K\frac{\Bar{\chi}(G)}{\Delta_k^2}\right]\log T +Kg\left(7N, d^{(i)}\right)
\end{align*}
\end{theorem}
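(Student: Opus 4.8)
The plan is to start from the expression for the expected group communication cost recorded just before the statement, namely $\expe[L(T)] = \sum_{i=1}^N\sum_{\tau=1}^T \P\!\left(A_\tau^{(i)}\neq \argmax_{k\in[K]}\widehat{\mu}_k^{(i)}(\tau-1)\right)$, and to split each message‑generating event according to which arm is pulled. Writing $\hat{k}_\tau^{(i)} := \argmax_{k\in[K]}\widehat{\mu}_k^{(i)}(\tau-1)$ for the instantaneously best arm, a message is sent at time $\tau$ exactly when $A_\tau^{(i)}\neq \hat{k}_\tau^{(i)}$. First I would record the pointwise inclusion
$$\indicate{A_\tau^{(i)}\neq \hat{k}_\tau^{(i)}} \le \sum_{k=2}^K \indicate{A_\tau^{(i)}=k} + \indicate{A_\tau^{(i)}=1,\ \hat{k}_\tau^{(i)}\neq 1},$$
so that after summing over $\tau$ and taking expectations the cost splits into a ``suboptimal‑pull'' term $\sum_{k=2}^K\expe[n_k^{(i)}(T)]$ and an ``optimal‑arm‑looks‑bad'' term $\sum_{\tau}\P(A_\tau^{(i)}=1,\hat{k}_\tau^{(i)}\neq 1)$.

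The first term is essentially free, since it is already controlled inside the proof of Theorem \ref{thm:regretB}. Dividing the per‑arm pull bound established there by $\Delta_k$ gives $\sum_{i=1}^N\expe[n_k^{(i)}(T)]\le \frac{8(\xi+1)\sigma^2}{\Delta_k^2}\Bar{\chi}(G)\log T + g(4N,d^{(i)})$ for each $k\ge 2$; summing over $k$ reproduces the $\sum_{k=2}^K \frac{\Bar{\chi}(G)}{\Delta_k^2}\log T$ contribution and throws the $T$‑independent remainder into the additive $g$‑term. Both the clique‑covering factor $\Bar{\chi}(G)$ and the stochastic‑communication tail bound it relies on (via previsibility of the indicators $\indicate{(i,j)\in E_\tau}$) are inherited directly from Theorem \ref{thm:regretB}, so no new work is needed here.

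The heart of the argument, and where I expect the main obstacle, is the second term. The idea is to condition on the clean event $\mathcal{G}_\tau^{(i)} = \{|\widehat{\mu}_k^{(i)}(\tau-1)-\mu_k|\le C_k^{(i)}(\tau-1)\ \forall k\}$. On $\mathcal{G}_\tau^{(i)}$, if agent $i$ pulls arm $1$ (so arm $1$ has the largest UCB index) yet $\widehat{\mu}_{k}^{(i)}(\tau-1)\ge \widehat{\mu}_1^{(i)}(\tau-1)$ for some $k\ge 2$, then comparing UCB indices forces $C_1^{(i)}(\tau-1)\ge C_k^{(i)}(\tau-1)$, while the clean‑event inequalities $\widehat{\mu}_1\ge \mu_1-C_1$ and $\widehat{\mu}_k\le\mu_k+C_k$ force $C_1^{(i)}(\tau-1)+C_k^{(i)}(\tau-1)\ge \Delta_k\ge \Bar{\Delta}$. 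Combining the two gives $C_1^{(i)}(\tau-1)\ge \Bar{\Delta}/2$, i.e.\ $N_1^{(i)}(\tau-1)\le M:=\frac{8\sigma^2(\xi+1)\log T}{\Bar{\Delta}^2}$. Thus on the clean event a message of this second type can only be generated while the optimal arm has been observed at most $M$ times, and the minimal gap $\Bar{\Delta}$ enters precisely because the worst case is the arm closest to optimal.

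The final step is a pathwise counting argument: since every pull of arm $1$ by agent $i$ increments $N_1^{(i)}$ by at least one, the number of time steps at which $A_\tau^{(i)}=1$ and $N_1^{(i)}(\tau-1)\le M$ is at most $M+1$, deterministically. Summing over the $N$ agents yields the first bracketed $\frac{N}{\Bar{\Delta}^2}\log T$ contribution of the stated bound; the single factor $N$, rather than $\Bar{\chi}(G)$, reflects that this per‑agent count does not benefit from the clique structure. What remains is the complementary unclean event $\sum_\tau\P\big((\mathcal{G}_\tau^{(i)})^c\big)$, which is bounded by the same sum of arm tail probabilities $\P\!\left(|\widehat{\mu}_k^{(i)}(\tau)-\mu_k|\ge C_k^{(i)}(\tau)\right)$ used in Theorem \ref{thm:regretB}; this sum is $T$‑independent and, together with the leftover constants from the first term, is absorbed into $Kg(7N,d^{(i)})$. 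The genuinely new difficulty specific to this theorem is the clean‑event reduction of ``optimal arm pulled but not empirically best'' to a small observation count followed by the counting bound; the underlying concentration must again use the previsible‑filtration versions because ComEx communication is data‑dependent, and the extra bad‑event mass this introduces is what pushes the additive constant from the $4N$ of Theorem \ref{thm:regretB} to $7N$ here.
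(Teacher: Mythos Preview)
Your proposal is correct and follows essentially the same route as the paper: the same decomposition into suboptimal pulls (bounded via Theorem~\ref{thm:regretB}) plus the event $\{A_\tau^{(i)}=1,\ \hat{k}_\tau^{(i)}\neq 1\}$, and the same reduction of the latter to the constraint $N_1^{(i)}(\tau-1)\le \frac{8\sigma^2(\xi+1)}{\bar{\Delta}^2}\log T$ followed by a per-agent count plus tail probabilities. The only cosmetic difference is that you condition on a global clean event over all arms, whereas the paper conditions first on the arm-$1$ tail event and then, after the threshold time $\tau_1^{(i)}$, derives the tail event for the single random arm $k_t^*=\argmax_{k\neq 1}\widehat{\mu}_k^{(i)}(t)$; both routes produce $O(K)$ tail sums absorbed into $Kg(7N,d^{(i)})$.
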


\begin{proofsketch}
Note that expected group communication cost is the sum of 1.) expected number of times agents pull any suboptimal arm when it is instantaneously suboptimal and 2.) expected number of times agents pull the optimal arm when it is instantaneously suboptimal. We note that the first term can be directly bounded by the expected number of times agents pull suboptimal arms. We prove that the second term can be bounded logarithmically in time. A detailed proof of Theorem \ref{thm:comcostB} is given in Appendix \ref{sec:comcostB}.   
\end{proofsketch}

%%%%%%%%%%%%%%%%

\subsection{Decentralized message passing UCB}
We propose ComEx-MPUCB an improved version of ComEx-UCB by incorporating a message passing method \cite{suomela2013survey,bar2019individual,dubey2020cooperative} that allows agents to share the messages they initiated with agents who are within a distance of $\gamma.$ We call $\gamma$ \textit{communication density parameter.} We consider that at  time $t$ each agent $i$ initiates a message $m_t^{(i)}:=\Big\langle i, t, A_t^{(i)},  X_{t}^{(i)}\Big\rangle$ according to ComEx given in Definition \ref{def:comrule} and sends the messages to its neighbors. Subsequently the agents who receive the message forward it to their neighbors. Messages received at time $t$ are forwarded to neighbors at time $t+1$ resulting that each hop adds a delay of 1 time step. Under this message passing method $\gamma$-hop neighbors receive the message after a delay of $\gamma$ time steps. Agents do not forward the messages that are older than $\gamma-1$ and discard the messages that are older than $\gamma.$ Note that for a connected graph maximum number of time step required to pass a message between any two agents equals to the diameter of the graph. Thus we choose $\gamma$ to be an integer constant which is at most diameter of the communication graph $G.$ The pseudo code for ComEx-MPUCB is given in Appendix  \ref{sec:algMPUCB}.

\begin{theorem}{{\normalfont{(Group regret of ComEx-MPUCB)}}}\label{thm:regretMP} Consider a group of $N$ agents following ComEx-MPUCB. Then for any $\xi\geq 1.1$ expected cumulative group regret satisfies: 
\begin{align*}
 \expe &\left[R(T)\right] \leq\sum_{k=2}^K\frac{8(\xi+1)\sigma}{\Delta_k}\Bar{\chi}(G_{\gamma})\log T +\sum_{k=2}^K\Delta_k\left[(N-\mathcal{X}(G_{\gamma}))(\gamma-1)+g\left(4N,d^{(i)}_{\gamma}\right)\right]
\end{align*}
\end{theorem}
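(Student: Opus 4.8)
The plan is to adapt the clique-decomposition argument behind Theorem~\ref{thm:regretB} to the message-passing setting, where the $\gamma$-hop forwarding rule makes the power graph $G_\gamma$ the effective communication graph. First I would fix a minimum clique cover of $G_\gamma$ into $\bar\chi(G_\gamma)$ non-overlapping cliques $C_1,\dots,C_{\bar\chi(G_\gamma)}$, and use linearity of expectation to write $\expe[R(T)] = \sum_{k=2}^K \Delta_k \sum_{C}\sum_{i\in C}\expe[n_k^{(i)}(T)]$. It then suffices to bound the expected number of suboptimal pulls within a single clique and sum over the cover.

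Within a clique $C$ of $G_\gamma$ every pair of agents is at distance at most $\gamma$ in $G$, so every message initiated inside $C$ reaches every other member of $C$, delayed by the pairwise $G$-distance (at most $\gamma$). I would therefore lower bound each agent's observation count by the delayed clique-wide pull count, $N_k^{(i)}(t) \ge \sum_{j\in C} n_k^{(j)}(t-\gamma)$ on the rounds where the relevant pulls were shared, and then run the standard UCB good-event decomposition: once the clique-wide observation count of a suboptimal arm $k$ exceeds the exploration threshold $\ell_k$ of order $\log T/\Delta_k^2$, no member of $C$ pulls $k$ again except on a low-probability tail event for the confidence bound $C_k^{(i)}(t)$. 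Summing the $\ell_k$ pulls over the $\bar\chi(G_\gamma)$ cliques and weighting by $\Delta_k$ produces the leading $\sum_{k}\frac{8(\xi+1)\sigma}{\Delta_k}\bar\chi(G_\gamma)\log T$ term.

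Next I would isolate the two sources of the $T$-independent remainder. The delay contributes the $(N-\mathcal{X}(G_\gamma))(\gamma-1)$ term: because each agent acts on information up to $\gamma$ steps stale, even after the clique-wide count crosses $\ell_k$ the non-representative members of a clique may each pull $k$ up to $\gamma-1$ further times before the threshold-crossing observations propagate, and summing $(|C|-1)(\gamma-1)$ over the cover (with $\mathcal{X}(G_\gamma)$ the number of cliques) gives $(N-\mathcal{X}(G_\gamma))(\gamma-1)$ extra pulls, each costing $\Delta_k$. The tail and ComEx corrections contribute the $g(4N,d^{(i)}_\gamma)$ term: I would reuse the two technical ingredients of Theorem~\ref{thm:regretB}, namely a tail bound on $\P(|\widehat{\mu}_k^{(i)}(t)-\mu_k|\ge C_k^{(i)}(t))$ obtained from the previsibility of the (now delayed) communication indicators $\indicate{(i,j)\in E_\tau}$, and the bound on the number of rounds in which a suboptimal arm is pulled while instantaneously optimal (so ComEx withholds its reward). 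The degree now enters as the $G_\gamma$-degree $d^{(i)}_\gamma$ because each agent effectively aggregates rewards from its entire $\gamma$-hop neighborhood.

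The main obstacle is the delay bookkeeping in the third step: I must control how the stale counts feed into the confidence bound $C_k^{(i)}(t)$ and confirm that the $\gamma$-step propagation window inflates the regret only additively, by $(\gamma-1)$ per non-representative agent, rather than interacting multiplicatively with the $\log T$ term. Concretely this means aligning the threshold-crossing event defined on the true clique-wide counts with the delayed counts visible to each agent and showing their discrepancy is contained within the $\gamma$-step window; establishing the tail bound under genuinely delayed, data-dependent communication — rather than the instantaneous sharing of Theorem~\ref{thm:regretB} — is the technically delicate part.
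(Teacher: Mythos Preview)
Your proposal is correct and matches the paper's approach: clique cover of $G_\gamma$, the UCB good-event decomposition with threshold $\eta_k$, the per-clique delay surcharge $(|\mathcal{C}|-1)(\gamma-1)$ summing to $(N-\bar\chi(G_\gamma))(\gamma-1)$, the tail bound of Lemma~\ref{lem:tailApp} with $d_\gamma^{(i)}$ in place of $d^{(i)}$, and the ComEx-withholding correction from Theorem~\ref{thm:regretB}. The one bookkeeping difference is that the paper absorbs the delay directly into the stopping time---defining $\tau_{k,\mathcal{C}}$ as the last $t$ at which the clique's \emph{initiated-message} count for arm $k$ is at most $\eta_k+(|\mathcal{C}|-1)(\gamma-1)$ and then asserting $N_k^{(i)}(t)>\eta_k$ for $t>\tau_{k,\mathcal{C}}$---which sidesteps the delayed-count alignment you flag as the main obstacle.
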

\begin{proofsketch} We see that regret under ComEx-MPUCB can be given as the summation of regret of ComEx-UCB when communication graph is $G_{\gamma}$ and the regret incurred by the delay in passing messages to agents who are not 1-hop neighbors. We prove that the expected regret due to delay is at most $(N-\Bar{\chi}(G_{\gamma}))(\gamma-1).$ A detailed proof is provided in Appendix  \ref{sec:regretMP}.
\end{proofsketch}

\begin{remark}
Similar to ComEx-UCB by replacing ComEx with full communication in ComEx-MPUCB algorithm agents obtain an expected cumulative group regret of $ \expe \left[R(T)\right]= O\left(K\Bar{\chi}(G_{\gamma})\log T+KN\right)$ (Appendix H
% \ref{sec:regfullcom}
). Thus from Theorem \ref{thm:regretMP} we see that ComEx obtains the same order of performance as full communication.
\end{remark}

Now we proceed to prove that expected group communication cost under ComEx-MPUCB is logarithmic in time.  

\begin{theorem}{{\normalfont{(Communication cost of ComEx-MPUCB)}}} Consider a group of $N$ agents following ComEx-MPUCB with communication density parameter $\gamma.$ Then for any $\xi\geq 1.1$ expected group communication cost satisfies: \label{thm:comcostMP}
\vspace{-5pt}
\begin{align*}
\expe \left[L(T)\right] \leq \left[8(\xi+1)\sigma\left[\frac{N}{\Bar{\Delta}^2}+\sum_{k=2}^K\frac{\Bar{\chi}(G_{\gamma})}{\Delta_k^2}\right]\log T +K\left[(N-\Bar{\chi}(G_{\gamma})(\gamma-1)\right]\right]\sum_{i=1}^Nd_{\gamma-1}^{(i)^{+}}
\\
+K\sum_{i=1}^Nd_{\gamma-1}^{(i)^{+}}\cdot g\left(7N,d^{(i)}_{\gamma}\right)
\end{align*}
\end{theorem}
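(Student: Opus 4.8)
The plan is to separate the communication cost into the number of messages each agent \emph{initiates} and the number of times each initiated message is \emph{rebroadcast} as it propagates. By Definition \ref{def:comrule}, agent $i$ initiates a message at time $t$ precisely when it pulls an instantaneously suboptimal arm; let $M_i$ count these events over the horizon. Because each hop adds a delay of one step, agents forward a message only while its age is at most $\gamma-1$, and the time-stamp in $m_t^{(i)}$ lets every agent forward a given message at most once, a single message initiated by $i$ is broadcast exactly once by $i$ and once by each agent at shortest-path distance $1,\dots,\gamma-1$ from $i$. Hence it accounts for at most $d_{\gamma-1}^{(i)^{+}}$ broadcasts, and summing over all initiated messages and invoking the definition of $\expe[L(T)]$ gives $\expe[L(T)]\le \sum_{i=1}^N d_{\gamma-1}^{(i)^{+}}\,\expe[M_i]$.

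Next I would bound $\expe[M_i]$ exactly as in the proof of Theorem \ref{thm:comcostB}, but on the effective observation graph $G_\gamma$: since message passing lets each agent observe the exploration rewards of all neighbors within distance $\gamma$, the sampling process is that of ComEx-UCB run on $G_\gamma$, up to the one-step-per-hop delay. The initiation count then splits into (i) pulls of a genuinely suboptimal arm while it is instantaneously suboptimal, bounded by the suboptimal-pull counts and therefore scaling like $\bigl(\Bar{\chi}(G_\gamma)/\Delta_k^2\bigr)\log T$; (ii) pulls of the optimal arm while it merely \emph{appears} instantaneously suboptimal, bounded logarithmically through the tail probabilities $\P\bigl(|\widehat{\mu}_1^{(i)}(t)-\mu_1|\ge C_1^{(i)}(t)\bigr)$ and contributing the $\bigl(N/\Bar{\Delta}^2\bigr)\log T$ term; (iii) a delay correction of order $(N-\Bar{\chi}(G_\gamma))(\gamma-1)$, arising exactly as in the regret analysis of Theorem \ref{thm:regretMP}; and (iv) a horizon-independent residual $g(7N,d_\gamma^{(i)})$ from the remaining tail-probability sum. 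As in Theorem \ref{thm:regretB}, these tail bounds must be taken over the \emph{stochastic} communication pattern, which is legitimate because the indicators $\indicate{(i,j)\in E_t}$ are previsible with respect to the history up to time $t-1$.

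Finally I would substitute this estimate into $\expe[L(T)]\le\sum_{i}d_{\gamma-1}^{(i)^{+}}\expe[M_i]$. Bounding $\expe[M_i]$ by the global logarithmic and delay terms $8(\xi+1)\sigma\bigl[\tfrac{N}{\Bar{\Delta}^2}+\sum_{k=2}^K\tfrac{\Bar{\chi}(G_\gamma)}{\Delta_k^2}\bigr]\log T + K(N-\Bar{\chi}(G_\gamma))(\gamma-1)$ together with the per-agent residual $K g(7N,d_\gamma^{(i)})$, the global terms factor out of the sum as a common bracket multiplied by $\sum_i d_{\gamma-1}^{(i)^{+}}$, while the residual remains inside as $K\sum_i d_{\gamma-1}^{(i)^{+}}g(7N,d_\gamma^{(i)})$, which is exactly the claimed inequality. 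I expect the main obstacle to be step two: importing the ComEx-UCB initiation analysis onto the power graph $G_\gamma$ while simultaneously charging the one-step-per-hop delay, so that the penalty $(N-\Bar{\chi}(G_\gamma))(\gamma-1)$ enters the \emph{initiation} count with the correct constant and is not double-counted against the forwarding factor $d_{\gamma-1}^{(i)^{+}}$; the previsibility-based tail bound for term (ii) under stochastic communication is the other delicate point.
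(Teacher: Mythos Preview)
Your proposal is correct and follows essentially the same route as the paper: factor the cost as the forwarding multiplier $d_{\gamma-1}^{(i)^{+}}$ times the expected number of initiations $\expe[M_i]$, then bound $\expe[M_i]$ by splitting into suboptimal-arm pulls (controlled via the regret decomposition of Lemma~\ref{lem:RegDecompMP} on $G_\gamma$, which carries the delay penalty $(N-\Bar{\chi}(G_\gamma))(\gamma-1)$) and optimal-arm pulls that merely look suboptimal (controlled via the previsible tail bound of Lemma~\ref{lem:tailApp} with $d_\gamma^{(i)}$ in place of $d^{(i)}$), exactly as in the proof of Theorem~\ref{thm:comcostB}. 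The only cosmetic difference is that the paper first sums the initiation probabilities over all agents before applying the forwarding factor, whereas you keep the per-agent $\expe[M_i]$ and then crudely upper-bound it by the global quantity; both lead to the same final expression.
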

\vspace{-5pt}
\begin{proofsketch} Note that under ComEx-MPUCB agents send concatenated messages to their neighbors. Recall that agents do not forward the messages that are older than $\gamma-1$. Thus each message initiated by agent $i$ is subsequently forwarded by all agents who are within distance of $\gamma-1$ in graph $G$. Thus we have $
 \expe\left[L(t)\right]\leq\sum_{i=1}^N\!d_{\gamma-1}^{(i)^{+}}\!\sum_{\tau=1}^t \P\left(A_{\tau}^{(i)}\neq \argmax_{k\in [K]}\widehat{\mu}_{k}^{(i)}(\tau-1)\right).   
$ A detailed proof can be found in Appendix  \ref{sec:comcostMP}.
\end{proofsketch}

\section{Centralized Cooperative Bandits}
\label{sec:cen}
We propose ComEx-LFUCB by combining ComeEx communication protocol with a leader-follower method \cite{kolla2018collaborative,landgren2018social,dubey2020cooperative,wang2020optimal}. ComEx-LFUCB provides better performance compared to its decentralized counter part ComEx-MPUCB. Let $V_{\gamma}^{\prime}$ be the set of vertices in minimal dominating set of graph $G_{\gamma}.$ We consider each agent $i\in V_{\gamma}^{\prime}$ to be a leader and all the other agents to be followers. Note that every follower has at least one leader as a neighbor. We consider that each leader uses ComEx-MPUCB and each follower copies the last action observed from its leader. For each follower $j$ a leader $i$ is assigned such that $d(i,j)=\min_{i^{\prime}}d(i^{\prime},j)$ where $d(i,j)$ is the distance between agent $i$ and agent $j$ in graph $G.$ Let $\mathcal{N}_{\gamma}^{i}$ be the set of follower of leader $i.$ We consider that each leader sends a message containing the id of the arm it pulls and whether it is instantaneously suboptimal, i.e. for $i\in V^{\prime }_{\gamma}$ at time step $t$, $m^{(i)}_t:=\Big \langle i,t, A_t^{(i)},\indicate{A_i^{(t)}\neq \argmax_{k\in [K]}\widehat{\mu}_k^{(i)}(t-1)}\Big\rangle $ to its neighbors and they subsequently forward it to their neighbors. Note that at time step $t$ follower $j\in\mathcal{N}_{\gamma}^{(i)}$ pulls the arm $A_{t-d(i,j)}^{(i)}.$ Each follower pass a message containing information about the reward and arm id if it pulls an arm that is specified as instantaneously suboptimal by its leader. Thus the followers communicate according to ComEx by initiating a message as follows. Follower $j\in \mathcal{N}^{(i)}_{\gamma}$ initiates a message $m_t^{(j)}:=\Big \langle j,t, A_t^{(j)},X_t^{(j)}\Big \rangle $ if $A_{t-d(i,j)}^{(i)}\neq  \argmax_{k\in [K]}\widehat{\mu}_k^{(i)}(t-d(i,j)-1).$ Accordingly under full communication followers share their rewards and arm pulls at every time step. Pseudo code for ComEx-LFUCB is provided in Appendix \ref{sec:algLFUCB}.

\begin{theorem}\label{thm:regretLFUCB}{{\normalfont{(Group regret of ComEx-LFUCB)}}} Consider a group of $N$ agents following ComEx-LFUCB with communication density parameter $\gamma.$ Then for any $\xi\geq 1.1$ expected cumulative group regret satisfies: 
\begin{align*}
 &\expe \left[R(T)\right] \leq\sum_{k=2}^K\frac{8(\xi+1)\sigma}{\Delta_k}\Bar{\gamma}(G_{\gamma})\log T+\sum_{k=2}^K\Delta_k\left[(N-\Bar{\gamma}(G_{\gamma}))(3\gamma-1)+\Bar{\gamma}(G_{\gamma})\cdot g(4N,d^{(i)}_{\gamma})\right]
\end{align*}
\end{theorem}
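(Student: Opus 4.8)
The plan is to extend the clique-based decomposition behind Theorem~\ref{thm:regretB} and Theorem~\ref{thm:regretMP} to the leader--follower teams induced by the minimal dominating set $V_{\gamma}^{\prime}$. First I would partition the $N$ agents into $\Bar{\gamma}(G_{\gamma})=|V_{\gamma}^{\prime}|$ teams, where team $i$ consists of leader $i\in V_{\gamma}^{\prime}$ together with its followers $\mathcal{N}_{\gamma}^{(i)}$; because $V_{\gamma}^{\prime}$ dominates $G_{\gamma}$, every follower $j$ satisfies $d(i,j)\leq\gamma$. Writing the expected group regret as $\sum_{i\in V_{\gamma}^{\prime}}\sum_{k=2}^K\Delta_k\,\expe\!\left[n_k^{(i)}(T)+\sum_{j\in\mathcal{N}_{\gamma}^{(i)}}n_k^{(j)}(T)\right]$ regroups all agents by team and reduces the problem to bounding the total suboptimal pulls of each team separately.

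Within any team only the leader runs the UCB sampling rule, while every follower relays its exploring rewards back to the leader through ComEx. Consequently the leader's observation count $N_k^{(i)}(t)$ aggregates the pulls of arm $k$ made by the entire team (up to communication delay), exactly the property that lets a clique be charged a single learner's regret in the proof of Theorem~\ref{thm:regretB}. I would therefore treat each team as one instance of ComEx-MPUCB with the leader as the clique representative: reusing the stochastic-communication tail bound from that proof, which hinges on the previsibility of the indicators $\indicate{(i,j)\in E_{t}}$, the leader ceases pulling arm $k$ once $N_k^{(i)}(t)\gtrsim \frac{8(\xi+1)\sigma}{\Delta_k^2}\log T$. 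Since $N_k^{(i)}(t)$ counts the whole team, the team's total suboptimal pulls of arm $k$ are bounded by $\frac{8(\xi+1)\sigma}{\Delta_k^2}\log T$ plus the $T$-independent summed-tail-probability constant $g(4N,d^{(i)}_{\gamma})$. Summing the per-team regret $\frac{8(\xi+1)\sigma}{\Delta_k}\log T+\Delta_k\,g(4N,d^{(i)}_{\gamma})$ over the $\Bar{\gamma}(G_{\gamma})$ teams produces the leading $\Bar{\gamma}(G_{\gamma})\log T$ term together with the $\Bar{\gamma}(G_{\gamma})\,g(4N,d^{(i)}_{\gamma})$ constant.

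It remains to charge the regret caused by delays, which is the only place the follower count enters. A follower never acts on fresh data: the leader's action reaches it after $d(i,j)\leq\gamma$ hops, and its own reward returns to the leader after another $d(i,j)\leq\gamma$ hops. On top of the $(\gamma-1)$ message-passing delay already accounted for in Theorem~\ref{thm:regretMP}, these two compounded $\gamma$-step lags mean each follower can keep pulling a suboptimal arm for at most $3\gamma-1$ extra steps beyond what the team-level count already charges. Arguing as in the delay bound of Theorem~\ref{thm:regretMP}, the total such surplus is at most $(N-\Bar{\gamma}(G_{\gamma}))(3\gamma-1)$ over the $N-\Bar{\gamma}(G_{\gamma})$ followers, contributing $\sum_{k=2}^K\Delta_k(N-\Bar{\gamma}(G_{\gamma}))(3\gamma-1)$. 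Adding the three pieces gives the stated bound.

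I expect the delay bookkeeping to be the main obstacle. The difficulty is to show that the staleness of the aggregated count $N_k^{(i)}(t)$ --- now lagging by a random, path-dependent amount up to $3\gamma$ --- inflates only the bounded additive penalty $(3\gamma-1)$ per follower and not the leading $\log T$ term, and that the previsibility argument for the tail bound survives when feedback arrives with this random delay. Separating the ``one clique per team'' accounting from the per-follower delay charge cleanly, so the leader--follower round trip is counted exactly once and not folded into the tail constant, is the most delicate part of the argument.
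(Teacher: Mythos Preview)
Your proposal is correct and follows essentially the same route as the paper's proof: the paper partitions the agents into leader--follower teams indexed by the minimal dominating set $V_{\gamma}^{\prime}$, proves a decomposition lemma (Lemma~\ref{lem:RegDecompLF}) in which each team contributes $\eta_k$ plus a tail-probability sum (weighted by $|\mathcal{N}_{\gamma}^{(i)}|+1$ since followers replay the leader's actions), and obtains the $(3\gamma-1)$ per-follower delay constant exactly as you describe --- a $(\gamma-1)$ buffer in the threshold $\tau_k^{(i)}$ plus a $2d(i,j)\leq 2\gamma$ boundary term from shifting the follower's pull indicators back to the leader's timeline. The remaining tail bound and its summation are then taken verbatim from Lemmas~\ref{lem:tailApp} and~\ref{lem:tailsum}, just as you anticipate.
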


\begin{proofsketch}
We follow a similar approach to the proof of Theorem \ref{thm:regretMP} with a few key modifications followed by the argument below. Note that number of suboptimal arm pulls by each $j\in \mathcal{N}^{(i)}_{\gamma}$ can be upper bounded using  suboptimal arm pulls by $i$ and message passing delay. Note that message passing delay can be upper bounded by $d(i,j).$ A detailed proof of Theorem \ref{thm:regretLFUCB} is given in Appendix  \ref{sec:regretLFUCB}.
\end{proofsketch}

\begin{remark}
Similar to ComEx-MPUCB by replacing ComEx with full communication in ComEx-LFUCB algorithm, i.e. allowing followers to share information about arm pulls at every time step, agents obtain an expected cumulative group regret of $ \expe \left[R(T)\right]= O\left(K\Bar{\gamma}(G_{\gamma})\log T+KN\right)$ (Appendix H
% \ref{sec:regfullcom}
). Thus from Theorem \ref{thm:regretLFUCB} we see that ComEx obtains the same order of performance as full communication.
\end{remark}

Now we provide theoretical guarantees that expected group communication cost under ComEx-LFUCB is logarithmically bounded in time. 

\begin{theorem}{{\normalfont{(Communication cost of ComEx-LFUCB)}}} Consider a group of $N$ agents following ComEx-LFUCB with communication density parameter $\gamma.$ Then for any $\xi\geq 1.1$ expected group communication cost satisfies:
\vspace{-10pt}
\label{thm:comcostLF}
\begin{align*}
\expe \left[L(T)\right]\leq 
\left[8(\xi+1)\sigma\left[\frac{N}{\Bar{\Delta}^2}+\sum_{k=2}^K\frac{\Bar{\gamma}(G_{\gamma})}{\Delta_k^2}\right]\log T +K\left[(N-3\Bar{\gamma}(G_{\gamma})(\gamma-1)\right]\right]\sum_{i=1}^Nd_{\gamma-1}^{(i)^{+}}
\\
+K\sum_{i=1}^Nd_{\gamma-1}^{(i)^{+}}\cdot\Bar{\gamma}(G_{\gamma}\cdot g\left(7N,d^{(i)}_{\gamma}\right)
\end{align*}
\end{theorem}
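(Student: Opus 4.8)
The plan is to mirror the structure of the proofs of Theorem \ref{thm:comcostB} and Theorem \ref{thm:comcostMP}, adapting the counting argument to the leader--follower structure. The expected communication cost is again the expected number of messages that actually traverse the network, which equals the number of \emph{initiated} messages weighted by how many times each is forwarded. As in the proof of Theorem \ref{thm:comcostMP}, a message initiated by agent $i$ is forwarded by every agent within distance $\gamma-1$ of $i$ in $G$, so each initiation by $i$ contributes a multiplicative factor $d_{\gamma-1}^{(i)^{+}}$; this produces the common prefactor $\sum_{i=1}^N d_{\gamma-1}^{(i)^{+}}$ appearing in the bound. It therefore suffices to bound the expected number of initiated messages, and I will split these into messages initiated by leaders $i \in V_{\gamma}^{\prime}$ and messages initiated by followers $j \in \mathcal{N}_{\gamma}^{(i)}$.

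For the leaders, each $i \in V_{\gamma}^{\prime}$ runs ComEx-MPUCB, so it initiates a message exactly when $A_t^{(i)} \neq \argmax_{k} \widehat{\mu}_k^{(i)}(t-1)$, and the expected number of such events is $\sum_{\tau=1}^T \P(A_{\tau}^{(i)} \neq \argmax_k \widehat{\mu}_k^{(i)}(\tau-1))$. I would bound this exactly as in the proof of Theorem \ref{thm:comcostB}: decompose the event into (a) the leader pulling a genuinely suboptimal arm that is instantaneously suboptimal, whose count is controlled by the leaders' suboptimal-pull bounds already established in the regret analysis of Theorem \ref{thm:regretLFUCB}, and (b) the leader pulling the optimal arm while it is instantaneously suboptimal, which I bound logarithmically in $T$ through the tail-probability estimate $\P(|\widehat{\mu}_k^{(i)}(t) - \mu_k| \geq C_k^{(i)}(t))$, reusing the previsibility argument (the indicators $\indicate{(i,j)\in E_t}$ are measurable with respect to the past) from Theorem \ref{thm:regretB} so that the stochastic communication does not break the concentration bound. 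Because $|V_{\gamma}^{\prime}| = \Bar{\gamma}(G_{\gamma})$, summing over leaders replaces the clique-covering number $\Bar{\chi}(G_{\gamma})$ of the MPUCB bound with the dominating number $\Bar{\gamma}(G_{\gamma})$, giving the $\sum_{k=2}^K \Bar{\gamma}(G_{\gamma})/\Delta_k^2$ term in the $\log T$ coefficient.

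For the followers, the crucial simplification is that the follower communication rule is tied to the \emph{leader's} decision: follower $j \in \mathcal{N}_{\gamma}^{(i)}$ initiates a message precisely when $A_{t-d(i,j)}^{(i)} \neq \argmax_k \widehat{\mu}_k^{(i)}(t-d(i,j)-1)$. Hence the number of messages initiated by $j$ equals, up to the $d(i,j) \leq \gamma$ boundary terms, the number of instantaneously-suboptimal pulls of its leader $i$. Summing the follower count over $\mathcal{N}_{\gamma}^{(i)}$ and over all leaders, each dominating cluster contributes its own size times the leader's instantaneously-suboptimal-pull count; since the clusters cover all $N$ agents, this accounts for the scaling with $N$ in the $N/\Bar{\Delta}^2$ term. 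The additive delay corrections $(N - 3\Bar{\gamma}(G_{\gamma}))(\gamma-1)$ and the constant $g(7N, d^{(i)}_{\gamma})$ term arise exactly as in the regret proof of Theorem \ref{thm:regretLFUCB}: each follower inherits a delay of at most $\gamma$ from copying the leader, plus the leader's own $\gamma$-hop forwarding delay, which is where the factor $3$ relative to $\gamma$ enters. Multiplying the combined leader-plus-follower initiation count by the forwarding prefactor $\sum_{i=1}^N d_{\gamma-1}^{(i)^{+}}$ and collecting terms yields the stated bound.

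The main obstacle is the follower accounting in the third step: because followers do not run their own UCB but mirror a delayed leader action, I must argue carefully that a follower's initiation event is a deterministic function of the leader's earlier estimate, translate the resulting count into the leader's instantaneously-suboptimal-pull count, and control the boundary and delay terms without double-counting across the $\gamma$ initial time steps and across overlapping neighborhoods. Ensuring these delay corrections aggregate cleanly to $(N - 3\Bar{\gamma}(G_{\gamma}))(\gamma-1)$ --- rather than a looser $N\gamma$-type bound --- together with reusing the previsibility and tail-probability machinery for the stochastic communication in the leader step, is the technical heart of the argument.
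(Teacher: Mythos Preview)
Your proposal is correct and follows essentially the same approach as the paper. The paper's appendix proof is in fact more mechanical than yours: it simply repeats the template of Theorem~\ref{thm:comcostMP} verbatim (decompose initiations into the suboptimal-arm case bounded by $\sum_i\sum_k \expe[n_k^{(i)}(T)]$ and the optimal-arm case bounded by $8\sigma(\xi+1)/\Bar{\Delta}^2\cdot\log T$ plus tail sums), multiplies by the forwarding factor $d_{\gamma-1}^{(i)^+}$, and then substitutes Lemma~\ref{lem:RegDecompLF} in place of Lemma~\ref{lem:RegDecompMP} to convert $\Bar{\chi}(G_\gamma)$ into $\Bar{\gamma}(G_\gamma)$ and pick up the $(3\gamma-1)$ delay term. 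Your explicit leader/follower split matches the main-text proof sketch and gives the same bound; the paper simply absorbs the follower accounting into the uniform sum over all $N$ agents and lets Lemma~\ref{lem:RegDecompLF} do that work implicitly.
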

\vspace{-10pt}
\begin{proofsketch}
Note that the expected number of times a leader initiates a message can be upper bounded by twice the expected number of its suboptimal arm pulls. Further the number of times each follower $j\in\mathcal{N}_{\gamma}^{(i)}$ initiates a message can be bounded by the number of instantaneously suboptimal arms pulled by the leader $i$. Similar to ComEx-MPUCB in ComEx-LFUCB agents send concatenated messages to their neighbors. Thus each message initiated by any agent $i$ is subsequently forwarded by all agents who are within distance of $\gamma-1$ in graph $G$. A detailed proof can be found in Appendix  \ref{sec:comcostLFUCB}.
\end{proofsketch}

\begin{remark}
Algorithm and results provided in this Section can be specialized to centralized cooperative bandits with instantaneous reward sharing by substituting $\gamma=1.$
\end{remark}

\begin{remark}{\normalfont (Upper bound on communication cost)} Although smaller $\Delta_k$ values lead to larger upper bounds for each algorithm (with communication density $\gamma$) presented in Section \ref{sec:dec} and \ref{sec:cen} communication cost is upper bounded by  $T\sum_{i=1}^Nd^{(i)^{+}}_{\gamma-1}.$

\end{remark}

%%%%%%%%%%%%%%%%%%%%%%%%%

\section{Additional Algorithms}
\label{sec:add_alg}
We propose two more algorithms, thus extending ComEx to additional cooperative bandit frameworks. We leave providing theoretical guarantees for these as future work.

\noindent \textbf{Estimate sharing.}
We propose ComEx-EstUCB by combining ComEx with estimate sharing \cite{landgren2016distributedCDC,martinez2019decentralized,landgren2020distributed}, which obtains better performance than instantaneous reward sharing. In estimate sharing, for each arm $k,$ agents maintain estimated sum of rewards and estimated number of pulls from the arm. At each time step, agents average their estimates with their neighbors according to a consensus protocol and update the estimates by incorporating the information of arm pull at that time step. We refer readers to \citealt{landgren2020distributed} for more details. In ComEx-EstUCB agents only average estimates of instantaneously sub optimal arms. Pseudo code for ComEx-EstUCB is given in Appendix \ref{sec:algEstUCB}. 
% We provide simulation results illustrating that ComEx-EstUCB obtains the same order of performance a full communication for a significantly smaller communication cost, which is logarithmic in time. 

% We propose ComEx-EstUCB where agents share their estimate of the arm pull only when they are exploring (see Definition \ref{def:comrule}). In cooperative bandits sharing estimates of arms can provide better performance than sharing rewards. We combine ComEx with estimate sharing communication methods \cite{landgren2016distributedCDC,martinez2019decentralized,landgren2020distributed} that use running consensus communication rules. In estimate sharing, for each arm $k,$ agents maintain estimated sum of rewards and estimated number of pulls from the arm. At each time step agents average their estimates with their neighbors and update the estimates by incorporating the information of arm pull at that time step. We refer readers to \citealt{landgren2020distributed} for more details. We consider a variant of this as follows. At each time step each agent can only share the estimates of the arm it pulls. Under full communication each agent is sharing the estimate of its arm pull at every time step. 

\noindent \textbf{Thompson sampling.} 
We extend our communication protocol to cooperative Thmpson bandits as follows. We propose ComEx-MPThompson, a new algorithm by replacing UCB sampling rule with Thompson sampling rule in ComEx-MPUCB as follows. We combine ComEx with message passing and a natural extension of Thompson sampling to cooperative bandits. Here we provide a brief description of cooperative Thompson sampling rule and refer readers to \cite{lalitha2020bayesian} for more details. Algorithm is initialized by each agent assigning a suitable prior distribution to each arm. Typically Gaussian priors are used for Gaussian reward distributions and Beta priors are used for Bernoulli distributions. At each time step each agent constructs a posterior distribution for each arm using prior distribution and available reward information at that time step. Each agent draws a sample from posterior distributions associated with each arm and pull the arm with highest sampled value. Agents initialize messages according to ComEx and pass the messages to neighbors using a similar protocol given in ComEx-MPUCB. Pseudo code for ComEx-MPThompson is given in Appendix \ref{sec:algMPTHmp}. 
% We provide simulation results illustrating that ComEx-MPThompson obtains same order of performance as full communication for a communication cost that is only logarithmic in time.

%%%%%%%%%%%%%%%%%%%%%%%%%

\section{Experimental Results}
\label{sec:experiments}
In this section we provide numerical simulations illustrating our results and validating our theoretical claims. All the experiments were run on the first author's personal laptop. We show that ComEx obtains same order of performance, i.e., same order of group regret, as full communication for a significantly smaller communication cost than full communication. We also demonstrate that our algorithms outperform state-of-the-art algorithms in several bandit frameworks. 

\begin{figure*}[h]
    \centering
    \includegraphics[width=0.98\textwidth]{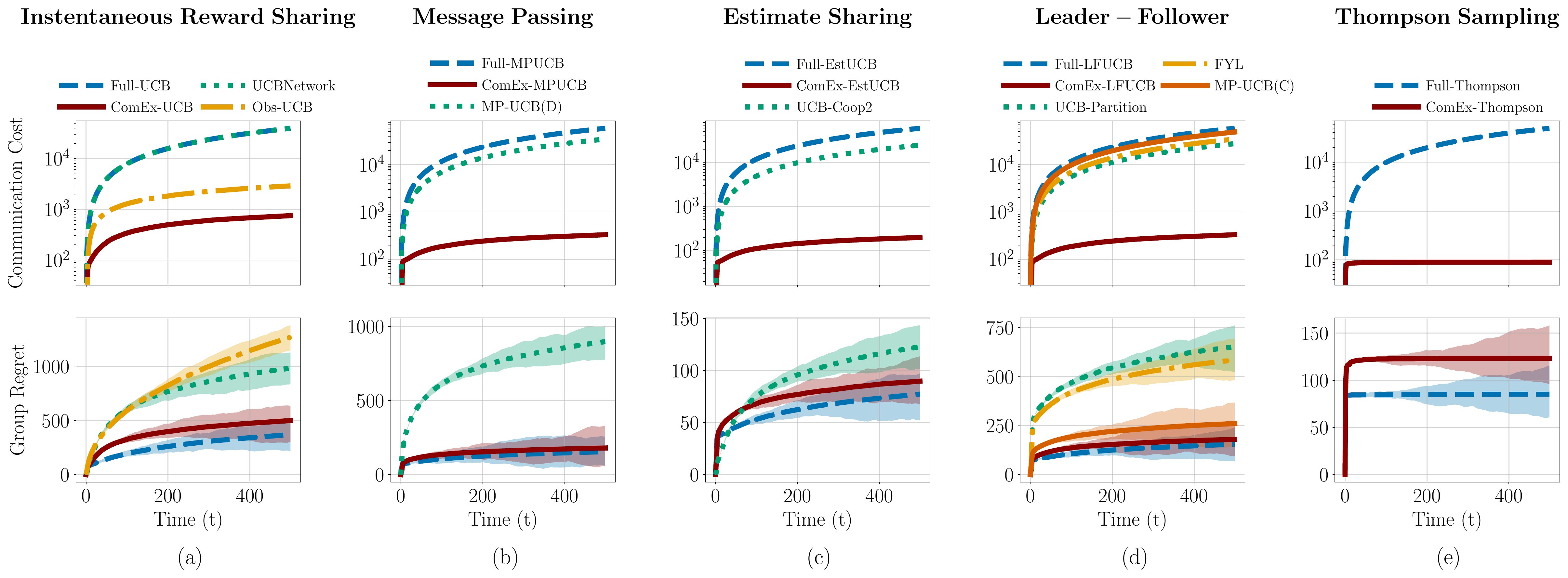}
    % \vspace{-1.0em}
    \caption{\small{ A comparison of expected cumulative group regret and communication cost of our algorithms and existing state-of-the-art algorithms in several benchmark cooperative bandit frameworks.}}
    \label{fig:performance}
\end{figure*}
\vspace{1 em}
\noindent \textbf{Experimental setup.} We provide simulation results for following cooperative bandit frameworks 1) decentralized instantaneous reward sharing, 2) decentralized message passing, 3) decentralized estimate sharing, 4) centralized leader-follower, and 5) Thompson sampling. We compare performance of our algorithms (ComEx-UCB, ComEx-MPUCB, ComEx-EstUCB, ComEx-LFUCB and ComEx-Thompson) with their corresponding full communication algorithms (Full-UCB, Full-MPUCB, Full-EstUCB, Full-LFUCB and Full-Thompson) and state-of-the art algorithms in each framework. For all simulations presented in this section we consider 10 arms $(K=10),$ 100 agents $(N=100)$ and 500 time steps $(T=500).$ Communication graph between agents is considered to be a Erdos Renyi random graph with edge probability $0.7.$ Results are averaged over 100 Monte Carlo simulations. Additional experimental results for different graph structures and parameters ($\xi,\gamma$) are provided in Appendix  \ref{sec:simulation}.

\paragraph{Hyper parameters}
We use tuning parameter $\xi=1.01$ for UCB based algorithms. For results provided in Figure \ref{fig:performance}(b)-\ref{fig:performance}(e) we use communication density parameter $\gamma=5.$ None of the competing algorithms, except  UCB-Coop2, MP-UCB(D) and MP-UCB(C) have hyperparameters. We tuned parameters of UCB-Coop2 to get best results of that algorithm and used $\kappa=0.02,\gamma^{\prime}=1.001, \eta=0.001$ (Equations 9 and 15 in \cite{landgren2020distributed}. Here we $\gamma^{\prime}$ to avoid confusing with communication parameter $\gamma$ used in this paper) for final results. Decreasing $\gamma^{\prime}$ below 1.001 and $\eta$ below 0.001 did not offer any significant improvement. MP-UCB(D) and MP-UCB(C) are originally proposed in \cite{dubey2020cooperative} for heavy-tailed distributions, and we adapt them to sub-Gaussian distributions as directed by the authors. For MP-UCB(D) and MP-UCB(C) we considered the same $C_k^{(i)}(t)$ as in our algorithms. Thus we used the same $\xi=1.01$ value for a fair comparison. 

For results provided in Figures \ref{fig:performance}(a) and \ref{fig:performance}(d), we consider reward distributions to be bounded $[0,1].$ We consider triangle distributions with mod 1 for the optimal arm and mod 0 for all sub-optimal arms. In simulations provided in Figures \ref{fig:performance}(b), \ref{fig:performance}(c) and \ref{fig:performance}(e) we consider Gaussian reward distributions. Expected reward for the optimal arm  is $\mu_{1}=11$ and for all sub-optimal arms $k>1$ is $\mu_{k}=10$. We let variance associated with all arms be $\sigma_k^2 =1,\forall k$. We use the notation Obs-UCB to denote the algorithm presented in \cite{madhushani2020dynamic}.

\textbf{ComEx obtains same order of performance as full communication.}
Our results in Figure~\ref{fig:performance} illustrate that ComEx obtains the same order of performance, i.e., same order of group regret, as full communication. From Comparing Figures \ref{fig:performance}(a) and \ref{fig:performance}(b) we see that performance difference between full communication and ComEx decrease when communication density $\gamma$ increase. Comparing Figure \ref{fig:performance}(e) with others we see that performance difference between full communication and ComEx is smaller when agents are using UCB based sampling rules and Thompson based sampling rules. All results illustrate that our algorithms consistently out preforms state-of-the-art algorithms in all five benchmark cooperative bandit frameworks.

\textbf{ComEx only incurs a logarithmic communication cost.} Our simulation results also illustrate that ComEx only incurs a logarithmic communication cost. In Figure \ref{fig:performance}(a) we observe that Obs-UCB also incurs a logarithmic cost. However ComEx-UCB incurs a smaller cost than Obs-UCB while suffering a smaller group regret. Further, results illustrate that ComEx enabled algorithms incurs a significantly smaller communication cost compared to existing state-of-the-art algorithms.

\paragraph{Additional discussion.} State-of-the-art algorithm for leader-follower setting is DPE2 in \cite{wang2020optimal}. DPE2 uses a phased communication protocol, where during the leader selection phase, which lasts at least $2D$ rounds, where $D$ is the diameter of the graph, agents do not pull arms. Thus, this phase accumulates an expected group regret of at least $2DN\mu_1.$ In our experimental setup, this alone exceeds the regret accumulated by our algorithms during the entire time horizon. So a meaningful comparison cannot be provided without modifying DPE2 to allow pulling arms during the leader selection phase.

%%%%%%%%%%%%%%%%%%%%%%%%%

\section{Discussion}
\label{sec:discussion}
\paragraph{Limitations.}
Main limitation of this work is that all the theoretical claims are provided using upper bounds. Obtaining lower bounds for cooperative bandits that communicate over general graphs are difficult due to the complex nature of the probability distribution associated with the sampling process of agents. This is an active area of research. We provide a discussion in Appendix B for the optimality of our regret bounds by providing a lower bound when $G$ is a complete graph. 

\paragraph{Future extensions.} We plan to analyse regret and communication cost for the algorithms provided in Section \ref{sec:add_alg}. Our intuition can be extended to the collision setting by not allowing agents to share information about the first $N$ instantaneously optimal arms. In the collision setting when more than one agent pulls the same arm at the same time step a collision occurs. This causes agents to either split the reward or completely loose the reward at that time step. Another extension will be proposing similar algorithms for linear bandits and adversarial bandits. 

%%%%%%%%%%%%%%%%%%%%%%%%%

\section{Conclusion}
\label{sec:cocnl}
We proposed ComEx, a general and effective communication protocol which obtains same order of performance as full communication but incurs significantly smaller communication cost than the latter. Next, we proposed novel algorithms for several benchmark bandit frameworks by incorporating ComEx protocol. We provided theoretical guarantees followed by experimental results illustrating the \textit{state-of-the-art} performance of our algorithms. 

\clearpage

%%%%%%%%%%%%%%%%%%%%%%%%%%%%%%
\bibliography{comEx}
\bibliographystyle{apsr}
\clearpage

%%%%%%%%%%%%%%%%%%%%%%%%%%%%%%%%%%%%%%%%%%%

\clearpage
%%%%%%%%%%%%%%%%%%%%%%%%%%%%%%

\appendix

% \section*{\centering\LARGE{Supplementary Material for NeurIPS Submission: \\{Say What Matters Most:\\ Effective Communication in Cooperative Bandits}}}
% \vspace{2em}
% \setcounter{equation}{0}
% \renewcommand{\theequation}{S.\arabic{equation}}

\section{Proof of Theorem \ref{thm:regretB}} \label{sec:regretB}
We begin the proof of Theorem \ref{thm:regretB} by proving a few useful lemmas.

\begin{lemma} {\bf{(Restatement of results from \citep{auer2002finite})}}\label{lem:tailrestate}
Let $\eta_k=\left(\frac{8(\xi+1)\sigma^2}{\Delta^2_k}\right)\log T.$ For any suboptimal arm $k$ and $\forall i,t$ we have
\begin{align*}
 \P\left(A^{(i)}_{t+1} = k, N_k^{(i)}(t)> \eta_k\right)
    \leq 
    \P \left(\widehat{\mu}_{1}^{(i)}(t)\leq \mu_{1}-C_{1}^{(i)}(t)\right)+\P \left(\widehat{\mu}_{k}^{(i)}(t)\geq \mu_{k}+C_{k}^{(i)}(t)\right)
\end{align*}
\end{lemma}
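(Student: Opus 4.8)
The plan is to reproduce the classical UCB decomposition of \citet{auer2002finite}, being careful that in the cooperative setting $N_k^{(i)}(t)$ counts \emph{all} observations (agent $i$'s own pulls plus rewards received from neighbors) rather than own pulls alone; this does not affect the argument, since the sampling rule depends only on the indices $\widehat{\mu}_k^{(i)}(t)+C_k^{(i)}(t)$. First I would invoke the UCB selection rule: if $A_{t+1}^{(i)}=k$, then arm $k$ attains the largest index at agent $i$ at time $t$, so in particular
$$\widehat{\mu}_k^{(i)}(t)+C_k^{(i)}(t)\ \geq\ \widehat{\mu}_1^{(i)}(t)+C_1^{(i)}(t).$$

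Next I would show that on the event $\{A_{t+1}^{(i)}=k,\ N_k^{(i)}(t)>\eta_k\}$ at least one of the two events on the right-hand side of the claim must hold. Suppose, toward a contradiction, that neither holds, i.e. $\widehat{\mu}_1^{(i)}(t)>\mu_1-C_1^{(i)}(t)$ and $\widehat{\mu}_k^{(i)}(t)<\mu_k+C_k^{(i)}(t)$. Substituting these two strict inequalities into the index inequality above yields $\mu_1<\mu_k+2C_k^{(i)}(t)$, hence $C_k^{(i)}(t)>\Delta_k/2$. Recalling $C_k^{(i)}(t)=\sigma\sqrt{2(\xi+1)\log t / N_k^{(i)}(t)}$ and using $\log t\leq\log T$, this rearranges to $N_k^{(i)}(t)< 8(\xi+1)\sigma^2\log T/\Delta_k^2=\eta_k$, contradicting $N_k^{(i)}(t)>\eta_k$.

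Therefore the event $\{A_{t+1}^{(i)}=k,\ N_k^{(i)}(t)>\eta_k\}$ is contained in $\{\widehat{\mu}_1^{(i)}(t)\leq\mu_1-C_1^{(i)}(t)\}\cup\{\widehat{\mu}_k^{(i)}(t)\geq\mu_k+C_k^{(i)}(t)\}$, and a union bound on the two probabilities gives the stated inequality. There is no genuine obstacle here — this is explicitly a restatement — but the one point requiring care is the replacement $\log t\leq\log T$, which lets the \emph{time-dependent} confidence width $C_k^{(i)}(t)$ be controlled by the \emph{fixed} threshold $\eta_k$; I would also flag that $N_k^{(i)}(t)$ plays a dual role, appearing both as the denominator inside $C_k^{(i)}(t)$ and as the quantity compared against $\eta_k$, which is exactly what makes the contradiction go through.
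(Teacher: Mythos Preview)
Your proposal is correct and follows essentially the same approach as the paper: both use the UCB selection rule to obtain the index inequality, decompose the event $\{A_{t+1}^{(i)}=k\}$ into the union $\{\mu_1<\mu_k+2C_k^{(i)}(t)\}\cup\{\widehat{\mu}_1^{(i)}(t)\leq\mu_1-C_1^{(i)}(t)\}\cup\{\widehat{\mu}_k^{(i)}(t)\geq\mu_k+C_k^{(i)}(t)\}$, and then argue that $N_k^{(i)}(t)>\eta_k$ eliminates the first event, followed by a union bound. The only cosmetic difference is that you phrase the elimination step as a contradiction, whereas the paper phrases it as a direct set inclusion.
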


\begin{proof}
Note that for any $k>1$ we have
\begin{align*}
    \left \{A^{(i)}_{t+1} = k\right \}
    &\subset 
    \left \{Q^{(i)}_k(t) \geq Q^{(i)}_{1}(t)\right \}
    \\
    &\subset 
    \left\{\left\{\mu_{1}<\mu_{k}+2C_{k}^{(i)}(t)\right\}\cup\left\{\widehat{\mu}_{1}^{(i)}(t)\leq \mu_{1}-C_{1}^{(i)}(t)\right\}\cup \left\{\widehat{\mu}_{k}^{(i)}(t)\geq \mu_{k}+C_{k}^{(i)}(t)\right\}\right\}.
\end{align*}
Let $\eta_k=\left(\frac{8(\xi+1)\sigma^2}{\Delta^2_k}\right)\log T$. Since $N_k^{(i)}(t)> \eta_k$ the event $\left\{\mu_{1}<\mu_{k}+2C_{k}^{(i)}(t)\right\}$ does not occur. Thus we have
\begin{align*}
    \P\left(A^{(i)}_{t+1} = k, N_k^{(i)}(t)> \eta_k\right)
    \leq 
    \P \left(\widehat{\mu}_{1}^{(i)}(t)\leq \mu_{1}-C_{1}^{(i)}(t)\right)+\P \left(\widehat{\mu}_{k}^{(i)}(t)\geq \mu_{k}+C_{k}^{(i)}(t)\right)
\end{align*}
This concludes the proof of Lemma \ref{lem:tailrestate}.
\end{proof}

\begin{lemma}\label{lem:RegDecomp}
Let $\Bar{\chi}(G)$ is the clique covering number of graph $G.$ Let $\eta_k=\left(\frac{8(\xi+1)\sigma^2}{\Delta^2_k}\right)\log T.$ Then we have
\begin{align*}
    \sum_{i=1}^N\expe[n^{(i)}_k(T)]  \leq \Bar{\chi}(G) \eta_k+N+\sum_{i=1}^N\sum_{t=1}^{T-1}\left[\P \left(\widehat{\mu}_{1}^{(i)}(t)\leq \mu_{1}-C_{1}^{(i)}(t)\right)+\P \left(\widehat{\mu}_{k}^{(i)}(t)\geq \mu_{k}+C_{k}^{(i)}(t)\right)\right]
\end{align*}
\end{lemma}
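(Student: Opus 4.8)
The plan is to carry out the standard UCB regret decomposition, adapted to the cooperative setting through a clique cover of $G$. First I would write, for a fixed suboptimal arm $k>1$,
\[
n_k^{(i)}(T) = \sum_{t=0}^{T-1}\indicate{A_{t+1}^{(i)}=k},
\]
so that the very first pull (the term $t=0$, at which $N_k^{(i)}(0)=0\le\eta_k$) is included, and then split each summand according to whether the observation count has crossed the threshold:
\[
\indicate{A_{t+1}^{(i)}=k} = \indicate{A_{t+1}^{(i)}=k,\,N_k^{(i)}(t)>\eta_k} + \indicate{A_{t+1}^{(i)}=k,\,N_k^{(i)}(t)\le\eta_k}.
\]
The large-count indicator is automatically zero for $t=0$, so that piece effectively runs over $t=1,\dots,T-1$, matching the index range on the right-hand side.

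For the large-count piece I would take expectations and apply Lemma \ref{lem:tailrestate} termwise: since $\expe[\indicate{A_{t+1}^{(i)}=k,\,N_k^{(i)}(t)>\eta_k}] = \P(A_{t+1}^{(i)}=k,\,N_k^{(i)}(t)>\eta_k)$, the lemma bounds each such term by $\P(\widehat{\mu}_1^{(i)}(t)\le\mu_1-C_1^{(i)}(t)) + \P(\widehat{\mu}_k^{(i)}(t)\ge\mu_k+C_k^{(i)}(t))$, and summing over $i$ and $t$ reproduces the tail sum verbatim.

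The small-count piece is where the clique cover enters. I would fix a minimum vertex clique cover $\mathcal{C}_1,\dots,\mathcal{C}_{\Bar{\chi}(G)}$ of $G$ and bound $\sum_{i}\sum_t\indicate{A_{t+1}^{(i)}=k,\,N_k^{(i)}(t)\le\eta_k}$ clique by clique. The counting observation is that all members of a clique are mutually adjacent, so a communicated pull of $k$ by one member raises the observation count $N_k^{(j)}$ of every member of that clique at the next step; ordering the pulls of $k$ within a fixed clique $\mathcal{C}_\ell$ in time, once about $\eta_k$ observations have accumulated every member has $N_k^{(j)}(t)>\eta_k$ and stops contributing, and the at most $|\mathcal{C}_\ell|$ pulls occurring in the final step give a per-clique bound of $\eta_k+|\mathcal{C}_\ell|$. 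Summing over the cover and using $\sum_\ell|\mathcal{C}_\ell|=N$ yields $\Bar{\chi}(G)\eta_k+N$; note that the overshoot term produces exactly the additive $N$ and simultaneously absorbs the initialization pulls, so no separate initialization term is needed.

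The main obstacle is making the clique-counting step legitimate under ComEx rather than full communication: a pull of the globally suboptimal arm $k$ is shared only when $k$ is instantaneously suboptimal for the puller, so an \emph{exploitative} pull with $A_{t+1}^{(i)}=k=\argmax_{k'}\widehat{\mu}_{k'}^{(i)}(t)$ raises only that agent's own count and not its neighbors'. I must therefore argue that such unshared pulls do not inflate the per-clique count beyond the overshoot; the cleanest route is to exploit $N_k^{(i)}(t)\ge n_k^{(i)}(t)$, so that a single agent can register at most $\eta_k$ exploitative pulls of $k$ in the small-count regime on its own account, and to keep the communication indicators $\indicate{(i,j)\in E_t}$ previsible (measurable with respect to the history up to $t-1$) so that the accumulation argument respects the filtration. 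Reconciling the shared and unshared pulls so that the bound remains $\Bar{\chi}(G)\eta_k+N$ rather than the looser per-agent bound $N\eta_k+N$ is the delicate point of the proof.
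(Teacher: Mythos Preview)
Your decomposition into large-count and small-count pieces and the clique-cover strategy are correct, and you have correctly located the difficulty: under ComEx an exploitative pull of the suboptimal arm $k$ (one with $A_{t+1}^{(i)}=k=\argmax_{k'}\widehat\mu_{k'}^{(i)}(t)$) is not broadcast, so it does not raise the clique-wide count. However, your proposed fix via $N_k^{(i)}(t)\ge n_k^{(i)}(t)$ does not close the gap: that inequality only tells you each agent individually makes at most $\eta_k$ pulls of $k$ in the small-count regime, which yields precisely the $N\eta_k+N$ bound you are trying to avoid. You do not give any mechanism for reconciling the shared and unshared pulls to recover $\Bar\chi(G)\eta_k+N$, and in fact a pure counting argument cannot do it.

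The paper resolves this differently. It defines the clique stopping time $\tau_{k,\mathcal C}$ using only the \emph{shared} pulls (those with $A_t^{(i)}=k$ and $A_t^{(i)}\neq\argmax_l\widehat\mu_l^{(i)}(t-1)$), so by construction those sum to at most $\eta_k$ per clique. The unshared exploitative pulls of $k$ in the pre-threshold window are not counted at all; instead, each such event is shown to be a concentration failure. Concretely, if $A_{t+1}^{(i)}=k$ and $k=\argmax_l\widehat\mu_l^{(i)}(t)$, then $\widehat\mu_k^{(i)}(t)\ge\widehat\mu_1^{(i)}(t)$, and combining this with the UCB selection rule forces either $\widehat\mu_1^{(i)}(t)\le\mu_1-C_1^{(i)}(t)$ or $\widehat\mu_k^{(i)}(t)\ge\mu_k+C_k^{(i)}(t)$. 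Hence the probability of each exploitative pull of $k$ is bounded by the \emph{same} two tail probabilities that already appear in the post-threshold sum via Lemma~\ref{lem:tailrestate}, and the pre- and post-threshold tail contributions merge into the single sum $\sum_{i}\sum_{t=1}^{T-1}[\cdots]$ on the right-hand side. That merging, not a counting trick, is what keeps the leading term at $\Bar\chi(G)\eta_k$.
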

\begin{proof}
Let $\EuScript{C}$ be a non overlapping clique covering of $G$. Note that for each suboptimal arm $k > 1$ we have
\begin{align}
    \sum_{i=1}^N\expe[n^{(i)}_k(T)]
    & = 
  \sum_{i=1}^N \sum_{t=1}^T\P\left(A^{(i)}_t = k\right)=\sum_{\mathcal{C}\in \EuScript{C}}\sum_{i\in \mathcal{C}}\sum_{t=1}^T \P\left(A^{(i)}_t = k\right) \label{eq:rearrange}
\end{align}

 Let $\tau_{k,\mathcal{C}}$ be the maximum time step such that the total number of pulls from arm $k$ shared by agents in the clique $\mathcal{C}$ is at most $\eta_k.$ This can be stated as 

$
\tau_{k,\mathcal{C}} := \max \left\{t\in [T] :  \sum_{i\in \mathcal{C}}\sum_{\tau=1}^t\indicate{A_{\tau}^{(i)}=k,A_{\tau}^{(i)}\neq \argmax_{l\in [K]}\widehat{\mu}_{l}^{(i)}(\tau-1)}\leq \eta_k\right\}
$. Then for all $i\in \mathcal{C}$ we have $
N_k^{(i)}(t)> \eta_k, \forall t> \tau_{k,\mathcal{C}}.$ 
We analyse the expected number of times all agents pull suboptimal arm $k$ as follows.
\begin{align}
\sum_{\mathcal{C}\in \EuScript{C}}\sum_{i\in \mathcal{C}}\sum_{t=1}^T \indicate{A^{(i)}_t = k}  &= \sum_{\mathcal{C}\in \EuScript{C}}\sum_{i\in \mathcal{C}}\sum_{t=1}^{\tau_{k,\mathcal{C}}} \indicate{A^{(i)}_t = k}
\\
& +  \sum_{\mathcal{C}\in \EuScript{C}}\sum_{i\in \mathcal{C}}\sum_{t>\tau_{k,\mathcal{C}}}^{T} \indicate{A^{(i)}_{t} = k, N_k^{(i)}(t-1) >\eta_k} \label{eq:regdec}
\end{align}
Taking the expectation of \eqref{eq:regdec} we have
\begin{align}
\sum_{\mathcal{C}\in \EuScript{C}}\sum_{i\in \mathcal{C}}\sum_{t=1}^T \P\left(A^{(i)}_t = k\right)  
& =   \sum_{\mathcal{C}\in \EuScript{C}}\sum_{i\in \mathcal{C}}\sum_{t=1}^{\tau_{k,\mathcal{C}}} \P\left(A^{(i)}_t = k\right)
\\
& +  \sum_{\mathcal{C}\in \EuScript{C}}\sum_{i\in \mathcal{C}}\sum_{t>\tau_{k,\mathcal{C}}}^T \P\left(A^{(i)}_t = k,N_k^{(i)}(t-1) >\eta_k\right)\label{eq:regDecCom}
\end{align}
Now we proceed to upper bound the first term of right hand side of \eqref{eq:regdec} as follows. Note that we have
\begin{align}
\sum_{i\in \mathcal{C}}\sum_{t=1}^{\tau_{k,\mathcal{C}}} \indicate{A^{(i)}_t = k} & =  \sum_{i\in \mathcal{C}}\sum_{t=1}^{\tau_{k,\mathcal{C}}}\indicate{A_{t}^{(i)}=k,A_{t}^{(i)}\neq \argmax_{l\in [K]}\widehat{\mu}_{l}^{(i)}(t-1)}
\nonumber \\
& +\sum_{i\in \mathcal{C}}\sum_{t=1}^{\tau_{k,\mathcal{C}}}\indicate{A_{t}^{(i)}=k,A_{t}^{(i)}= \argmax_{l\in [K]}\widehat{\mu}_{l}^{(i)}(t-1)}
\nonumber \\
&\leq \eta_k + \sum_{i\in \mathcal{C}}\sum_{t=1}^{\tau_{k,\mathcal{C}}}\indicate{A_{t}^{(i)}=k,A_{t}^{(i)}= \argmax_{l\in [K]}\widehat{\mu}_{l}^{(i)}(t-1)}\label{eq:noB}
\end{align}

Taking the expectation of (\ref{eq:noB}) we have
\begin{align}
\sum_{i\in \mathcal{C}}\sum_{t=1}^{\tau_{k,\mathcal{C}}} \P\left(A^{(i)}_t = k\right)\leq \eta_k+ \sum_{i\in \mathcal{C}}\sum_{t=1}^{\tau_{k,\mathcal{C}}}\P\left(A_{t}^{(i)}=k,A_{t}^{(i)}= \argmax_{l\in [K]}\widehat{\mu}_{l}^{(i)}(t-1)\right)\label{eq:subopt}
\end{align}

Now we proceed to upper bound last term of (\ref{eq:subopt}) as follows. Note that for any suboptimal arm $k$ we have,
\begin{align}
\P&\left(A_{t+1}^{(i)}=k,A_{t+1}^{(i)}= \argmax_{l\in [K]}\widehat{\mu}_{l}^{(i)}(t)\right) 
\nonumber\\\ 
&  \leq\P\left(\widehat{\mu}_k^{(i)}(t)+C_k^{(i)}(t)\geq \widehat{\mu}_1^{(i)}(t)+C_1^{(i)}(t), \widehat{\mu}_k^{(i)}(t)\geq \widehat{\mu}_1^{(i)}(t), \:\:
\widehat{\mu}_1^{(i)}(t)\leq \mu_{1}-C_1^{(i)}(t)\right)
\nonumber\\
& +\P\left(\widehat{\mu}_k^{(i)}(t)+C_k^{(i)}(t)\geq \widehat{\mu}_1^{(i)}(t)+C_1^{(i)}(t), \widehat{\mu}_k^{(i)}(t)\geq \widehat{\mu}_1^{(i)}(t), \:\:
\widehat{\mu}_1^{(i)}(t)> \mu_{1}-C_1^{(i)}(t)\right)
\nonumber\\\ 
&  \leq \P\left(\widehat{\mu}_1^{(i)}(t)\leq \mu_{1}-C_1^{(i)}(t)\right)
\\
& +\P\left(\widehat{\mu}_k^{(i)}(t)+C_k^{(i)}(t)\geq \widehat{\mu}_1^{(i)}(t)+C_1^{(i)}(t), \widehat{\mu}_k^{(i)}(t)\geq \widehat{\mu}_1^{(i)}(t), \:\:
\widehat{\mu}_1^{(i)}(t)> \mu_{1}-C_1^{(i)}(t)\right)\label{eq:condeq}
\end{align}

Now we proceed to upper bound the last term of \eqref{eq:condeq} as follows. Note that we have
\begin{align}
\P& \left(\widehat{\mu}_k^{(i)}(t)+C_k^{(i)}(t)\geq \widehat{\mu}_1^{(i)}(t)
+C_1^{(i)}(t), \widehat{\mu}_k^{(i)}(t)\geq \widehat{\mu}_1^{(i)}(t), \:\:
\widehat{\mu}_1^{(i)}(t)> \mu_{1}-C_1^{(i)}(t)\right)
\nonumber \\
& \leq\P\left(\widehat{\mu}_k^{(i)}(t)+C_k^{(i)}(t)>\mu_{1}-C_1^{(i)}(t)+C_k^{(i)}(t)\right)
\nonumber \\
& \leq \P\left(\widehat{\mu}_k^{(i)}(t)\geq \mu_k+C_k^{(i)}(t)\right).\label{eq:optsam}
\end{align}

From \eqref{eq:condeq} and \eqref{eq:optsam} we have
\begin{align}
\P\left(A_{t}^{(i)}=k,A_{t}^{(i)}= \argmax_{l\in [K]}\widehat{\mu}_{l}^{(i)}(t-1)\right)  \leq \P\left(\widehat{\mu}_1^{(i)}(t-1)\leq \mu_{1}-C_1^{(i)}(t-1)\right)
\\
+\P\left(\widehat{\mu}_k^{(i)}(t-1)\geq \mu_k+C_k^{(i)}(t-1)\right) \label{eq:addtail}
\end{align}

From \eqref{eq:regDecCom}, \eqref{eq:subopt} and \eqref{eq:addtail} we have
\begin{align}
\sum_{\mathcal{C}\in \EuScript{C}}\sum_{i\in \mathcal{C}}\sum_{t=1}^T \P\left(A^{(i)}_t = k\right) \leq \sum_{\mathcal{C}\in \EuScript{C}}\eta_k+  \sum_{\mathcal{C}\in \EuScript{C}}\sum_{i\in \mathcal{C}}\sum_{t>\tau_{k,\mathcal{C}}}^T \P\left(A^{(i)}_t = k,N_k^{(i)}(t-1) >\eta_k\right)
\nonumber\\
+ \sum_{\mathcal{C}\in \EuScript{C}}\sum_{i\in \mathcal{C}}\sum_{t=1}^{\tau_{k,\mathcal{C}}}\left[\P \left(\widehat{\mu}_{1}^{(i)}(t)\leq \mu_{1}-C_{1}^{(i)}(t)\right)+\P \left(\widehat{\mu}_{k}^{(i)}(t)\geq \mu_{k}+C_{k}^{(i)}(t)\right)\right]
\nonumber \\
\leq \Bar{\chi}(G)\eta_k+N+  \sum_{i=1}^N\sum_{t=1}^{\tau_{k,\mathcal{C}}}\left[\P \left(\widehat{\mu}_{1}^{(i)}(t)\leq \mu_{1}-C_{1}^{(i)}(t)\right)+\P \left(\widehat{\mu}_{k}^{(i)}(t)\geq \mu_{k}+C_{k}^{(i)}(t)\right)\right]
\nonumber\\
+ \sum_{\mathcal{C}\in \EuScript{C}}\sum_{i\in \mathcal{C}}\sum_{t>\tau_{k,\mathcal{C}}}^{T-1} \P\left(A^{(i)}_{t+1} = k,N_k^{(i)}(t) >\eta_k\right)
\label{eq:subsampl}
\end{align}

From (\ref{eq:rearrange}), (\ref{eq:subsampl}) and Lemma \ref{lem:tailrestate} we have
\begin{align*}
    \sum_{i=1}^N\expe[n^{(i)}_k(T)]  
    &\leq \Bar{\chi}(G) \eta_k+N
    \\
    &+\sum_{i=1}^N\sum_{t=1}^{T-1}\left[\P \left(\widehat{\mu}_{1}^{(i)}(t)\leq \mu_{1}-C_{1}^{(i)}(t)\right)+\P \left(\widehat{\mu}_{k}^{(i)}(t)\geq \mu_{k}+C_{k}^{(i)}(t)\right)\right]
\end{align*}
This concludes the proof of Lemma  \ref{lem:RegDecomp}.
\end{proof}

Now we proceed to bound the tail probabilities as follows.
\begin{lemma}{\bf{(Tail probability bound)}}\label{lem:tailApp}
Let $d^{(i)}$ be the degree of agent $i.$ For some $\sigma\geq\sigma_k$ and for any $\zeta>1$ 
\begin{align*}
    \P\left(\Big|\widehat{\mu}_k^{(i)}(t)-{\mu}_k\Big|\geq \sigma\sqrt{\frac{2(\xi+1)\log t}{N_k^{(i)}(t)}}\right)\leq \frac{1}{\log \zeta}\frac{\log \left((d^{(i)}+1)t\right)}{t^{(\xi+1)\left(1-\frac{(\zeta-1)^2}{16}\right)}}
\end{align*}
\end{lemma}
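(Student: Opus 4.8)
The plan is to write the centered, \emph{un}normalized estimation error as a martingale and then control it by a peeling (stratification) argument over the random number of observations $N_k^{(i)}(t)$. Set $S_\tau := \sum_{s=1}^\tau\sum_{j=1}^N (X_s^{(j)}-\mu_k)\,\indicate{A_s^{(j)}=k}\,\indicate{(i,j)\in E_s}$, so that $\widehat{\mu}_k^{(i)}(t)-\mu_k = S_t / N_k^{(i)}(t)$ and the event in the lemma is exactly $\{\,|S_t|\ge \sigma\sqrt{2(\xi+1)(\log t)\,N_k^{(i)}(t)}\,\}$. Let $\mathcal{F}_\tau$ be the filtration generated by all actions, rewards and communications through time $\tau$. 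The key structural fact (already flagged in the sketch of Theorem \ref{thm:regretB}) is that both the pull indicators $\indicate{A_s^{(j)}=k}$ and the communication indicators $\indicate{(i,j)\in E_s}$ are previsible, i.e.\ $\mathcal{F}_{s-1}$-measurable, since every action and every ComEx message decision at step $s$ is computed from information available at step $s-1$. Combining this with Assumption (A1) (rewards drawn at the same step are mutually independent) and (A2) (uniform sub-Gaussian proxy $\sigma^2$), the increment $\Delta S_s$ is, conditionally on $\mathcal{F}_{s-1}$, a sum of independent centered sub-Gaussians whose count equals the predictable quantity $m_s:=\sum_j \indicate{A_s^{(j)}=k}\indicate{(i,j)\in E_s}$. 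Hence $\expe[e^{\lambda\Delta S_s}\mid\mathcal{F}_{s-1}]\le e^{\lambda^2\sigma^2 m_s/2}$, which makes $M_\tau^{\lambda}:=\exp(\lambda S_\tau-\tfrac{\lambda^2\sigma^2}{2}N_k^{(i)}(\tau))$ a nonnegative supermartingale (with $M_0^\lambda=1$) for every $\lambda$.

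Next I would handle the fact that $N_k^{(i)}(t)$ is random, correlated with the rewards, and appears inside the deviation boundary. Since agent $i$ collects at most $d^{(i)}+1$ observations of arm $k$ per round (its own pull plus those of its $d^{(i)}$ neighbors), we have $1\le N_k^{(i)}(t)\le (d^{(i)}+1)t$, so it suffices to partition this range into geometric slices $N_k^{(i)}(t)\in[\zeta^{m-1},\zeta^m)$ for $m=1,\dots,M$ with $M=\lceil \log((d^{(i)}+1)t)/\log\zeta\rceil$, and union-bound over slices. Writing $c:=\sigma\sqrt{2(\xi+1)\log t}$, on slice $m$ I would replace the curved boundary $c\sqrt{N}$ by the chord joining its endpoints at $\zeta^{m-1}$ and $\zeta^m$; because $\sqrt{\cdot}$ is concave this chord $a_m+b_m N$ lies \emph{below} $c\sqrt{N}$ throughout the slice, so the one-sided event forces $S_t\ge a_m+b_m N_k^{(i)}(t)$, hence $\{\exists\tau\le t:\ S_\tau\ge a_m+b_m N_k^{(i)}(\tau)\}$. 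Applying Ville's maximal inequality to $M_\tau^\lambda$ with the optimal choice $\lambda=2b_m/\sigma^2$ turns this line-crossing probability into $\exp(-2a_m b_m/\sigma^2)$.

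The last step is the arithmetic that eliminates the slice index. A direct computation of the chord gives $b_m=c/\bigl(\sqrt{\zeta^{m-1}}(\sqrt{\zeta}+1)\bigr)$ and $a_m=c\sqrt{\zeta^{m-1}}\,\sqrt{\zeta}/(\sqrt{\zeta}+1)$, so the $\zeta^{m}$ factors cancel and $2a_m b_m/\sigma^2=4(\xi+1)(\log t)\,\sqrt{\zeta}/(\sqrt{\zeta}+1)^2$. Using the identity $4\sqrt\zeta/(\sqrt\zeta+1)^2=1-(\sqrt\zeta-1)^2/(\sqrt\zeta+1)^2$ together with the elementary inequality $(\sqrt\zeta-1)^2/(\sqrt\zeta+1)^2\le(\zeta-1)^2/16$ (valid for all $\zeta>1$, since after clearing denominators it reduces to $(\sqrt\zeta+1)^2\ge4$) yields a per-slice, per-tail bound of $t^{-(\xi+1)(1-(\zeta-1)^2/16)}$ that no longer depends on $m$. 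Summing over the two tails and the at most $\log((d^{(i)}+1)t)/\log\zeta$ slices then gives the claimed bound.

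The main obstacle is Steps~1--2 rather than the final computation: one cannot apply an off-the-shelf Hoeffding or sub-Gaussian tail bound, because the number of summands $N_k^{(i)}(t)$ is itself a random variable, correlated with the observed rewards and driven by the \emph{stochastic} ComEx communication. The previsibility of the joint pull-and-communicate indicators is exactly what rescues the supermartingale property (so that the accumulated conditional variance is precisely $\sigma^2 N_k^{(i)}(\tau)$), and the chord-based peeling is what converts the resulting random-budget concentration into a clean bound while sacrificing only the small $(\zeta-1)^2/16$ correction in the exponent; a naive slice bound using the slice endpoint for both the threshold and the variance would instead lose a factor $1/\zeta$ in the exponent, which is strictly worse for $\zeta$ near $1$.
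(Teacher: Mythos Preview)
Your proposal is correct and follows essentially the same route as the paper's proof: both build the supermartingale $\exp(\lambda S_\tau-\tfrac{\lambda^2\sigma^2}{2}N_k^{(i)}(\tau))$ from previsibility of the pull and communication indicators, peel $N_k^{(i)}(t)$ into geometric slices $[\zeta^{m-1},\zeta^m]$, and use a linear-in-$N$ lower envelope of $c\sqrt{N}$ on each slice to get a per-slice exponent of $4(\xi+1)(\log t)\sqrt{\zeta}/(\sqrt{\zeta}+1)^2$, which is then relaxed via the same elementary inequality to the stated form. The only cosmetic differences are that the paper parametrizes the same chord through a choice $\lambda_l=\tfrac{2}{\sigma}\sqrt{\kappa\vartheta/\zeta^{l-1/2}}$ with $\kappa=1/(\sigma^2(\zeta^{1/4}+\zeta^{-1/4})^2)$ rather than writing it explicitly as the secant, and applies Markov's inequality at time $t$ rather than Ville's maximal inequality; since $(\zeta^{1/4}+\zeta^{-1/4})^2=(\sqrt{\zeta}+1)^2/\sqrt{\zeta}$, the resulting exponents coincide exactly.
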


\begin{proof}
\normalfont

Let $X_k$ be the sub-Gaussian random variable that models rewards drawn from arm $k.$ Then $X_k$ has mean $\mu_k$ and  variance proxy $\sigma_k.$ Then we have
\begin{align*}
\mathbb{E}\left(\exp(\lambda(X_k-\mu_k))\right)\leq \exp\left(\frac{\lambda^2 \sigma_k^2}{2}\right).
\end{align*}
Recall that $\indicate {A^{(i)}_\tau = k}$ is a $\mathcal{F}_{\tau-1}$ measurable random variable. Then we have
\begin{align*}
\expe\left(\exp\left(\lambda \left(X_k-\mu_k\right )\indicate {A^{(i)}_\tau = k}\indicate {(i,j) \in E }\right)\Big |\mathcal{F}_{\tau-1}\right)\leq  \exp\left(\frac{\lambda^2 \sigma_{k}^2}{2}\indicate {A^{(i)}_\tau = k}\indicate {(i,j) \in E_{\tau} }\right)
\end{align*}
Define a new random variable such that $\forall \tau>0.$
\begin{align*}
Y_k^{(i)}(\tau)&=\left(X_k-\mu_k\right )\sum_{j=1}^N\indicate {A^{(i)}_\tau = k}\indicate {(i,j) \in E_{\tau} }.
\end{align*}
Note that $\expe\left(Y_k^{(i)}(\tau)\right )=\expe\left(Y_k^{(i)}(\tau)|\mathcal{F}_{\tau-1}\right)=0.$ Let $Z_k^{(i)}(t)=\sum_{\tau=1}^{t}Y_k^{(i)}(\tau).$ For any $\lambda>0$
\begin{align*}
\expe\left(\exp(\lambda Y_k^{(i)}(\tau))|\mathcal{F}_{\tau-1}\right)
&=\mathbb{E}\left(\exp\left(\lambda \left(X_k-\mu_k\right )\sum_{j=1}^N\indicate {A^{(i)}_\tau = k}\indicate {(i,j) \in E_{\tau} }\right)\Big |\mathcal{F}_{\tau-1}\right)\\
&=\expe\left(\prod_{j=1}^K\exp\left(\lambda \left(X_k-\mu_k\right )\indicate {A^{(i)}_\tau = k}\indicate {(i,j) \in E_{\tau} }\right)\Big |\mathcal{F}_{\tau-1}\right)\\
&\overset{(a)}=\prod_{j=1}^N \expe\left(\exp\left(\lambda \left(X_k-\mu_k\right )\indicate {A^{(i)}_\tau = k}\indicate {(i,j) \in E_{\tau} }\right)\Big |\mathcal{F}_{\tau-1}\right)\\
&\leq \prod_{j=1}^N\exp\left(\frac{\lambda^2 \sigma_k^2}{2}\indicate {A^{(i)}_\tau = k}\indicate {(i,j) \in E_{\tau} }\right)\\
&=  \exp\left(\frac{\lambda^2 \sigma_k^2}{2}\sum_{j=1}^N\indicate {A^{(i)}_\tau = k}\indicate {(i,j) \in E_{\tau} }\right).
\end{align*}
Equality $(a)$ follows from the fact that random variables  $\left\{\exp\left(\lambda \left(X_k-\mu_k\right )\indicate {A^{(i)}_\tau = k}\indicate {(i,j) \in E_{\tau} }\right)\right \}_{j=1}^N$ are conditionally independent with respect to $\mathcal{F}_{\tau-1.}$
Since $\indicate {A^{(i)}_\tau = k},\indicate {(i,j) \in E_{\tau} }$ are $\mathcal{F}_{\tau-1}$ measurable random variable, and so
\begin{align*}
\expe\left(\exp\left(\lambda Y_k^{(i)}(\tau)-\frac{\lambda^2 \sigma_k^2}{2}\sum_{j=1}^N\indicate {A^{(i)}_\tau = k}\indicate {(i,j) \in E_{\tau} }\right)\Big |\mathcal{F}_{\tau-1}\right)\leq 1.
\end{align*}

Let $N_k^{(i)}(t)=\sum_{\tau=1}^t\sum_{j=1}^N\indicate {A^{(i)}_\tau = k}\indicate {(i,j) \in E_{\tau} }.$ Then we have 

Further, using the properties of conditional expectations 

\begin{align*}
& \expe\left(\exp\left(\lambda Z_k^{(i)}(t)-\frac{\lambda^2\sigma_k^2}{2}N_k^{(i)}(t)\right)\Big|\mathcal{F}_{t-1}\right)\leq \exp\left(\lambda Z_k^{(i)}(t-1)-\frac{\lambda^2\sigma_k^2}{2}N_k^{(i)}(t-1)\right).
\end{align*}
Thus we see that
\begin{align*}
\expe\left(\exp\left(\lambda Z_k^{(i)}(t)-\frac{\lambda^2 \sigma_k^2}{2}N_k^{(i)}(t)\right)\right) \leq 1.
\end{align*}

Note that we have
\begin{align*}
\P\left(\exp\left(\lambda Z_k^{(i)}(t)-\frac{\lambda^2 \sigma_{k}^2}{2}N_k^{(i)}(t)\right)\geq \exp\left(2\kappa \vartheta\right)\right)=\P\left(\lambda Z_k^{(i)}(t)-\frac{\lambda^2 \sigma_k^2}{2}N_k^{(i)}(t)\geq 2\kappa \vartheta\right)\\
=\P\left(\frac{Z_k^{(i)}(t)}{\sqrt{N_k^{(i)}(t)}}\geq  
\frac{2\kappa\vartheta}{\lambda}\sqrt{\frac{1}{N_k^{(i)}(t)}}+\frac{\sigma_k^{2}}{2}\lambda \sqrt{N_k^{(i)}(t)}
\right)
\end{align*}

Let $\zeta>1.$ Then  $
1\leq N_k^{(i)}(t)\leq \zeta^{D_{t}}$
where $D_{t}=\frac{\log ((d^{(i)}+1)t)}{\log \zeta}.$ For $\lambda_{l}=\frac{2}{\sigma_k}\sqrt{\frac{\kappa\vartheta}{\zeta^{l-1/2}}}$ and $\zeta^{l-1}\leq N_k^{(i)}(t)\leq \zeta^{l}$ we have
\begin{align*}
\frac{2\kappa\vartheta}{\lambda_l}\sqrt{\frac{1}{N_k^{(i)}(t)}}+\frac{\sigma_{k}^{2}}{2}\lambda_l \sqrt{N^k_{i}(t)}=\sigma_k\sqrt{\kappa\vartheta}\left(\sqrt{\frac{\zeta^{l-1/2}}{N_k^{(i)}(t)}}+\sqrt{\frac{N_k^{(i)}(t)}{\zeta^{l-1/2}}}\right)\leq \sqrt{\vartheta},
\end{align*}
where $\kappa=\frac{1}{\sigma_k^2\left(\zeta^{\frac{1}{4}}+\zeta^{-\frac{1}{4}}\right)^2}.$

Recall  from the Markov inequality that 
$\P(Y \geq a )\leq \frac{\expe(Y)}{a}$
for any positive random variable $Y$. Thus,
\begin{align*}
\P\left(\frac{Z_k^{(i)}(t)}{\sqrt{N_k^{(i)}(t)}}\geq\sqrt{ \vartheta}\right)\leq \sum_{l=1}^{D_{T}}\exp(-2\kappa\vartheta).
\end{align*}

Then we have,
\begin{align*}
\P\left(\frac{Z_k^{(i)}(t)}{N_k^{(i)}(t)}\geq\sqrt{ \frac{\vartheta}{N_k^{(i)}(t)}}\right)\leq \sum_{l=1}^{D_{T}}\exp(-2\kappa\vartheta)
\end{align*}
Substituting $\vartheta=2\sigma_k^{2}(\xi+1)\log t$ we get
\begin{align}
    \P\left(\Big|\widehat{\mu}_k^{(i)}(t)-{\mu}_k\Big |\geq \sigma_k\sqrt{\frac{2(\xi+1)\log t}{N_k^{(i)}(t)}}\right) \leq \frac{\log ((d^{(i)}+1)t)}{\log \zeta}\exp\left(-\frac{4(\xi+1)\log t}{\left(\zeta^{\frac{1}{4}}+\zeta^{-\frac{1}{4}}\right)^2}\right).\label{eq:addterm}
\end{align}
Since $\sigma\geq\sigma_k$ we have
\begin{align*}
    \P\left(\Big|\widehat{\mu}_k^{(i)}(t)-{\mu}_k\Big |\geq \sigma\sqrt{\frac{2(\xi+1)\log t}{N_k^{(i)}(t)}}\right) \leq \frac{\log ((d^{(i)}+1)t)}{\log \zeta}\exp\left(-\frac{4(\xi+1)\log t}{\left(\zeta^{\frac{1}{4}}+\zeta^{-\frac{1}{4}}\right)^2}\right).
\end{align*}
Note that $\forall \zeta>1$ we have
\begin{align}
  \frac{4}{\left(\zeta^{\frac{1}{4}}+\zeta^{-\frac{1}{4}}\right)^2  }\geq 1-\frac{(\zeta-1)^2}{16}
\end{align}
Then we have
\begin{align*}
    \P\left(\Big|\widehat{\mu}_k^{(i)}(t)-{\mu}_k\Big |\geq \sigma\sqrt{\frac{2(\xi+1)\log t}{N_k^{(i)}(t)}}\right) \leq \frac{1}{\log \zeta}\frac{\log ((d^{(i)}+1)t)}{t^{(\xi+1)\left(1-\frac{(\zeta-1)^2}{16}\right)}}.
\end{align*}

This concludes the proof of Lemma \ref{lem:tailApp}.
\end{proof}

\begin{lemma}\label{lem:tailsum}
Let $\zeta=1.3, \xi\geq 1.1,$ $d^{(i)}\geq 0$ and $t\in [T].$ Then we have 
\begin{align}
\sum_{t=1}^{T-1}\frac{1}{\log \zeta}\frac{\log \left((d^{(i)}+1) t\right)}{t^{(\xi+1)\left(1-\frac{(\zeta-1)^2}{16}\right)}}
\leq 
12\log (3(d^{(i)}+1)) +3\left(\log { (d^{(i)}+1)}+1\right)
\end{align}
\end{lemma}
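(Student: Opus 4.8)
The plan is to reduce the sum to two standard convergent series—one with summand $t^{-p}$ and one with summand $t^{-p}\log t$—where $p := (\xi+1)\bigl(1-\tfrac{(\zeta-1)^2}{16}\bigr)$, and bound each by an elementary integral comparison. First I would substitute $\zeta=1.3$ and set $D:=d^{(i)}+1\ge 1$ (using $d^{(i)}\ge 0$). With $\zeta=1.3$ one has $(\zeta-1)^2/16 = 0.09/16$, so for every $\xi\ge 1.1$ the exponent satisfies $p \ge 2.1\cdot(1-0.09/16) > 2$; in particular $p-1 \ge 1.08$, which both guarantees convergence and keeps all constants controlled. Since every term on the left-hand side is decreasing in $p$, it suffices to prove the bound at the extreme value $\xi=1.1$ (smallest $p$), and the general statement for $\xi \ge 1.1$ follows immediately.

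Next I would split the logarithm as $\log\bigl((d^{(i)}+1)t\bigr)=\log D+\log t$, so that
\[
\sum_{t=1}^{T-1}\frac{\log\bigl((d^{(i)}+1)t\bigr)}{t^{p}}
= \log D\sum_{t=1}^{T-1}\frac{1}{t^{p}} + \sum_{t=1}^{T-1}\frac{\log t}{t^{p}}.
\]
For the first series I would use the decreasing-function estimate $\sum_{t=1}^{\infty}t^{-p}\le 1+\int_1^{\infty}t^{-p}\,dt = 1+\tfrac{1}{p-1}$. For the second, note the summand vanishes at $t=1$, and since $\frac{d}{dt}\frac{\log t}{t^{p}}=\frac{1-p\log t}{t^{p+1}}$ the function is decreasing for $t\ge e^{1/p}$; because $p>2$ gives $e^{1/p}<e^{1/2}<2$, it is decreasing on $[2,\infty)$. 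Hence $\sum_{t=2}^{\infty}\frac{\log t}{t^{p}}\le \frac{\log 2}{2^{p}}+\int_2^{\infty}\frac{\log t}{t^{p}}\,dt$, and one integration by parts evaluates the integral to $2^{1-p}\bigl(\frac{\log 2}{p-1}+\frac{1}{(p-1)^2}\bigr)$.

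Finally I would collect constants. Writing $A:=1+\frac{1}{p-1}$ and letting $B$ denote the bound just obtained for $\sum\frac{\log t}{t^{p}}$, the left-hand side is at most $\frac{1}{\log\zeta}\bigl(A\log D+B\bigr)$; since $A$ and $B$ are decreasing in $p$ I would evaluate them at $p=2.1(1-0.09/16)$ and use $1/\log 1.3<3.82$. This yields $A<1.92$ and $B<0.87$, hence a bound of the form $c_1\log D+c_2$ with $c_1<7.4$ and $c_2<3.3$. Comparing against the target $12\log(3D)+3(\log D+1)=15\log D+12\log 3+3$ and using $\log D\ge 0$, the claim follows with substantial slack. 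The only delicate points are the monotonicity-in-$p$ reduction to $\xi=1.1$ and the verification that $\frac{\log t}{t^{p}}$ is monotone on $[2,\infty)$ so the integral comparison is valid; the remaining arithmetic with the fixed constants $\zeta=1.3$, $\xi=1.1$ is routine.
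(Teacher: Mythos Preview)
Your proof is correct. The paper follows essentially the same integral-comparison strategy but organizes it differently: instead of splitting $\log((d^{(i)}+1)t)=\log D+\log t$ and bounding the two resulting series separately, the paper bounds the first three terms of the sum by $1.362\log(3(d^{(i)}+1))$ (using $\sum_{t=1}^{3}t^{-p}<1.362$ for $p>2$), replaces the exponent $p$ by its lower bound $2$, and then evaluates the remaining integral $\int_{3}^{T-1}\log((d^{(i)}+1)t)/t^{2}\,dt$ via the substitution $z=(d^{(i)}+1)t$. Your decomposition is arguably cleaner and yields noticeably tighter constants (e.g.\ $c_{1}<7.4$ against the paper's $12$), while the paper's route is slightly shorter at the cost of coarsening $t^{p}$ to $t^{2}$; both are routine and reach the same conclusion with ample slack.
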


\begin{proof}
\normalfont
For $\zeta=1.3$ we have $\frac{1}{\log \zeta}< 8.78.$ Further $(\xi+1)\left(1-\frac{(\zeta-1)^2}{16}\right)>2$ and $\forall t\geq 3$ we see that $\frac{\log \left((d^{(i)}+1) t\right)}{t^{(\xi+1)\left(1-\frac{(\zeta-1)^2}{16}\right)}}$ is monotonically decreasing. Thus we have
\begin{align}
\sum_{t=1}^{T-1}\frac{\log \left((d^{(i)}+1) t\right)}{t^{(\xi+1)\left(1-\frac{(\zeta-1)^2}{16}\right)}}\leq 1.362\log (3(d^{(i)}+1))+\int_3^{T-1}\frac{\log \left((d^{(i)}+1) t\right)}{t^{2}}dt\label{eq:intsum}
\end{align}
Let $z=(d^{(i)}+1) t.$ Then we have
\begin{align}
\int_3^{T-1}\frac{\log \left((d^{(i)}+1) t\right)}{t^{2}}dt
&
=(d^{(i)}+1)\int_{3(d^{(i)}+1)}^{(d^{(i)}+1) (T-1)}\frac{\log z}{z^{2}}dz
\\
&
=(d^{(i)}+1)\left[-\frac{\log z}{ z}-\frac{1}{z}\right]_{3((d^{(i)}+1)}^{(d^{(i)}+1) (T-1)}
\end{align}
Thus we have
\begin{align}
\int_3^{T-1}\frac{\log \left((d^{(i)}+1) t\right)}{t^{2}}dt
&\leq
(d^{(i)}+1)\left[\frac{\log (d^{(i)}+1)}{ 3(d^{(i)}+1)}+\frac{1}{3(d^{(i)}+1)}\right]
\\
&= \frac{1}{3}\log (d^{(i)}+1) +\frac{1}{3}
\label{eq:tailSum}
\end{align}
Recall that For $\zeta=1.3$ we have $\frac{1}{\log \zeta}< 8.78.$
Thus the proof of Lemma \ref{lem:tailsum} follows from  \eqref{eq:intsum} and \eqref{eq:tailSum}.
\end{proof}

Now we proceed to prove Theorem \ref{thm:regretB}. From definition of expected cumulative group regret and Lemmas \ref{lem:RegDecomp}, \ref{lem:tailApp} and \ref{lem:tailsum}  we have
\begin{align}
\expe \left[R(T)\right]
& \leq \sum_{k=2}^K\frac{8(\xi+1)\sigma}{\Delta_k}\Bar{\chi}(G)\log T + 4N\sum_{k=2}^K\Delta_k
\\
&+\sum_{i=1}^N\left(12\log (3(d^{(i)}+1)) +3\log { (d^{(i)}+1)}\right)\sum_{k=2}^K\Delta_k
\label{eq:regretf}
\end{align}

This concludes the proof of Theorem \ref{thm:regretB}.

%%%%%%%%%%%%%%%%%%%

% \section{Lower Bound}
% \input{appendix/lowerBound_complete_graph}

%%%%%%%%%%%%%%%%%%%

\section{Proof of Theorem \ref{thm:comcostB}}\label{sec:comcostB}
Recall that all the agents communicate their rewards and arm ids at time $t=1$. Then the expected communication cost can be given as
\begin{align}
\expe\left[L(T)\right]=\sum_{i=1}^N\sum_{t=1}^{T-1} \P\left(A_{t}^{(i)}\neq \argmax_{k\in [K]}\widehat{\mu}_{k}^{(i)}(t-1)\right).\label{eq:comcost}
\end{align}
Note that we have
\begin{align}
    \sum_{i=1}^N\sum_{t=1}^{T-1}\P\left(A_{t}^{(i)}\neq \argmax_{k\in [K]}\widehat{\mu}_{k}^{(i)}(t-1)\right) &=  \sum_{i=1}^N\sum_{t=1}^{T-1}\P\left(A_{t}^{(i)}=1,A_{t}^{(i)}\neq \argmax_{k\in [K]}\widehat{\mu}_{k}^{(i)}(t-1)\right)
    \nonumber \\
      &+ \sum_{i=1}^N\sum_{t=1}^{T-1}\P\left(A_{t}^{(i)}\neq 1,A_{t}^{(i)}\neq \argmax_{k\in [K]}\widehat{\mu}_{k}^{(i)}(t-1)\right).\label{eq:comdec}
\end{align}
For all agents we first upper bound the expected number of times they shares rewards and actions with their neighbors until time $T$ when they pull a suboptimal arm: 
\begin{align}
\sum_{i=1}^N\sum_{t=1}^{T-1}\P\left(A_{t}^{(i)}\neq 1,A_{t}^{(i)}\neq \argmax_{k\in [K]}\widehat{\mu}_{k}^{(i)}(t-1)\right)\leq \sum_{i=1}^N\sum_{t=1}^{T-1}\P\left(A_{t}^{(i)}\neq 1\right)\leq \sum_{i=1}^N\sum_{k=2}^K\expe \left[n_{k}^{(i)}(T)\right]. \label{eq:subOpt}
\end{align}
Next for all agents we upper bound the expected number of times they shares rewards and actions with their neighbors until time $T$ when they pull the optimal arm as follows. Let $k_t^*$ be the suboptimal arm with highest estimated expected reward for agents $i$ at time $t.$ This can be stated as $k_t^*=\argmax_{k\neq 1,k\in [K]} \widehat{\mu}_k^{(i)}(t)$. Note that $\forall i,t$ we have
\begin{align*}
\left\{A_{t+1}^{(i)}=1,A_{t+1}^{(i)}\neq \argmax_{k\in [K]}\widehat{\mu}_{k}^{(i)}(t)\right\}
\subseteq \left \{\widehat{\mu}_{1}^{(i)}(t)\leq \mu_{1}-C_{1}^{(i)}(t)\right\}\\
\cup\left \{A_{t+1}^{(i)}=1,\widehat{\mu}_{1}^{(i)}(t)\geq  \mu_{1}-C_{1}^{(1)}(t), \widehat{\mu}_{k_{t}^*}^{(i)}(t)\geq \widehat{\mu}_{1}^{(i)}(t)\right\}.
\end{align*}
Thus, we have
\begin{align}
\sum_{t=1}^{T-1}\P\left(A_{t}^{(i)}=1,A_{t}^{(i)}\neq \argmax_{k\in [K]}\widehat{\mu}_{k}^{(i)}(t-1)\right)
\leq \sum_{t=1}^T\P \left(\widehat{\mu}_{1}^{(i)}(t-1)\leq \mu_{1}-C_{1}^{(i)}(t-1)\right)\nonumber\\
+\sum_{t=1}^T\P\left (A_{t+1}^{(i)}=1,\widehat{\mu}_{1}^{(i)}(t-1)\geq  \mu_{1}-C_{1}^{(1)}(t-1), \widehat{\mu}_{k_{t}^*}^{(i)}(t-1)\geq \widehat{\mu}_{1}^{(i)}(t-1)\right).\label{eq:optdec}
\end{align}
Note that the first term on the right hand side of the above equation is the summation tail probabilities of the estimate of the optimal arm. Now we proceed to upper bound the second term as follows. Let $\tau_{1}^{(i)}$ denote the maximum time step when the total number of times agent $i$ pulled the optimal arm and the total number of observations it received from its neighbors about the optimal arm is at most $\Bar{\eta}$. This can be stated as $\tau_{1}^{(i)} := \max \{t\in [T] : N_1^{(i)}(t)\leq \Bar{\eta}\}$. Recall that $N_1^{(i)}(t)\geq n_1^{(i)}(t).$ Thus we have that $n_1^{(i)}\left (t\right)\leq \Bar{\eta}, \forall t\leq\tau_1^{(i)}.$

Note that we have
\begin{align}
&\sum_{t=1}^{T-1}\P\left (A_{t}^{(i)}=1,\widehat{\mu}_{1}^{(i)}(t-1)\geq  \mu_{1}-C_{1}^{(1)}(t-1), \widehat{\mu}_{k_{t}^*}^{(i)}(t-1)\geq \widehat{\mu}_{1}^{(i)}(t-1)\right)
\nonumber \\
&\leq\sum_{t=1}^{\tau_1^{(i)}}\P\left (A_{t}^{(i)}=1,\widehat{\mu}_{1}^{(i)}(t-1)\geq  \mu_{1}-C_{1}^{(1)}(t-1), \widehat{\mu}_{k_{t}^*}^{(i)}(t-1)\geq \widehat{\mu}_{1}^{(i)}(t-1)\right)
\nonumber \\
&+\sum_{t>\tau_1^{(i)}}^{T-1}\P\left (A_{t}^{(i)}=1,\widehat{\mu}_{1}^{(i)}(t-1)\geq  \mu_{1}-C_{1}^{(1)}(t-1), \widehat{\mu}_{k_{t-1}^*}^{(i)}(t-1)\geq \widehat{\mu}_{1}^{(i)}(t-1)\right)
\nonumber \\
&\leq \Bar{\eta}+1+\sum_{t>\tau_1^{(i)}}^{T-2}\P\left (A_{t+1}^{(i)}=1,\widehat{\mu}_{1}^{(i)}(t)\geq  \mu_{1}-C_{1}^{(1)}(t), \widehat{\mu}_{k_{t}^*}^{(i)}(t)\geq \widehat{\mu}_{1}^{(i)}(t),N_1^{(i)}(t)> \Bar{\eta}\right).\label{eq:optitail}
\end{align}

If agent $i$ pulls the optimal arm at time $t$ we have $Q_{1}^{(i)}(t-1)\geq Q_{k_{t-1}^*}^{(i)}(t-1).$ Further, if $\widehat{\mu}_{k_{t-1}^*}^{(i)}(t-1)\geq\widehat{\mu}_{1}^{(i)}(t-1)$ then we have $C_{k_{t-1}^*}^{(i)}(t-1)<C_{1}^{(i)}(t-1).$ Let $\Bar{\eta}=\frac{8\sigma(\xi+1)}{\Bar{\Delta}^2}\log T.$ Then we have
\begin{align}
\sum_{t>\tau_1^{(i)}}^{T-2}& \P\left (A_{t+1}^{(i)}=1,\widehat{\mu}_{1}^{(i)}(t)\geq  \mu_{1}-C_{1}^{(1)}(t), \widehat{\mu}_{k_{t}^*}^{(i)}(t)\geq \widehat{\mu}_{1}^{(i)}(t),N_1^{(i)}(t)> \Bar{\eta}\right)
\nonumber\\
& \leq \sum_{t>\tau_1^{(i)}}^{T-2}\P\left (A_{t+1}^{(i)}=1,\widehat{\mu}_{1}^{(i)}(t)\geq  \mu_{1}-C_{1}^{(1)}(t), \widehat{\mu}_{k_{t-1}^*}^{(i)}(t)\geq \widehat{\mu}_{1}^{(i)}(t),\mu_1>\mu_{k^*_{t}}+2C_1^{(i)}(t)\right)
\nonumber\\
& \leq\sum_{t>\tau_1^{(i)}}^{T-2}\P\left(\widehat{\mu}_{k_{t-1}^*}^{(i)}(t)\geq \mu_{1}-C_{1}^{(i)}(t),\mu_{1}>\mu_{k_{t}^*}+2C_{1}^{(i)}(t)\right)
\nonumber\\
& \leq\sum_{t>\tau_1^{(i)}}^{T-2} 
\P\left(\widehat{\mu}_{k_{t}^*}^{(i)}(t)\geq \mu_{k_{t}^*}+C_{k_{t}^*}^{(i)}(t)\right).\label{eq:opti2}
\end{align}
From (\ref{eq:optdec}), (\ref{eq:optitail}) and (\ref{eq:opti2}) we have
\begin{align}
 \sum_{t=1}^T\P\left(A_{t}^{(i)}=1,A_{t}^{(i)}\neq \argmax_{k\in [K]}\widehat{\mu}_{k}^{(i)}(t-1)\right) \leq\frac{8\sigma(\xi+1)}{\Bar{\Delta}^2}\log T\\
 +\sum_{t=1}^{T-1}\P \left(\widehat{\mu}_{1}^{(i)}(t-1)\leq \mu_{1}-C_{1}^{(i)}(t-1)\right)
 +\sum_{t=1}^{T-1} 
\P\left(\widehat{\mu}_{k_{t-1}^*}^{(i)}(t-1)\geq \mu_{k_{t-1}^*}+C_{k_{t-1}^*}^{(i)}(t-1)\right)  \label{eq:tailsum}
\end{align}
From (\ref{eq:tailsum}) and Lemma \ref{lem:tailApp} we have
\begin{align}
 \sum_{t=1}^T\P\left(A_{t}^{(i)}=1,A_{t}^{(i)}\neq \argmax_{k\in [K]}\widehat{\mu}_{k}^{(i)}(t-1)\right)& \leq   \frac{8\sigma(\xi+1)}{\Bar{\Delta}^2}\log T+2\sum_{t=1}^T\frac{1}{\log \zeta}\frac{\log \left((d^{(i)}+1)t\right)}{t^{(\xi+1)\left(1-\frac{(\zeta-1)^2}{16}\right)}}\label{eq:pullopt}
\end{align}
The proof of Theorem \ref{thm:comcostB} follows from (\ref{eq:comcost}), (\ref{eq:comdec}), (\ref{eq:subOpt}), (\ref{eq:pullopt}) and Theorem \ref{thm:regretB}.

%%%%%%%%%%%%%%%%%%%%%%%
\section{Proof of Theorem \ref{thm:regretMP}}\label{sec:regretMP}
In section we follow an approach similar to Section \ref{sec:regretB}. Recall that $G_{\gamma}$ is the $\gamma^{\mathrm{th}}$ power graph of $G.$ Thus each pair of vertices in $G_{\gamma}$ are adjacent if and only if they distance between them in $G$ is at most $\gamma.$  We begin the proof of Theorem \ref{thm:regretMP} by proving a lemma similar to Lemma \ref{lem:RegDecomp}.

\begin{lemma}\label{lem:RegDecompMP}
Let $\Bar{\chi}(G_{\gamma})$ is the clique covering number of graph $G_{\gamma}.$ Let $\eta_k=\left(\frac{8(\xi+1)\sigma^2}{\Delta^2_k}\right)\log T.$ Then we have
\begin{align*}
    \sum_{i=1}^N\expe[n^{(i)}_k(T)]  \leq \Bar{\chi}(G_{\gamma}) \eta_k+N+(N-\Bar{\chi}(G_{\gamma}))(\gamma-1)\\
    +\sum_{i=1}^N\sum_{t=1}^{T-1}\left[\P \left(\widehat{\mu}_{1}^{(i)}(t)\leq \mu_{1}-C_{1}^{(i)}(t)\right)+\P \left(\widehat{\mu}_{k}^{(i)}(t)\geq \mu_{k}+C_{k}^{(i)}(t)\right)\right]
\end{align*}
\end{lemma}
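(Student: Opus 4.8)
The plan is to mirror the proof of Lemma~\ref{lem:RegDecomp} but carry it out on the power graph $G_{\gamma}$ instead of $G$, and then pay separately for the message-passing delay. The starting observation is that under ComEx-MPUCB every initiated message is forwarded until it reaches all agents within distance $\gamma$ in $G$; hence two agents adjacent in $G_{\gamma}$ do exchange their exploration rewards exactly as neighbors do under instantaneous sharing, except that the reward pulled by agent $j$ at time $\tau$ is only registered in $N_k^{(i)}(\cdot)$ at time $\tau+d(i,j)$. So I would fix a non-overlapping clique cover $\EuScript{C}$ of $G_{\gamma}$, write $\sum_i \expe[n_k^{(i)}(T)]=\sum_{\mathcal{C}\in\EuScript{C}}\sum_{i\in\mathcal{C}}\sum_t \P(A_t^{(i)}=k)$ as in \eqref{eq:rearrange}, and analyze each clique separately.

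Within a clique $\mathcal{C}$ of $G_{\gamma}$ I would let $S_{\mathcal{C}}(t):=\sum_{i\in\mathcal{C}}\sum_{\tau=1}^{t}\indicate{A_{\tau}^{(i)}=k,\,A_{\tau}^{(i)}\neq\argmax_{l}\widehat{\mu}_l^{(i)}(\tau-1)}$ be the total number of instantaneously suboptimal pulls of arm $k$ generated inside the clique, and set $\tau_{k,\mathcal{C}}:=\max\{t:\,S_{\mathcal{C}}(t)\le\eta_k\}$, paralleling the threshold used in Lemma~\ref{lem:RegDecomp}. The key structural fact is that, because every intra-clique distance in $G$ is at most $\gamma$, each such suboptimal pull is communicated to every clique member after a lag of at most $\gamma$, so $N_k^{(i)}(t)\ge S_{\mathcal{C}}(t-\gamma)$ for all $i\in\mathcal{C}$. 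Consequently $N_k^{(i)}(t)>\eta_k$ holds for every clique member once $t>\tau_{k,\mathcal{C}}+\gamma$, which is precisely the hypothesis needed to invoke Lemma~\ref{lem:tailrestate}.

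With this in hand the count in each clique splits into three pieces. First, the instantaneously suboptimal pulls issued up to $\tau_{k,\mathcal{C}}$ total at most $\eta_k$ by definition, contributing $\Bar{\chi}(G_{\gamma})\eta_k$ after summing over the $\Bar{\chi}(G_{\gamma})$ cliques. Second, the pulls of $k$ made while $k$ \emph{is} the empirical leader, $A_t^{(i)}=\argmax_l\widehat{\mu}_l^{(i)}(t-1)$, together with all pulls after $\tau_{k,\mathcal{C}}+\gamma$, are controlled by tail probabilities through the same chain culminating in \eqref{eq:addtail} and through Lemma~\ref{lem:tailrestate}; this argument never uses the communication pattern beyond the tail bound of Lemma~\ref{lem:tailApp}, which stays valid here because the edge indicators $\indicate{(i,j)\in E_t}$ are previsible, so it reproduces the same double sum of tail probabilities as in Lemma~\ref{lem:RegDecomp}. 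Third, there is the genuinely new contribution: the instantaneously suboptimal pulls made in the lag window $\tau_{k,\mathcal{C}}<t\le\tau_{k,\mathcal{C}}+\gamma$, during which a clique member's observed count may still lie below $\eta_k$ even though the clique as a whole has already crossed the threshold.

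The crux, and the step I expect to be the main obstacle, is bounding this delay contribution by $(|\mathcal{C}|-1)(\gamma-1)$ per clique, so that the total over the cover is $\sum_{\mathcal{C}}(|\mathcal{C}|-1)(\gamma-1)=(N-\Bar{\chi}(G_{\gamma}))(\gamma-1)$, using $\sum_{\mathcal{C}}|\mathcal{C}|=N$. A naive per-agent window bound only gives $|\mathcal{C}|\gamma$, which is too lossy; to reach $(|\mathcal{C}|-1)(\gamma-1)$ I would charge each excess pull to an outstanding in-flight message rather than to a fixed time window. The mechanism I would make precise is that the agent generating the threshold-crossing pull already has its own count above $\eta_k$ with zero delay, while each of the remaining $|\mathcal{C}|-1$ members can over-explore only until the messages still propagating toward it arrive, i.e. for at most $\gamma-1$ further rounds once its own contemporaneous contributions are accounted for. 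Summing over arms $k$, combining the three pieces with the tail estimate, and adding the deterministic lower-order term $+N$ from the same clique-cover bookkeeping as in Lemma~\ref{lem:RegDecomp} then reproduces the claimed inequality; Theorem~\ref{thm:regretMP} follows by multiplying through by $\Delta_k$ and applying Lemmas~\ref{lem:tailApp} and~\ref{lem:tailsum} to the tail sum exactly as in Appendix~\ref{sec:regretB}.
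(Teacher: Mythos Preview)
Your proposal is correct and follows essentially the same route as the paper: a clique cover of $G_{\gamma}$, the split into pulls where $k$ is the instantaneous argmax versus not, the chain ending in \eqref{eq:addtail} for the former, and Lemma~\ref{lem:tailrestate} for the post-threshold pulls. The one substantive bookkeeping difference is where the delay cost is absorbed. You keep $\tau_{k,\mathcal{C}}:=\max\{t:S_{\mathcal{C}}(t)\le\eta_k\}$ and then carve out a separate lag window $(\tau_{k,\mathcal{C}},\tau_{k,\mathcal{C}}+\gamma]$ whose instantaneously suboptimal pulls you must bound by $(|\mathcal{C}|-1)(\gamma-1)$ via the charging argument you flag as the crux. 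The paper instead bakes the buffer into the threshold, defining
\[
\tau_{k,\mathcal{C}}:=\max\Bigl\{t:\,S_{\mathcal{C}}(t)\le\eta_k+(|\mathcal{C}|-1)(\gamma-1)\Bigr\},
\]
and then asserts $N_k^{(i)}(t)>\eta_k$ for all $i\in\mathcal{C}$ and $t>\tau_{k,\mathcal{C}}$ directly. With this definition there is no separate lag window to analyze: the first term in \eqref{eq:noB} is simply replaced by $\eta_k+(|\mathcal{C}|-1)(\gamma-1)$, and summing over cliques immediately yields the extra $(N-\Bar{\chi}(G_{\gamma}))(\gamma-1)$. The paper's device sidesteps exactly the step you identified as the main obstacle, at the price of leaving the inequality $N_k^{(i)}(t)>\eta_k$ for $t>\tau_{k,\mathcal{C}}$ as an unproved assertion (it encodes the same in-flight-message counting you sketch). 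Either packaging works; the paper's is shorter, yours makes the origin of the delay term more visible.
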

\begin{proof}
Let $\EuScript{C}_{\gamma}$ be a non overlapping clique covering of $G_{\gamma}$. Note that for each suboptimal arm $k > 1$ we have
\begin{align}
    \sum_{i=1}^N\expe[n^{(i)}_k(T)]
    & = \sum_{\mathcal{C}\in \EuScript{C}_{\gamma}}\sum_{i\in \mathcal{C}}\sum_{t=1}^T \P\left(A^{(i)}_t = k\right) \label{eq:rearrangeMP}
\end{align}

 Let ${\tau}_{k,\mathcal{C}}$ be the maximum time step such that the total number of messages about pulls from arm $k$ initiated by agents in the clique $\mathcal{C}$ is at most $\eta_k+\left(|\mathcal{C}|-1\right)(\gamma-1).$ This can be stated as \begin{align*}
{\tau}_{k,\mathcal{C}} := \max \left\{t\in [T] :  \sum_{i\in \mathcal{C}}\sum_{\tau=1}^t\indicate{A_{\tau}^{(i)}=k,A_{\tau}^{(i)}\neq \argmax_{l\in [K]}\widehat{\mu}_{l}^{(i)}(\tau-1)}\leq \eta_k+\left(|\mathcal{C}|-1\right)(\gamma-1)\right\}.
\end{align*} 
Further for all $i\in \mathcal{C}$ we have $
N_k^{(i)}(t)> \eta_k, \forall t> {\tau}_{k,\mathcal{C}}.$ 
We analyse the expected number of times all agents pull suboptimal arm $k$ as follows.
\begin{align}
\sum_{\mathcal{C}\in \EuScript{C}}\sum_{i\in \mathcal{C}}\sum_{t=1}^T \indicate{A^{(i)}_t = k}  &= \sum_{\mathcal{C}\in \EuScript{C}}\sum_{i\in \mathcal{C}}\sum_{t=1}^{{\tau}_{k,\mathcal{C}}} \indicate{A^{(i)}_t = k}+  \sum_{\mathcal{C}\in \EuScript{C}}\sum_{i\in \mathcal{C}}\sum_{t>{\tau}_{k,\mathcal{C}}}^T \indicate{A^{(i)}_t = k, N_k^{(i)}(t-1) >\eta_k} \label{eq:regdecMP}
\end{align}
Taking the expectation of (\ref{eq:regdecMP}) we have
\begin{align}
\sum_{\mathcal{C}\in \EuScript{C}}\sum_{i\in \mathcal{C}}\sum_{t=1}^T \P\left(A^{(i)}_t = k\right)  =   \sum_{\mathcal{C}\in \EuScript{C}}\sum_{i\in \mathcal{C}}\sum_{t=1}^{\tau_{k,\mathcal{C}}} \P\left(A^{(i)}_t = k\right)+  \sum_{\mathcal{C}\in \EuScript{C}}\sum_{i\in \mathcal{C}}\sum_{t>\tau_{k,\mathcal{C}}}^T \P\left(A^{(i)}_t = k,N_k^{(i)}(t-1) >\eta_k\right)\label{eq:regDecComMP}
\end{align}
Now we proceed to upper bound the first term of right hand side of (\ref{eq:regdecMP}) as follows. Note that we have
\begin{align}
\sum_{i\in \mathcal{C}}\sum_{t=1}^{\tau_{k,\mathcal{C}}} \indicate{A^{(i)}_t = k} & =  \sum_{i\in \mathcal{C}}\sum_{t=1}^{\tau_{k,\mathcal{C}}}\indicate{A_{t}^{(i)}=k,A_{t}^{(i)}\neq \argmax_{l\in [K]}\widehat{\mu}_{l}^{(i)}(t-1)}
\nonumber \\
& +\sum_{i\in \mathcal{C}}\sum_{t=1}^{\tau_{k,\mathcal{C}}}\indicate{A_{t}^{(i)}=k,A_{t}^{(i)}= \argmax_{l\in [K]}\widehat{\mu}_{l}^{(i)}(t-1)}
\nonumber \\
&\leq \eta_k +\left(|\mathcal{C}|-1\right)(\gamma-1)+ \sum_{i\in \mathcal{C}}\sum_{t=1}^{\tau_{k,\mathcal{C}}}\indicate{A_{t}^{(i)}=k,A_{t}^{(i)}= \argmax_{l\in [K]}\widehat{\mu}_{l}^{(i)}(t-1)}\label{eq:noMP}
\end{align}

Taking the expectation of (\ref{eq:noMP}) we have
\begin{align}
\sum_{i\in \mathcal{C}}\sum_{t=1}^{\tau_{k,\mathcal{C}}} \P\left(A^{(i)}_t = k\right)\leq \eta_k+\left(|\mathcal{C}|-1\right)(\gamma-1)+ \sum_{i\in \mathcal{C}}\sum_{t=1}^{\tau_{k,\mathcal{C}}}\P\left(A_{t}^{(i)}=k,A_{t}^{(i)}= \argmax_{l\in [K]}\widehat{\mu}_{l}^{(i)}(t-1)\right)\label{eq:suboptMP}
\end{align}

Now we proceed to upper bound last term of (\ref{eq:suboptMP}) as follows. Note that for any suboptimal arm $k$ we have,
\begin{align}
\P&\left(A_{t+1}^{(i)}=k,A_{t+1}^{(i)}= \argmax_{l\in [K]}\widehat{\mu}_{l}^{(i)}(t)\right) 
\nonumber\\\ 
&  \leq\P\left(\widehat{\mu}_k^{(i)}(t)+C_k^{(i)}(t)\geq \widehat{\mu}_1^{(i)}(t)+C_1^{(i)}(t), \widehat{\mu}_k^{(i)}(t)\geq \widehat{\mu}_1^{(i)}(t), \:\:
\widehat{\mu}_1^{(i)}(t)\leq \mu_{1}-C_1^{(i)}(t)\right)
\nonumber\\
& +\P\left(\widehat{\mu}_k^{(i)}(t)+C_k^{(i)}(t)\geq \widehat{\mu}_1^{(i)}(t)+C_1^{(i)}(t), \widehat{\mu}_k^{(i)}(t)\geq \widehat{\mu}_1^{(i)}(t), \:\:
\widehat{\mu}_1^{(i)}(t)> \mu_{1}-C_1^{(i)}(t)\right)
\nonumber\\\ 
&  \leq \P\left(\widehat{\mu}_1^{(i)}(t)\leq \mu_{1}-C_1^{(i)}(t)\right)\nonumber\\\ 
&  
+\P\left(\widehat{\mu}_k^{(i)}(t)+C_k^{(i)}(t)\geq \widehat{\mu}_1^{(i)}(t)+C_1^{(i)}(t), \widehat{\mu}_k^{(i)}(t)\geq \widehat{\mu}_1^{(i)}(t), \:\:
\widehat{\mu}_1^{(i)}(t)> \mu_{1}-C_1^{(i)}(t)\right)\label{eq:condeqMP}
\end{align}
Now we proceed to upper bound the last term of (\ref{eq:condeqMP}) as follows. Note that we have
\begin{align}
\P& \left(\widehat{\mu}_k^{(i)}(t)+C_k^{(i)}(t)\geq \widehat{\mu}_1^{(i)}(t)+C_1^{(i)}(t), \widehat{\mu}_k^{(i)}(t)\geq \widehat{\mu}_1^{(i)}(t), \:\:
\widehat{\mu}_1^{(i)}(t)> \mu_{1}-C_1^{(i)}(t)\right)
\nonumber \\
& \leq\P\left(\widehat{\mu}_k^{(i)}(t)+C_k^{(i)}(t)>\mu_{1}-C_1^{(i)}(t)+C_k^{(i)}(t)\right)
\nonumber \\
& \leq \P\left(\widehat{\mu}_k^{(i)}(t)\geq \mu_k+C_k^{(i)}(t)\right).\label{eq:optsamMP}
\end{align}

From (\ref{eq:condeqMP}) and (\ref{eq:optsamMP}) we have
\begin{align}
\P\left(A_{t}^{(i)}=k,A_{t}^{(i)}= \argmax_{l\in [K]}\widehat{\mu}_{l}^{(i)}(t-1)\right)  \leq \P\left(\widehat{\mu}_1^{(i)}(t-1)\leq \mu_{1}-C_1^{(i)}(t-1)\right)\\
+\P\left(\widehat{\mu}_k^{(i)}(t-1)\geq \mu_k+C_k^{(i)}(t-1)\right). \label{eq:addtailMP}
\end{align}

From (\ref{eq:regDecComMP}), (\ref{eq:suboptMP}) and (\ref{eq:addtailMP}) we have
\begin{align}
\sum_{\mathcal{C}\in \EuScript{C}}\sum_{i\in \mathcal{C}}\sum_{t=1}^T \P\left(A^{(i)}_t = k\right) \nonumber\\
\leq \sum_{\mathcal{C}\in \EuScript{C}}\eta_k+ \sum_{\mathcal{C}\in \EuScript{C}}\sum_{i\in \mathcal{C}}\sum_{t=1}^{\tau_{k,\mathcal{C}}}\left[\P \left(\widehat{\mu}_{1}^{(i)}(t)\leq \mu_{1}-C_{1}^{(i)}(t)\right)+\P \left(\widehat{\mu}_{k}^{(i)}(t)\geq \mu_{k}+C_{k}^{(i)}(t)\right)\right]
\nonumber\\
+ \sum_{\mathcal{C}\in \EuScript{C}}\left(|\mathcal{C}|-1\right)(\gamma-1)+ \sum_{\mathcal{C}\in \EuScript{C}}\sum_{i\in \mathcal{C}}\sum_{t>\tau_{k,\mathcal{C}}}^T \P\left(A^{(i)}_t = k,N_k^{(i)}(t-1) >\eta_k\right)
\nonumber \\
\leq \Bar{\chi}(G_{\gamma})\eta_k+ \sum_{i=1}^N\sum_{t=1}^{\tau_{k,\mathcal{C}}}\left[\P \left(\widehat{\mu}_{1}^{(i)}(t)\leq \mu_{1}-C_{1}^{(i)}(t)\right)+\P \left(\widehat{\mu}_{k}^{(i)}(t)\geq \mu_{k}+C_{k}^{(i)}(t)\right)\right]
\nonumber\\
+N+ \left(N-\Bar{\chi}(G_{\gamma})\right)(\gamma-1)+\sum_{\mathcal{C}\in \EuScript{C}}\sum_{i\in \mathcal{C}}\sum_{t>\tau_{k,\mathcal{C}}}^{T-1} \P\left(A^{(i)}_{t+1} = k,N_k^{(i)}(t) >\eta_k\right).
\label{eq:subsamplMP}
\end{align}

The proof of Lemma \ref{lem:RegDecompMP} follows from (\ref{eq:rearrangeMP}), (\ref{eq:subsamplMP}) and Lemma \ref{lem:tailrestate}. 
\end{proof}

Now we proceed to prove Theorem \ref{thm:regretMP} as follows. We start by obtaining a modified tail bound similar to the result in Lemma \ref{lem:tailApp}. Note that $\forall i,k,t$ we have $1\leq N_k^{(i)}(t)< d_{\gamma}^{(i)}t.$ Thus considering $D_t=\frac{\log \left(\left(d_{\gamma}^{(i)}+1\right)t\right)}{\log \zeta}$ for any $\zeta>1$ in Lemma \ref{lem:tailApp} we get
\begin{align}
  \P\left(\Big|\widehat{\mu}_k^{(i)}(t)-{\mu}_k\Big|\geq \sigma\sqrt{\frac{2(\xi+1)\log t}{N_k^{(i)}(t)}}\right)\leq \frac{1}{\log \zeta}\frac{\log \left((d_{\gamma}^{(i)}+1)t\right)}{t^{(\xi+1)\left(1-\frac{(\zeta-1)^2}{16}\right)}}.\label{eq:modtailMP}
\end{align}
The proof of Theorem \ref{thm:regretMP} follows from Lemmas \ref{lem:tailsum}, \ref{lem:RegDecompMP} and (\ref{eq:modtailMP}).

%%%%%%%%%%%%%%%%%%%%%%%

\section{Proof of Theorem \ref{thm:comcostMP}}\label{sec:comcostMP}
Following a similar approach to the proof of Theorem \ref{thm:comcostB} we obtain
\begin{align}
\sum_{i=1}^N\sum_{t=1}^{T-1}\P\left(A_{t}^{(i)}\neq 1,A_{t}^{(i)}\neq \argmax_{k\in [K]}\widehat{\mu}_{k}^{(i)}(t-1)\right)\leq \sum_{i=1}^N\sum_{t=1}^{T-1}\P\left(A_{t}^{(i)}\neq 1\right)\leq \sum_{i=1}^N\sum_{k=2}^K\expe \left[n_{k}^{(i)}(T)\right]. \label{eq:subOptMP}
\end{align}
Similarly we get 
\begin{align}
 \sum_{t=1}^{T-1}\P\left(A_{t}^{(i)}=1,A_{t}^{(i)}\neq \argmax_{k\in [K]}\widehat{\mu}_{k}^{(i)}(t-1)\right)& \leq   \frac{8\sigma(\xi+1)}{\Bar{\Delta}^2}\log T+2\sum_{t=1}^T\frac{1}{\log \zeta}\frac{\log \left((d_{\gamma}^{(i)}+1)t\right)}{t^{(\xi+1)\left(1-\frac{(\zeta-1)^2}{16}\right)}}\label{eq:pulloptMP}
\end{align}
From (\ref{eq:subOptMP}) and (\ref{eq:pulloptMP}) we have
\begin{align}
\sum_{i=1}^N\sum_{t=1}^T\P\left(A_{t}^{(i)}\neq \argmax_{k\in [K]}\widehat{\mu}_{k}^{(i)}(t-1)\right)\leq \sum_{i=1}^N\sum_{k=2}^K\expe \left[n_{k}^{(i)}(T)\right]
\nonumber\\
+\sum_{i=1}^N\frac{8\sigma(\xi+1)}{\Bar{\Delta}^2}\log T+2\sum_{i=1}^N\sum_{t=1}^T\frac{1}{\log \zeta}\frac{\log \left((d_{\gamma}^{(i)}+1)t\right)}{t^{(\xi+1)\left(1-\frac{(\zeta-1)^2}{16}\right)}}\label{eq:initiated}
\end{align}
Note that (\ref{eq:initiated}) is the expected number of messages initiated by all the agents. Recall that in ComEx-MPUCB a message initiated by agent $i$ is subsequently passed by agents within a $\gamma-1$ distance in graph $G.$ Thus we have
\begin{align}
\expe \left[L(T)\right]\leq \sum_{i=1}^N(d_{\gamma-1}^{(i)}+1)\sum_{t=1}^T\P\left(A_{t}^{(i)}\neq \argmax_{k\in [K]}\widehat{\mu}_{k}^{(i)}(t-1)\right)\label{eq:repeatmessage}
\end{align}
From (\ref{eq:initiated}) and (\ref{eq:repeatmessage}) we have
\begin{align}
\expe \left[L(T)\right]\leq \sum_{i=1}^N(d_{\gamma-1}^{(i)}+1)\sum_{k=2}^K\expe \left[n_{k}^{(i)}(T)\right]
+\sum_{i=1}^N(d_{\gamma-1}^{(i)}+1)\frac{8\sigma(\xi+1)}{\Bar{\Delta}^2}\log T
\nonumber\\
+2\sum_{i=1}^N(d_{\gamma-1}^{(i)}+1)\sum_{t=1}^T\frac{1}{\log \zeta}\frac{\log \left((d_{\gamma}^{(i)}+1)t\right)}{t^{(\xi+1)\left(1-\frac{(\zeta-1)^2}{16}\right)}}\label{eq:comregMP}
\end{align}
From (\ref{eq:modtailMP}), (\ref{eq:comregMP}) and Lemma \ref{lem:RegDecompMP} we have
\begin{align}
\expe \left[L(T)\right]\leq \sum_{i=1}^N(d_{\gamma-1}^{(i)}+1)\sum_{k=2}^K\left(\Bar{\chi}(G_{\gamma}) \eta_k+N+(N-\Bar{\chi}(G_{\gamma}))(\gamma-1)\right)
\nonumber
\\
+\sum_{i=1}^N(d_{\gamma-1}^{(i)}+1)\frac{8\sigma(\xi+1)}{\Bar{\Delta}^2}\log T
+2K\sum_{i=1}^N(d_{\gamma-1}^{(i)}+1)\sum_{t=1}^T\frac{1}{\log \zeta}\frac{\log \left((d_{\gamma}^{(i)}+1)t\right)}{t^{(\xi+1)\left(1-\frac{(\zeta-1)^2}{16}\right)}}\label{eq:regfinal}
\end{align}
Recall that $\eta_k=\frac{8\sigma(\xi+1)}{\Delta_k^2}\log T.$ Thus the proof of Theorem \ref{thm:comcostMP} follows from (\ref{eq:regfinal}) and Lemma \ref{lem:tailsum}.
%%%%%%%%%%%%%%%%%%%%%%%

\section{Proof of Theorem \ref{thm:regretLFUCB}} \label{sec:regretLFUCB}
We follow a similar approach to proof of Theorem \ref{thm:regretMP}. We begin the proof by providing a lemma similar to Lemma \ref{lem:RegDecompMP}.

\begin{lemma}\label{lem:RegDecompLF}
Let $\Bar{\gamma}(G_{\gamma})$ is the dominating number of graph $G_{\gamma}.$ Let $\eta_k=\left(\frac{8(\xi+1)\sigma^2}{\Delta^2_k}\right)\log T.$ Then we have
\begin{align*}
    \sum_{i=1}^N\expe[n^{(i)}_k(T)]   \leq  \Bar{\gamma}(G_{\gamma})\eta_k+N+(N-\Bar{\gamma}(G_{\gamma}))(3\gamma-1)
    \\
     +\sum_{i\in V^{\prime}_{\gamma}}\left(\Big |\mathcal{N}_{\gamma}^{(i)}\Big |+1\right)\sum_{t=1}^{T-1}\left[\P \left(\widehat{\mu}_{1}^{(i)}(t)\leq \mu_{1}-C_{1}^{(i)}(t)\right)+\P \left(\widehat{\mu}_{k}^{(i)}(t)\geq \mu_{k}+C_{k}^{(i)}(t)\right)\right]
\end{align*}
where $V^{\prime}_{\gamma}$ is the maximal dominating set of $G_{\gamma}$ and $\mathcal{N}_{\gamma}^{(i)}$ is the set of followers of leader $i.$
\end{lemma}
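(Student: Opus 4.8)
The plan is to mirror the clique-based decomposition of Lemma~\ref{lem:RegDecompMP}, replacing the non-overlapping clique cover of $G_{\gamma}$ by the leader-follower partition induced by the dominating set $V^{\prime}_{\gamma}$. Since $V^{\prime}_{\gamma}$ dominates $G_{\gamma}$, every agent is either a leader or is assigned to exactly one leader within distance $\gamma$ in $G$, so $[N]$ splits into the $\Bar{\gamma}(G_{\gamma})$ groups $\{i\}\cup\mathcal{N}^{(i)}_{\gamma}$ with $i\in V^{\prime}_{\gamma}$, and $\sum_{i\in V^{\prime}_{\gamma}}|\mathcal{N}^{(i)}_{\gamma}|=N-\Bar{\gamma}(G_{\gamma})$. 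First I would write $\sum_{i=1}^N\expe[n^{(i)}_k(T)]=\sum_{i\in V^{\prime}_{\gamma}}\big(\expe[n^{(i)}_k(T)]+\sum_{j\in\mathcal{N}^{(i)}_{\gamma}}\expe[n^{(j)}_k(T)]\big)$ and analyse each group separately.

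The structural fact driving the proof is that a follower $j\in\mathcal{N}^{(i)}_{\gamma}$ plays $A^{(i)}_{t-d(i,j)}$ and reports its pull back to $i$ exactly when the leader's corresponding pull was instantaneously suboptimal; hence every exploratory arm-$k$ pull of leader $i$ is copied by all $|\mathcal{N}^{(i)}_{\gamma}|$ followers and eventually contributes $|\mathcal{N}^{(i)}_{\gamma}|+1$ observations to the leader's counter $N^{(i)}_k$. Using this I would define, in analogy with $\tau_{k,\mathcal{C}}$, a group threshold $\tau_{k,i}$ as the last $t$ at which the total number of arm-$k$ messages initiated by $\{i\}\cup\mathcal{N}^{(i)}_{\gamma}$ is at most $\eta_k+|\mathcal{N}^{(i)}_{\gamma}|(3\gamma-1)$, so that $N^{(i)}_k(t)>\eta_k$ for all $t>\tau_{k,i}$. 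The per-follower buffer $3\gamma-1$ absorbs the initial copying-delay boundary (at most $d(i,j)\le\gamma$ spurious pulls before $j$ receives any instruction) together with the leader$\to$follower$\to$leader round-trip delay (at most $2\gamma-1$ steps) during which the leader has not yet registered the echoed pulls and may keep exploring.

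With $\tau_{k,i}$ fixed I would split each group member's pulls exactly as in \eqref{eq:noMP}--\eqref{eq:addtailMP}: pulls made while arm $k$ is instantaneously suboptimal are charged against the threshold $\eta_k+|\mathcal{N}^{(i)}_{\gamma}|(3\gamma-1)$, whereas pulls made while arm $k$ is (wrongly) the instantaneous maximiser are bounded through \eqref{eq:condeqMP} and \eqref{eq:optsamMP} by $\P(\widehat{\mu}_{1}^{(i)}(t)\le\mu_{1}-C_{1}^{(i)}(t))+\P(\widehat{\mu}_{k}^{(i)}(t)\ge\mu_{k}+C_{k}^{(i)}(t))$. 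Because only the leader runs UCB and each follower merely copies it, these tail events are generated solely by the leader's estimates and are inherited by all its followers, which is precisely why the tail sum ranges only over $i\in V^{\prime}_{\gamma}$ and carries the factor $|\mathcal{N}^{(i)}_{\gamma}|+1$. Summing the thresholds over the $\Bar{\gamma}(G_{\gamma})$ groups yields $\Bar{\gamma}(G_{\gamma})\eta_k$, summing the delay buffers yields $(N-\Bar{\gamma}(G_{\gamma}))(3\gamma-1)$, the off-by-one and initialisation terms contribute the additive $N$, and the post-threshold pulls are handled by Lemma~\ref{lem:tailrestate}, giving the stated bound.

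I expect the main obstacle to be the delay bookkeeping behind the constant $3\gamma-1$: one must trace a single exploratory decision of the leader through the full leader$\to$follower$\to$leader feedback loop and argue that $N^{(i)}_k$ never lags the group's true arm-$k$ count by more than $|\mathcal{N}^{(i)}_{\gamma}|(3\gamma-1)$, while simultaneously verifying that the communication indicators now entering $N^{(i)}_k$ (which include the delayed follower reports) remain previsible, so that the tail bound of Lemma~\ref{lem:tailApp} continues to apply verbatim.
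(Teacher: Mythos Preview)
Your proposal is correct and follows essentially the same route as the paper: partition agents by leader, use $A_t^{(j)}=A_{t-d(i,j)}^{(i)}$ to reduce follower pulls to leader pulls, define a per-group threshold after which $N_k^{(i)}(t)>\eta_k$, split pre/post threshold, and handle the ``wrong-max'' pulls via the same two tail events as in \eqref{eq:condeqMP}--\eqref{eq:optsamMP}. The only cosmetic difference is the bookkeeping for the $3\gamma-1$ constant: the paper puts $|\mathcal{N}^{(i)}_{\gamma}|(\gamma-1)$ in the threshold and picks up the remaining $2d(i,j)\le 2\gamma$ per follower as a boundary term when shifting the follower's time index to the leader's, whereas you fold the whole $3\gamma-1$ into the threshold; both yield the same bound.
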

\begin{proof}
Recall that $V^{\prime}_{\gamma}$ is the maximal dominating set of $G_{\gamma}.$ Let $\mathcal{N}_{\gamma}^{(i)}$ be the set of followers of leader $i.$ Then for each suboptimal arm $k > 1$ we have
\begin{align}
    \sum_{i=1}^N\expe[n^{(i)}_k(T)]
    & = \sum_{i\in V^{\prime}_{\gamma}}\left(\sum_{t=1}^T \P\left(A^{(i)}_t = k\right)+\sum_{j\in \mathcal{N}_{\gamma}^{(i)} }\sum_{t=1}^T \P\left(A^{(j)}_t = k\right)\right) \label{eq:rearrangeLF}
\end{align}

 Let ${\tau}^{(i)}_{k}$ be the maximum time step such that the total number of times agent $i$ pulls arm $k$ and the number of times agents in $\mathcal{N}_{\gamma}^{(i)}$ initiated messages about pulls from arm $k$ is at most $\eta_k+\mathcal{N}_{\gamma}^{(i)}(\gamma-1).$ This can be stated as \begin{align*}
\tau^{(i)}_{k} := \max \left\{t\in [T] :  \sum_{\tau=1}^{t}\indicate{A_{\tau}^{(i)}=k}+\sum_{j\in \mathcal{N}_{\gamma}^{(i)}}\sum_{\tau=1}^t\indicate{A_{\tau}^{(i)}=k,A_{\tau}^{(i)}\neq \argmax_{l\in [K]}\widehat{\mu}_{l}^{(i)}(\tau-1)}\right.
\\
\left.
\leq \eta_k+\mathcal{N}_{\gamma}^{(i)}(\gamma-1)\right\}.
\end{align*} 
Then we have $
N_k^{(i)}(t)> \eta_k, \forall t> \tau^{(i)}_{k}.$ 
We analyse the expected number of times all agents pull suboptimal arm $k$ as follows. Let $d(i,j)$ be the distance between agents $i$ and $j$ in graph $G.$ Then note that for any $j\in \mathcal{N}_{\gamma}^{(i)}$ we have $A_t^{(j)}=A_{t-d(i,j)}^{(i)}$ and $d(i,j)\leq \gamma.$ 
\begin{align}
\sum_{i\in V^{\prime}_{\gamma}}\left\{\sum_{t=1}^T \indicate{A^{(i)}_t = k}+\sum_{j\in \mathcal{N}_{\gamma}^{(i)} }\sum_{t=1}^T \indicate{A^{(j)}_t = k}\right\}  \leq \sum_{i\in V^{\prime}_{\gamma}}\left\{\sum_{t=1}^{\tau^{(i)}_{k}} \indicate{A^{(i)}_t = k}
\right.
\nonumber\\
\left.
+\sum_{j\in \mathcal{N}_{\gamma}^{(i)} }\sum_{t=d(i,j)}^{\tau^{(i)}_{k}} \indicate{A^{(j)}_t = k}\right\}
\nonumber\\
+\sum_{i\in V^{\prime}_{\gamma}}\left\{\sum_{t>\tau^{(i)}_{k}}^T \indicate{A^{(i)}_t = k}+\sum_{j\in \mathcal{N}_{\gamma}^{(i)} }\sum_{t>\tau^{(i)}_{k}}^{T-d(i,j)} \indicate{A^{(i)}_t = k}\right\} +\sum_{i\in V^{\prime}_{\gamma}}\sum_{j\in \mathcal{N}_{\gamma}^{(i)}}2d(i,j)
\nonumber\\
\leq\sum_{i\in V^{\prime}_{\gamma}}\left\{\sum_{t=1}^{\tau^{(i)}_{k}} \indicate{A^{(i)}_t = k}+\sum_{j\in \mathcal{N}_{\gamma}^{(i)} }\sum_{t=d(i,j)}^{\tau^{(i)}_{k}} \indicate{A^{(j)}_t = k}\right\}+\sum_{i\in V^{\prime}_{\gamma}}\left(\Big |\mathcal{N}_{\gamma}^{(i)}\Big |+1\right)\sum_{t>\tau^{(i)}_{k}}^T \indicate{A^{(i)}_t = k}
\nonumber\\
+2(N -\Bar{\gamma}(G_{\gamma}))\gamma
\label{eq:regdecLF}
\end{align}
% Taking the expectation of (\ref{eq:regdecLF}) we have
% \begin{align}
% \sum_{\mathcal{C}\in \EuScript{C}}\sum_{i\in \mathcal{C}}\sum_{t=1}^T \P\left(A^{(i)}_t = k\right)  =   \sum_{\mathcal{C}\in \EuScript{C}}\sum_{i\in \mathcal{C}}\sum_{t=1}^{\tau_{k,\mathcal{C}}} \P\left(A^{(i)}_t = k\right)+  \sum_{\mathcal{C}\in \EuScript{C}}\sum_{i\in \mathcal{C}}\sum_{t>\tau_{k,\mathcal{C}}}^T \P\left(A^{(i)}_t = k,N_k^{(i)} >\eta_k\right)\label{eq:regDecComLF}
% \end{align}
Now we proceed to upper bound the first two terms of right hand side of (\ref{eq:regdecLF}) as follows. Note that we have
\begin{align}
\sum_{i\in V^{\prime}_{\gamma}}\left\{\sum_{t=1}^{\tau^{(i)}_{k}} \indicate{A^{(i)}_t = k}+\sum_{j\in \mathcal{N}_{\gamma}^{(i)} }\sum_{t=d(i,j)}^{\tau^{(i)}_{k}} \indicate{A^{(j)}_t = k}\right\} 
\nonumber\\
=\sum_{i\in V^{\prime}_{\gamma}}\left\{\sum_{t=1}^{\tau^{(i)}_{k}} \indicate{A^{(i)}_t = k}+\sum_{j\in \mathcal{N}_{\gamma}^{(i)} }\sum_{t=d(i,j)}^{\tau^{(i)}_{k}} \indicate{A^{(j)}_t = k,A_{t-d(i,j)}^{(i)}\neq \argmax_{l\in [K]}\widehat{\mu}_{l}^{(i)}(t-d(i,j)-1)} \right.
\nonumber\\
\left.+  \sum_{j\in \mathcal{N}_{\gamma}^{(i)} }\sum_{t=d(i,j)}^{\tau^{(i)}_{k}} \indicate{A^{(j)}_t = k,A_{t-d(i,j)}^{(i)}= \argmax_{l\in [K]}\widehat{\mu}_{l}^{(i)}(t-d(i,j)-1)}\right\}
\nonumber\\
\leq \sum_{i\in V^{\prime}_{\gamma}}\left(\eta_k+\mathcal{N}_{\gamma}^{(i)}(\gamma-1)\right)+ \sum_{i\in V^{\prime}_{\gamma}}\Big |\mathcal{N}_{\gamma}^{(i)}\Big |\sum_{t=1}^{\tau_{k}^{(i)}}\indicate{A_{t}^{(i)}=k,A_{t}^{(i)}= \argmax_{l\in [K]}\widehat{\mu}_{l}^{(i)}(t-1)}\label{eq:noLF}
\end{align}

Taking the expectation of (\ref{eq:regdecLF})  and (\ref{eq:noLF}) we have
\begin{align}
\sum_{i\in V^{\prime}_{\gamma}}\left(\sum_{t=1}^T\P\left(A_t^{(i)}=k\right)+\sum_{j\in \mathcal{N}_{\gamma}^{(i)}}\sum_{t=1}^T\P\left(A_t^{(j)}=k\right)\right)\leq \Bar{\gamma}(G_{\gamma})\eta_k+(N-\Bar{\gamma}(G))(3\gamma-1)
\nonumber\\
+\sum_{i\in V^{\prime}_{\gamma}}\left(\Big |\mathcal{N}_{\gamma}^{(i)}\Big |+1\right)\sum_{t>\tau^{(i)}_{k}}^T \P\left(A^{(i)}_t = k\right)+ \sum_{i\in V^{\prime}_{\gamma}}\Big |\mathcal{N}_{\gamma}^{(i)}\Big |\sum_{t=1}^{\tau_{k}^{(i)}}\P\left(A_{t}^{(i)}=k,A_{t}^{(i)}= \argmax_{l\in [K]}\widehat{\mu}_{l}^{(i)}(t-1)\right)
\label{eq:suboptLF}
\end{align}

Now we proceed to upper bound last term of (\ref{eq:suboptLF}) as follows. Note that for any suboptimal arm $k$ we have,
\begin{align}
\P&\left(A_{t+1}^{(i)}=k,A_{t+1}^{(i)}= \argmax_{l\in [K]}\widehat{\mu}_{l}^{(i)}(t)\right) 
\nonumber\\\ 
&  \leq\P\left(\widehat{\mu}_k^{(i)}(t)+C_k^{(i)}(t)\geq \widehat{\mu}_1^{(i)}(t)+C_1^{(i)}(t), \widehat{\mu}_k^{(i)}(t)\geq \widehat{\mu}_1^{(i)}(t), \:\:
\widehat{\mu}_1^{(i)}(t)\leq \mu_{1}-C_1^{(i)}(t)\right)
\nonumber\\
& +\P\left(\widehat{\mu}_k^{(i)}(t)+C_k^{(i)}(t)\geq \widehat{\mu}_1^{(i)}(t)+C_1^{(i)}(t), \widehat{\mu}_k^{(i)}(t)\geq \widehat{\mu}_1^{(i)}(t), \:\:
\widehat{\mu}_1^{(i)}(t)> \mu_{1}-C_1^{(i)}(t)\right)
\nonumber\\\ 
&  \leq \P\left(\widehat{\mu}_1^{(i)}(t)\leq \mu_{1}-C_1^{(i)}(t)\right)
\nonumber\\\ 
&+\P\left(\widehat{\mu}_k^{(i)}(t)+C_k^{(i)}(t)\geq \widehat{\mu}_1^{(i)}(t)+C_1^{(i)}(t), \widehat{\mu}_k^{(i)}(t)\geq \widehat{\mu}_1^{(i)}(t), \:\:
\widehat{\mu}_1^{(i)}(t)> \mu_{1}-C_1^{(i)}(t)\right)\label{eq:condeqLF}
\end{align}
Now we proceed to upper bound the last term of (\ref{eq:condeqLF}) as follows. Note that we have
\begin{align}
\P& \left(\widehat{\mu}_k^{(i)}(t)+C_k^{(i)}(t)\geq \widehat{\mu}_1^{(i)}(t)+C_1^{(i)}(t), \widehat{\mu}_k^{(i)}(t)\geq \widehat{\mu}_1^{(i)}(t), \:\:
\widehat{\mu}_1^{(i)}(t)> \mu_{1}-C_1^{(i)}(t)\right)
\nonumber \\
& \leq\P\left(\widehat{\mu}_k^{(i)}(t)+C_k^{(i)}(t)>\mu_{1}-C_1^{(i)}(t)+C_k^{(i)}(t)\right)
\nonumber \\
& \leq \P\left(\widehat{\mu}_k^{(i)}(t)\geq \mu_k+C_k^{(i)}(t)\right).\label{eq:optsamLF}
\end{align}

From (\ref{eq:condeqLF}) and (\ref{eq:optsamLF}) we have
\begin{align}
\P\left(A_{t}^{(i)}=k,A_{t}^{(i)}= \argmax_{l\in [K]}\widehat{\mu}_{l}^{(i)}(t-1)\right)  \leq \P\left(\widehat{\mu}_1^{(i)}(t-1)\leq \mu_{1}-C_1^{(i)}(t-1)\right)
\nonumber\\\ +\P\left(\widehat{\mu}_k^{(i)}(t-1)\geq \mu_k+C_k^{(i)}(t-1)\right). \label{eq:addtailLF}
\end{align}

Recall that $
N_k^{(i)}(t)> \eta_k, \forall t> \tau^{(i)}_{k}.$  Thus from (\ref{eq:suboptLF}), (\ref{eq:addtailLF}) and Lemma \ref{lem:tailrestate} we have
\begin{align}
\sum_{i\in V^{\prime}_{\gamma}}\left(\sum_{t=1}^T\P\left(A_t^{(i)}=k\right)+\sum_{j\in \mathcal{N}_{\gamma}^{(i)}}\sum_{t=1}^T\P\left(A_t^{(j)}=k\right)\right)\leq \Bar{\gamma}(G_{\gamma})\eta_k+N+(N-\Bar{\gamma}(G))(3\gamma-1)
\nonumber\\
+\sum_{i\in V^{\prime}_{\gamma}}\left(\Big |\mathcal{N}_{\gamma}^{(i)}\Big |+1\right)\sum_{t=1}^{T-1} \left[\P \left(\widehat{\mu}_{1}^{(i)}(t)\leq \mu_{1}-C_{1}^{(i)}(t)\right)+\P \left(\widehat{\mu}_{k}^{(i)}(t)\geq \mu_{k}+C_{k}^{(i)}(t)\right)\right].
\label{eq:subsamplLF}
\end{align}

The proof of Lemma \ref{lem:RegDecompLF} follows from (\ref{eq:rearrangeLF}) and (\ref{eq:subsamplLF}). 
\end{proof}

Now we proceed to prove Theorem \ref{thm:regretLFUCB} as follows. We start by obtaining a modified tail bound similar to the result in Lemma \ref{lem:tailApp}. Note that $\forall i\in V^{\prime}_{\gamma}$ we have $1\leq N_k^{(i)}(t)< d_{\gamma}^{(i)}t.$ Thus considering $D_t=\frac{\log \left(\left(d_{\gamma}^{(i)}+1\right)t\right)}{\log \zeta}$ for any $\zeta>1$ in Lemma \ref{lem:tailApp} we get
\begin{align}
  \P\left(\Big|\widehat{\mu}_k^{(i)}(t)-{\mu}_k\Big|\geq \sigma\sqrt{\frac{2(\xi+1)\log t}{N_k^{(i)}(t)}}\right)\leq \frac{1}{\log \zeta}\frac{\log \left((d_{\gamma}^{(i)}+1)t\right)}{t^{(\xi+1)\left(1-\frac{(\zeta-1)^2}{16}\right)}}.\label{eq:modtailLF}
\end{align}
The proof of Theorem \ref{thm:regretLFUCB} follows from Lemmas \ref{lem:tailsum}, \ref{lem:RegDecompLF} and (\ref{eq:modtailLF}).

%%%%%%%%%%%%%%%%%%%%%%%
\section{Proof of Theorem \ref{thm:comcostLF}}\label{sec:comcostLFUCB}
Following a similar approach to the proof of Theorem \ref{thm:comcostMP} we obtain
\begin{align}
\sum_{i=1}^N\sum_{t=1}^{T-1}\P\left(A_{t}^{(i)}\neq 1,A_{t}^{(i)}\neq \argmax_{k\in [K]}\widehat{\mu}_{k}^{(i)}(t-1)\right)\leq \sum_{i=1}^N\sum_{t=1}^{T-1}\P\left(A_{t}^{(i)}\neq 1\right)\leq \sum_{i=1}^N\sum_{k=2}^K\expe \left[n_{k}^{(i)}(T)\right]. \label{eq:subOptLF}
\end{align}
Similarly we get 
\begin{align}
 \sum_{t=1}^{T-1}\P\left(A_{t}^{(i)}=1,A_{t}^{(i)}\neq \argmax_{k\in [K]}\widehat{\mu}_{k}^{(i)}(t-1)\right)& \leq   \frac{8\sigma(\xi+1)}{\Bar{\Delta}^2}\log T+2\sum_{t=1}^T\frac{1}{\log \zeta}\frac{\log \left((d_{\gamma}^{(i)}+1)t\right)}{t^{(\xi+1)\left(1-\frac{(\zeta-1)^2}{16}\right)}}\label{eq:pulloptLF}
\end{align}
From (\ref{eq:subOptLF}) and (\ref{eq:pulloptLF}) we have
\begin{align}
\sum_{i=1}^N\sum_{t=1}^{T-1}\P\left(A_{t}^{(i)}\neq \argmax_{k\in [K]}\widehat{\mu}_{k}^{(i)}(t-1)\right)\leq \sum_{i=1}^N\sum_{k=2}^K\expe \left[n_{k}^{(i)}(T)\right]
\nonumber\\
+\sum_{i=1}^N\frac{8\sigma(\xi+1)}{\Bar{\Delta}^2}\log T+2\sum_{i=1}^N\sum_{t=1}^{T-1}\frac{1}{\log \zeta}\frac{\log \left((d_{\gamma}^{(i)}+1)t\right)}{t^{(\xi+1)\left(1-\frac{(\zeta-1)^2}{16}\right)}}\label{eq:initiatedLF}
\end{align}
Note that (\ref{eq:initiatedLF}) is the expected number of messages initiated by all the agents. Recall that in ComEx-LFUCB a message initiated by agent $i$ is subsequently passed by agents within a $\gamma-1$ distance in graph $G.$ Thus we have
\begin{align}
\expe \left[L(T)\right]\leq \sum_{i=1}^N(d_{\gamma-1}^{(i)}+1)\sum_{t=1}^{T-1}\P\left(A_{t}^{(i)}\neq \argmax_{k\in [K]}\widehat{\mu}_{k}^{(i)}(t-1)\right)\label{eq:repeatmessageLF}
\end{align}
From (\ref{eq:initiatedLF}) and (\ref{eq:repeatmessageLF}) we have
\begin{align}
\expe \left[L(T)\right]\leq \sum_{i=1}^N(d_{\gamma-1}^{(i)}+1)\sum_{k=2}^K\expe \left[n_{k}^{(i)}(T)\right]
+\sum_{i=1}^N(d_{\gamma-1}^{(i)}+1)\frac{8\sigma(\xi+1)}{\Bar{\Delta}^2}\log T
\nonumber\\
+2\sum_{i=1}^N(d_{\gamma-1}^{(i)}+1)\sum_{t=1}^T\frac{1}{\log \zeta}\frac{\log \left((d_{\gamma}^{(i)}+1)t\right)}{t^{(\xi+1)\left(1-\frac{(\zeta-1)^2}{16}\right)}}\label{eq:comregLF}
\end{align}
From (\ref{eq:modtailLF}), (\ref{eq:comregLF}) and Lemma \ref{lem:RegDecompLF} we have
\begin{align}
\expe \left[L(T)\right]\leq \sum_{i=1}^N(d_{\gamma-1}^{(i)}+1)\sum_{k=2}^K \Bar{\gamma}(G_{\gamma})\eta_k+N+(N-\Bar{\gamma}(G_{\gamma}))(3\gamma-1)
\nonumber\\
+\sum_{i=1}^N(d_{\gamma-1}^{(i)}+1)\frac{8\sigma(\xi+1)}{\Bar{\Delta}^2}\log T
+2K\sum_{i=1}^N(d_{\gamma-1}^{(i)}+1)\sum_{t=1}^T\frac{1}{\log \zeta}\frac{\log \left((d_{\gamma}^{(i)}+1)t\right)}{t^{(\xi+1)\left(1-\frac{(\zeta-1)^2}{16}\right)}}\label{eq:regfinalLF}
\end{align}
Recall that $\eta_k=\frac{8\sigma(\xi+1)}{\Delta_k^2}\log T.$ Thus the proof of Theorem \ref{thm:comcostLF} follows from (\ref{eq:regfinalLF}) and Lemma \ref{lem:tailsum}.
%%%%%%%%%%%%%%%%%%%%%%%

\section{Regret Under Full Communication}\label{sec:regfullcom}
In this section we provide theoretical bounds for group regret of Full-UCB, Full-MPUCB and Full-LFUCB as follows.

\subsection{Group Regret for Full-UCB}
We start by proving a Lemma similar to Lemma \ref{lem:RegDecomp}. 
\begin{lemma}\label{lem:comExFull}
Let $\eta_k=\left(\frac{8(\xi+1)\sigma^2}{\Delta^2_k}\right)\log T.$ Let $\EuScript{C}$ be a non overlapping clique covering and $\Bar{\chi}(G)$ be the clique covering number of the graph $G.$ Let $\tau_{k,\mathcal{C}}$ be the maximum time step such that the total number of pulls from arm $k$ by agents in the clique $\mathcal{C}\in \EuScript{C}$ is at most $\eta_k.$ Define $\tau_k:=\min_{\mathcal{C}}\tau_{k,\mathcal{C}}.$  Then we have
\begin{align*}
    \sum_{i=1}^N\expe[n^{(i)}_k(T)]  \leq \Bar{\chi}(G) \eta_k+N+\sum_{i=1}^N\sum_{t>\tau_{k}}^{T-1}\left[\P \left(\widehat{\mu}_{1}^{(i)}(t)\leq \mu_{1}-C_{1}^{(i)}(t)\right)+\P \left(\widehat{\mu}_{k}^{(i)}(t)\geq \mu_{k}+C_{k}^{(i)}(t)\right)\right]
\end{align*}
\end{lemma}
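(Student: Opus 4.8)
The plan is to mirror the proof of Lemma \ref{lem:RegDecomp}, exploiting the fact that full communication removes the need to separately control suboptimal pulls made while an arm is instantaneously optimal. First I would rearrange the group pull count over a non-overlapping clique covering $\EuScript{C}$ of $G$, writing
\[
\sum_{i=1}^N\expe[n_k^{(i)}(T)]=\sum_{\mathcal{C}\in\EuScript{C}}\sum_{i\in\mathcal{C}}\sum_{t=1}^T\P\left(A_t^{(i)}=k\right),
\]
exactly as in the first step of Lemma \ref{lem:RegDecomp}. The key structural simplification under full communication is that every agent broadcasts its reward at every time step, so for agents in a common clique $\mathcal{C}$ we have $(i,j)\in E_t$ for all $i,j\in\mathcal{C}$ and all $t$; consequently $N_k^{(i)}(t)\geq\sum_{j\in\mathcal{C}}n_k^{(j)}(t)$ for every $i\in\mathcal{C}$. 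Hence once the clique's cumulative number of pulls of arm $k$ exceeds $\eta_k$ — that is, for $t>\tau_{k,\mathcal{C}}$ — we automatically have $N_k^{(i)}(t)>\eta_k$ for every member $i\in\mathcal{C}$.

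Next I would split each clique's inner sum at $\tau_{k,\mathcal{C}}$. The contribution up to $\tau_{k,\mathcal{C}}$ is at most $\eta_k$ by the definition of $\tau_{k,\mathcal{C}}$, and summing over the $\Bar{\chi}(G)$ cliques produces the leading $\Bar{\chi}(G)\eta_k$. For the tail $t>\tau_{k,\mathcal{C}}$, the event $\{A_t^{(i)}=k\}$ now comes paired with $N_k^{(i)}(t-1)>\eta_k$; after the reindexing $t\mapsto t+1$ needed to invoke Lemma \ref{lem:tailrestate}, one boundary term per agent (each at most $1$) remains, which contributes the additive $N$ after summing over all $N$ agents, while the remaining terms are bounded by $\P\left(\widehat{\mu}_1^{(i)}(t)\leq\mu_1-C_1^{(i)}(t)\right)+\P\left(\widehat{\mu}_k^{(i)}(t)\geq\mu_k+C_k^{(i)}(t)\right)$. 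Crucially, unlike the ComEx analysis in Lemma \ref{lem:RegDecomp}, there is no residual ``$A_t^{(i)}=\argmax$'' term, so no tail probabilities accumulate for $t\leq\tau_{k,\mathcal{C}}$; this is precisely what sharpens the lower summation limit in the final bound.

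Finally, I would sum the tail terms over cliques. Since the tail probabilities are nonnegative and $\tau_k=\min_{\mathcal{C}}\tau_{k,\mathcal{C}}\leq\tau_{k,\mathcal{C}}$ for every $\mathcal{C}$, I would relax each clique-specific lower limit $\tau_{k,\mathcal{C}}$ down to the common value $\tau_k$, which lets the double sum $\sum_{\mathcal{C}}\sum_{i\in\mathcal{C}}$ collapse into a single $\sum_{i=1}^N$ over the common range $t>\tau_k$, yielding the stated bound. The main obstacle — really the only subtle point — is this last reindexing: one must check that enlarging the summation range from $t>\tau_{k,\mathcal{C}}$ to $t>\tau_k$ is a valid upper bound (it is, by monotonicity of sums of nonnegative terms, which is exactly why $\tau_k$ is introduced), together with carefully bookkeeping the off-by-one at the threshold that generates the additive $N$.
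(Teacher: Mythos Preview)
Your proposal is correct and follows essentially the same route as the paper's proof: rearranging over a clique covering, splitting each clique's sum at $\tau_{k,\mathcal{C}}$, bounding the early portion by $\eta_k$ per clique, applying Lemma~\ref{lem:tailrestate} to the late portion, and then relaxing the clique-specific thresholds to the common $\tau_k$. Your explanation of why the ``instantaneously optimal'' residual term from Lemma~\ref{lem:RegDecomp} disappears under full communication, and your accounting of the additive $N$ as a boundary effect of the $t\mapsto t+1$ reindexing, are both accurate and in fact slightly more explicit than the paper's own write-up.
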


\begin{proof}
Let $\EuScript{C}$ be a non overlapping clique covering of the graph $G.$ Then we have
\begin{align}
\sum_{i=1}^N\expe[n_k^{(i)}(T)]=\sum_{\mathcal{C}\in \EuScript{C}}\sum_{i\in \mathcal{C}}\sum_{t=1}^T\P\left(A_t^{(i)}=k\right)\label{eq:regFUll}
\end{align}
 Let $\tau_{k,\mathcal{C}}$ be the maximum time step such that the total number of pulls from arm $k$ by agents in the clique $\mathcal{C}$ is at most $\eta_k.$ This can be stated as $
\tau_{k,\mathcal{C}} := \max \left\{t\in [T] :  \sum_{i\in \mathcal{C}}\sum_{\tau=1}^t\indicate{A_{\tau}^{(i)}=k}\leq \eta_k\right\}.
$
Further for all $i\in \mathcal{C}$ we have $
N_k^{(i)}(t)> \eta_k, \forall t> \tau_{k,\mathcal{C}}.$ 
We analyse the expected number of times all agents pull suboptimal arm $k$ as follows.
\begin{align}
\sum_{\mathcal{C}\in \EuScript{C}}\sum_{i\in \mathcal{C}}\sum_{t=1}^T \indicate{A^{(i)}_t = k}  &= \sum_{\mathcal{C}\in \EuScript{C}}\sum_{i\in \mathcal{C}}\sum_{t=1}^{\tau_{k,\mathcal{C}}} \indicate{A^{(i)}_t = k}+  \sum_{\mathcal{C}\in \EuScript{C}}\sum_{i\in \mathcal{C}}\sum_{t>\tau_{k,\mathcal{C}}}^T \indicate{A^{(i)}_t = k, N_k^{(i)}(t-1) >\eta_k} \label{eq:regdecFull}
\end{align}
Taking the expectation of (\ref{eq:regdecFull}) we have
\begin{align}
\sum_{\mathcal{C}\in \EuScript{C}}\sum_{i\in \mathcal{C}}\sum_{t=1}^T \P\left(A^{(i)}_t = k\right)  &=   \sum_{\mathcal{C}\in \EuScript{C}}\sum_{i\in \mathcal{C}}\sum_{t=1}^{\tau_{k,\mathcal{C}}} \P\left(A^{(i)}_t = k\right)+  \sum_{\mathcal{C}\in \EuScript{C}}\sum_{i\in \mathcal{C}}\sum_{t>\tau_{k,\mathcal{C}}}^T \P\left(A^{(i)}_t = k,N_k^{(i)}(t-1) >\eta_k\right)
\nonumber\\
&\leq \Bar{\chi}(G)\eta_k+N+  \sum_{\mathcal{C}\in \EuScript{C}}\sum_{i\in \mathcal{C}}\sum_{t>\tau_{k,\mathcal{C}}}^{T-1} \P\left(A^{(i)}_{t+1} = k,N_k^{(i)}(t) >\eta_k\right)
\label{eq:regDecComFull}
\end{align}
Let $\tau_k:=\min_{\mathcal{C}}\tau_{k,\mathcal{C}\in\EuScript{C}}.$ Similarly to Lemma \ref{lem:RegDecomp} from (\ref{eq:regFUll}), (\ref{eq:regDecComFull}) and Lemma \ref{lem:tailrestate} we have
\begin{align*}
    \sum_{i=1}^N\expe[n^{(i)}_k(T)]  \leq \Bar{\chi}(G) \eta_k+N+\sum_{i=1}^N\sum_{t>\tau_{k}}^{T-1}\left[\P \left(\widehat{\mu}_{1}^{(i)}(t)\leq \mu_{1}-C_{1}^{(i)}(t)\right)+\P \left(\widehat{\mu}_{k}^{(i)}(t)\geq \mu_{k}+C_{k}^{(i)}(t)\right)\right]
\end{align*}
This concludes the proof of Lemma  \ref{lem:comExFull}.
\end{proof}
Then from Lemmas \ref{lem:tailApp}, \ref{lem:tailsum} and \ref{lem:comExFull} it follows that
\begin{align*}
     \expe \left[R(T)\right]= O\left(K\Bar{\chi}(G)\log T+KN\right).
\end{align*}

%%%%%%%%%%

\subsection{Group Regret for Full-MPUCB}
We start by proving a Lemma similar to Lemma \ref{lem:RegDecompMP}. 

\begin{lemma}\label{lem:comExFullMP}
Let $\eta_k=\left(\frac{8(\xi+1)\sigma^2}{\Delta^2_k}\right)\log T.$ Let $\EuScript{C}_{\gamma}$ be a non overlapping clique covering and $\Bar{\chi}(G_{\gamma})$ be the clique covering number of the graph $G_{\gamma},$ which is the $\gamma^{\mathrm{th}}$ power graph of $G.$ Let $\tau_{k,\mathcal{C}}$ be the maximum time step such that the total number of pulls from arm $k$ by agents in the clique $\mathcal{C}\in \EuScript{C}_{\gamma}$ is at most $\eta_k+(|\mathcal{C}-1|)(\gamma-1).$ Define $\tau_k:=\min_{\mathcal{C}}\tau_{k,\mathcal{C}}.$  Then we have
\begin{align*}
    \sum_{i=1}^N\expe[n^{(i)}_k(T)]  \leq \Bar{\chi}(G_{\gamma}) \eta_k+N+(N-\Bar{\chi}(G_{\gamma}))(\gamma-1)
    \\
    +\sum_{i=1}^N\sum_{t>\tau_{k}}^{T-1}\left[\P \left(\widehat{\mu}_{1}^{(i)}(t)\leq \mu_{1}-C_{1}^{(i)}(t)\right)+\P \left(\widehat{\mu}_{k}^{(i)}(t)\geq \mu_{k}+C_{k}^{(i)}(t)\right)\right]
\end{align*}
\end{lemma}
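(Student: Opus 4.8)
The plan is to mirror the proof of Lemma \ref{lem:comExFull} (the Full-UCB case), replacing the clique covering of $G$ by one of the power graph $G_\gamma$ and inserting the message-passing delay term, exactly as the passage from Lemma \ref{lem:RegDecomp} to Lemma \ref{lem:RegDecompMP} does in the ComEx analysis. First I would fix a non-overlapping clique covering $\EuScript{C}_\gamma$ of $G_\gamma$ and write $\sum_{i=1}^N\expe[n_k^{(i)}(T)]=\sum_{\mathcal{C}\in\EuScript{C}_\gamma}\sum_{i\in\mathcal{C}}\sum_{t=1}^T\P(A_t^{(i)}=k)$, which is valid because the cliques partition the $N$ agents.

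The central step is to justify that for $t>\tau_{k,\mathcal{C}}$ every agent $i\in\mathcal{C}$ satisfies $N_k^{(i)}(t)>\eta_k$. Under full message passing every pull of arm $k$ by a clique-mate eventually reaches agent $i$, but with a propagation delay: two agents in a clique of $G_\gamma$ are within distance $\gamma$ in $G$, and the forwarding/discard rule of ComEx-MPUCB caps the relevant in-transit window at $\gamma-1$ steps. Hence at any time at most $(|\mathcal{C}|-1)(\gamma-1)$ arm-$k$ pulls of the clique remain undelivered to agent $i$, so once the clique's cumulative arm-$k$ pull count exceeds $\eta_k+(|\mathcal{C}|-1)(\gamma-1)$ — which is precisely the threshold defining $\tau_{k,\mathcal{C}}$ — agent $i$ has already observed more than $\eta_k$ of them. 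I expect this delay bookkeeping to be the main obstacle, since it is the only place where the power-graph distance and the discard-after-$\gamma$ forwarding rule enter; everything else is inherited from the Full-UCB argument.

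With this in hand I would split each clique's sum at $\tau_{k,\mathcal{C}}$. The prefix $\sum_{t=1}^{\tau_{k,\mathcal{C}}}$ is bounded deterministically by the total clique pull count $\eta_k+(|\mathcal{C}|-1)(\gamma-1)$, and note that — unlike the ComEx case in Lemma \ref{lem:RegDecompMP} — full communication makes $\tau_{k,\mathcal{C}}$ a threshold on all pulls, so this is the entire prefix contribution, with no residual ``$A_t^{(i)}=\argmax$'' terms to control. For the suffix $t>\tau_{k,\mathcal{C}}$, where $N_k^{(i)}(t)>\eta_k$ holds, I would invoke Lemma \ref{lem:tailrestate} to bound each summand by $\P(\widehat{\mu}_1^{(i)}(t)\leq\mu_1-C_1^{(i)}(t))+\P(\widehat{\mu}_k^{(i)}(t)\geq\mu_k+C_k^{(i)}(t))$.

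Finally I would sum over the cliques. The prefix contributions give $\sum_{\mathcal{C}}\eta_k=\Bar{\chi}(G_\gamma)\eta_k$ together with $\sum_{\mathcal{C}}(|\mathcal{C}|-1)(\gamma-1)=(N-\Bar{\chi}(G_\gamma))(\gamma-1)$, and the usual per-agent boundary accounting in the reindexing of the suffix contributes the additive $N$. Setting $\tau_k:=\min_{\mathcal{C}}\tau_{k,\mathcal{C}}$ lets me enlarge each clique's tail range from $t>\tau_{k,\mathcal{C}}$ to $t>\tau_k$ (valid because the summands are nonnegative and $\tau_k\leq\tau_{k,\mathcal{C}}$) and then reassemble the double sum over all agents as $\sum_{i=1}^N\sum_{t>\tau_k}^{T-1}$, yielding the claimed bound.
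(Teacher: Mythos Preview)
Your proposal is correct and follows essentially the same approach as the paper's proof: partition by a clique covering of $G_\gamma$, split each clique's sum at $\tau_{k,\mathcal{C}}$, bound the prefix by $\eta_k+(|\mathcal{C}|-1)(\gamma-1)$, apply Lemma~\ref{lem:tailrestate} on the suffix where $N_k^{(i)}(t)>\eta_k$, and then aggregate with $\tau_k=\min_{\mathcal{C}}\tau_{k,\mathcal{C}}$. Your delay-bookkeeping justification for why $N_k^{(i)}(t)>\eta_k$ once the clique's pull count exceeds the threshold is in fact more explicit than the paper's, which simply asserts this implication.
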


\begin{proof}
Let $\EuScript{C}_{\gamma}$ be a non overlapping clique covering of the graph $G_{\gamma}.$ Then we have
\begin{align}
\sum_{i=1}^N\expe[n_k^{(i)}(T)]=\sum_{\mathcal{C}\in \EuScript{C}_{\gamma}}\sum_{i\in \mathcal{C}}\sum_{t=1}^T\P\left(A_t^{(i)}=k\right)\label{eq:regFUllMP}
\end{align}
 Let $\tau_{k,\mathcal{C}_{\gamma}}$ be the maximum time step such that the total number of pulls from arm $k$ by agents in the clique $\mathcal{C}$ is at most $\eta_k.$ This can be stated as $
\tau_{k,\mathcal{C}} := \max \left\{t\in [T] :  \sum_{i\in \mathcal{C}}\sum_{\tau=1}^t\indicate{A_{\tau}^{(i)}=k}\leq \eta_k+(|\mathcal{C}|-1)(\gamma-1)\right\}.
$
Further for all $i\in \mathcal{C}$ we have $
N_k^{(i)}(t)> \eta_k, \forall t> \tau_{k,\mathcal{C}}.$ 
We analyse the expected number of times all agents pull suboptimal arm $k$ as follows.
\begin{align}
\sum_{\mathcal{C}\in \EuScript{C}_{\gamma}}\sum_{i\in \mathcal{C}}\sum_{t=1}^T \indicate{A^{(i)}_t = k}  &= \sum_{\mathcal{C}\in \EuScript{C}_{\gamma}}\sum_{i\in \mathcal{C}}\sum_{t=1}^{\tau_{k,\mathcal{C}}} \indicate{A^{(i)}_t = k}+  \sum_{\mathcal{C}\in \EuScript{C}_{\gamma}}\sum_{i\in \mathcal{C}}\sum_{t>\tau_{k,\mathcal{C}}}^T \indicate{A^{(i)}_t = k, N_k^{(i)}(t-1) >\eta_k} \label{eq:regdecFullMP}
\end{align}
Taking the expectation of (\ref{eq:regdecFullMP}) we have
\begin{align}
\sum_{\mathcal{C}\in \EuScript{C}_{\gamma}}\sum_{i\in \mathcal{C}}\sum_{t=1}^T \P\left(A^{(i)}_t = k\right)  &=   \sum_{\mathcal{C}\in \EuScript{C}_{\gamma}}\sum_{i\in \mathcal{C}}\sum_{t=1}^{\tau_{k,\mathcal{C}}} \P\left(A^{(i)}_t = k\right)+  \sum_{\mathcal{C}\in \EuScript{C}_{\gamma}}\sum_{i\in \mathcal{C}}\sum_{t>\tau_{k,\mathcal{C}}}^T \P\left(A^{(i)}_t = k,N_k^{(i)}(t-1) >\eta_k\right)
\nonumber\\
&\leq \Bar{\chi}(G_{\gamma}) \eta_k+N+(N-\Bar{\chi}(G_{\gamma}))(\gamma-1)\\
&+ \sum_{\mathcal{C}\in \EuScript{C}_{\gamma}}\sum_{i\in \mathcal{C}}\sum_{t>\tau_{k,\mathcal{C}}}^{T-1} \P\left(A^{(i)}_{t+1} = k,N_k^{(i)}(t) >\eta_k\right)
\label{eq:regDecComFullMP}
\end{align}
Let $\tau_k:=\min_{\mathcal{C}\in\EuScript{C}_{\gamma}}\tau_{k,\mathcal{C}}.$ Similarly to Lemma \ref{lem:RegDecompMP} from (\ref{eq:regFUllMP}), (\ref{eq:regDecComFullMP}) and Lemma \ref{lem:tailrestate} we have
\begin{align*}
    \sum_{i=1}^N\expe[n^{(i)}_k(T)]  \leq \Bar{\chi}(G_{\gamma}) \eta_k+N+(N-\Bar{\chi}(G_{\gamma}))(\gamma-1)\\
    +\sum_{i=1}^N\sum_{t>\tau_{k}}^{T-1}\left[\P \left(\widehat{\mu}_{1}^{(i)}(t)\leq \mu_{1}-C_{1}^{(i)}(t)\right)+\P \left(\widehat{\mu}_{k}^{(i)}(t)\geq \mu_{k}+C_{k}^{(i)}(t)\right)\right]
\end{align*}
This concludes the proof of Lemma  \ref{lem:comExFullMP}.
\end{proof}
Then from Lemmas \ref{lem:tailApp}, \ref{lem:tailsum} and \ref{lem:comExFullMP} it follows that
\begin{align*}
     \expe \left[R(T)\right]= O\left(K\Bar{\chi}(G_{\gamma})\log T+KN\right).
\end{align*}

%%%%%%%%%%

\subsection{Group Regret for Full-LFUCB}
We begin the proof by providing a lemma similar to Lemma \ref{lem:RegDecompLF}.

\begin{lemma}\label{lem:RegDecompLFFull}
Let $\Bar{\gamma}(G_{\gamma})$ is the clique covering number of graph $G_{\gamma}.$ Let $\eta_k=\left(\frac{8(\xi+1)\sigma^2}{\Delta^2_k}\right)\log T.$ Then we have
\begin{align*}
    \sum_{i=1}^N\expe[n^{(i)}_k(T)]  & \leq  \Bar{\gamma}(G_{\gamma})\eta_k+N+(N-\Bar{\gamma}(G_{\gamma}))(3\gamma-1)
    \\
    & +\sum_{i\in V^{\prime}_{\gamma}}\left(\Big |\mathcal{N}_{\gamma}^{(i)}\Big |+1\right)\sum_{t>\tau_k^{(i)}}^{T-1}\left[\P \left(\widehat{\mu}_{1}^{(i)}(t)\leq \mu_{1}-C_{1}^{(i)}(t)\right)+\P \left(\widehat{\mu}_{k}^{(i)}(t)\geq \mu_{k}+C_{k}^{(i)}(t)\right)\right]
\end{align*}
where $V^{\prime}_{\gamma}$ is the maximal dominating set of $G_{\gamma}$ and $\mathcal{N}_{\gamma}^{(i)}$ is the set of followers of leader $i.$ Here ${\tau}^{(i)}_{k}$ be the maximum time step such that the total number of times agent $i$ pulls arm $k$ and the number of times agents in $\mathcal{N}_{\gamma}^{(i)}$ pull from arm $k$ is at most $\eta_k+\mathcal{N}_{\gamma}^{(i)}(\gamma-1).$ 
\end{lemma}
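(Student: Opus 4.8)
The plan is to follow the proof of Lemma \ref{lem:RegDecompLF} almost verbatim, exploiting the one simplification that full communication affords: since every follower now transmits \emph{every} arm pull rather than only the instantaneously suboptimal ones, the threshold $\tau_k^{(i)}$ already accounts for all pulls of arm $k$ by the leader and its followers. Consequently the cut-off argument bounds the early-time pulls directly, and none of the ``instantaneously optimal'' handling of equations \eqref{eq:condeqLF}--\eqref{eq:addtailLF} is required; this is precisely why the tail-probability sum in the statement runs over $t>\tau_k^{(i)}$ rather than over all of $[T-1]$.

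First I would decompose, exactly as in \eqref{eq:rearrangeLF},
\[
\sum_{i=1}^N\expe[n^{(i)}_k(T)] = \sum_{i\in V^{\prime}_{\gamma}}\left(\sum_{t=1}^T \P\left(A^{(i)}_t = k\right)+\sum_{j\in \mathcal{N}_{\gamma}^{(i)}}\sum_{t=1}^T \P\left(A^{(j)}_t = k\right)\right),
\]
using that $V^{\prime}_{\gamma}$ is a dominating set of $G_{\gamma}$ of size $\Bar{\gamma}(G_{\gamma})$ and that each follower is assigned to exactly one leader. Then, for each leader $i$, I would split at $\tau_k^{(i)}$ and invoke the copying identity $A_t^{(j)}=A^{(i)}_{t-d(i,j)}$ with $d(i,j)\le\gamma$. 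Shifting the follower time index from $t$ to $t-d(i,j)$ matches follower pulls to leader pulls up to $2d(i,j)$ boundary terms at the two ends of the range; summing these over all followers contributes $2(N-\Bar{\gamma}(G_{\gamma}))\gamma$. Combined with the $|\mathcal{N}_{\gamma}^{(i)}|(\gamma-1)$ padding built into $\tau_k^{(i)}$, the deterministic overhead totals $(N-\Bar{\gamma}(G_{\gamma}))[(\gamma-1)+2\gamma]=(N-\Bar{\gamma}(G_{\gamma}))(3\gamma-1)$, which is exactly the stated middle term.

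For the early block $t\le\tau_k^{(i)}$, the definition of $\tau_k^{(i)}$ under full communication caps the number of shared arm-$k$ pulls in the leader's neighborhood at $\eta_k+|\mathcal{N}_{\gamma}^{(i)}|(\gamma-1)$; summing the leader contributions over $V^{\prime}_{\gamma}$ gives $\Bar{\gamma}(G_{\gamma})\eta_k$, with the $+N$ arising from the boundary/initialization term as in Lemma \ref{lem:RegDecompLF}. For the late block $t>\tau_k^{(i)}$, monotonicity gives $N_k^{(i)}(t)>\eta_k$, so Lemma \ref{lem:tailrestate} applies to each leader pull probability; because every follower pull is a delayed copy of a leader pull, the follower terms collapse onto the same leader tail probabilities, producing the multiplicity $\left(|\mathcal{N}_{\gamma}^{(i)}|+1\right)$ in front of the tail-probability sum.

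The main obstacle I expect is the bookkeeping in the index shift $t\mapsto t-d(i,j)$: one must verify that, once every follower pull (arriving with per-agent delay up to $\gamma$) is folded into the leader's observation counter, $N_k^{(i)}(t)$ genuinely exceeds $\eta_k$ for \emph{every} $t>\tau_k^{(i)}$, so that Lemma \ref{lem:tailrestate} can be applied uniformly across the neighborhood. Everything else is a direct transcription of Lemma \ref{lem:RegDecompLF} with the $\argmax$-handling steps deleted, after which the claimed bound follows by collecting the three contributions.
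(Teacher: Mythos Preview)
Your proposal is correct and follows essentially the same route as the paper: decompose over leaders in $V'_\gamma$ and their followers, split at $\tau_k^{(i)}$, use the copying identity $A_t^{(j)}=A_{t-d(i,j)}^{(i)}$ to collapse follower pulls onto leader pulls (picking up the $2(N-\Bar\gamma(G_\gamma))\gamma$ boundary term and the $(N-\Bar\gamma(G_\gamma))(\gamma-1)$ padding to form $(3\gamma-1)$), and apply Lemma~\ref{lem:tailrestate} for $t>\tau_k^{(i)}$ with multiplicity $|\mathcal{N}_\gamma^{(i)}|+1$. You also correctly identify the one substantive simplification relative to Lemma~\ref{lem:RegDecompLF}: under full communication the $\argmax$-handling of \eqref{eq:condeqLF}--\eqref{eq:addtailLF} disappears, which is exactly why the tail sum here starts at $t>\tau_k^{(i)}$ rather than at $t=1$.
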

\begin{proof}
Recall that $V^{\prime}_{\gamma}$ is the maximal dominating set of $G_{\gamma}.$ Let $\mathcal{N}_{\gamma}^{(i)}$ be the set of followers of leader $i.$ Then for each suboptimal arm $k > 1$ we have
\begin{align}
    \sum_{i=1}^N\expe[n^{(i)}_k(T)]
    & = \sum_{i\in V^{\prime}_{\gamma}}\left(\sum_{t=1}^T \P\left(A^{(i)}_t = k\right)+\sum_{j\in \mathcal{N}_{\gamma}^{(i)} }\sum_{t=1}^T \P\left(A^{(j)}_t = k\right)\right) \label{eq:rearrangeLFFull}
\end{align}

 Let ${\tau}^{(i)}_{k}$ be the maximum time step such that the total number of times agent $i$ pulls arm $k$ and the number of times agents in $\mathcal{N}_{\gamma}^{(i)}$ pull from arm $k$ is at most $\eta_k+\mathcal{N}_{\gamma}^{(i)}(\gamma-1).$ This can be stated as \begin{align*}
\tau^{(i)}_{k} := \max \left\{t\in [T] :  \sum_{\tau=1}^{t}\indicate{A_{\tau}^{(i)}=k}+\sum_{j\in \mathcal{N}_{\gamma}^{(i)}}\sum_{\tau=1}^t\indicate{A_{\tau}^{(i)}=k}\leq \eta_k+\mathcal{N}_{\gamma}^{(i)}(\gamma-1)\right\}.
\end{align*} 
Then we have $
N_k^{(i)}(t)> \eta_k, \forall t> \tau^{(i)}_{k}.$ 
We analyse the expected number of times all agents pull suboptimal arm $k$ as follows. Let $d(i,j)$ be the distance between agents $i$ and $j$ in graph $G.$ Then note that for any $j\in \mathcal{N}_{\gamma}^{(i)}$ we have $A_t^{(j)}=A_{t-d(i,j)}^{(i)}$ and $d(i,j)\leq \gamma.$ 
\begin{align}
\sum_{i\in V^{\prime}_{\gamma}}\left\{\sum_{t=1}^T \indicate{A^{(i)}_t = k}+\sum_{j\in \mathcal{N}_{\gamma}^{(i)} }\sum_{t=1}^T \indicate{A^{(j)}_t = k}\right\}  \leq \sum_{i\in V^{\prime}_{\gamma}}\left\{\sum_{t=1}^{\tau^{(i)}_{k}} \indicate{A^{(i)}_t = k}
\right.
\nonumber\\
\left.
+\sum_{j\in \mathcal{N}_{\gamma}^{(i)} }\sum_{t=d(i,j)}^{\tau^{(i)}_{k}} \indicate{A^{(j)}_t = k}\right\}
\nonumber\\
+\sum_{i\in V^{\prime}_{\gamma}}\left\{\sum_{t>\tau^{(i)}_{k}}^T \indicate{A^{(i)}_t = k}+\sum_{j\in \mathcal{N}_{\gamma}^{(i)} }\sum_{t>\tau^{(i)}_{k}}^{T-d(i,j)} \indicate{A^{(i)}_t = k}\right\} +\sum_{i\in V^{\prime}_{\gamma}}\sum_{j\in \mathcal{N}_{\gamma}^{(i)}}2d(i,j)
\nonumber\\
\leq\sum_{i\in V^{\prime}_{\gamma}}\left\{\sum_{t=1}^{\tau^{(i)}_{k}} \indicate{A^{(i)}_t = k}+\sum_{j\in \mathcal{N}_{\gamma}^{(i)} }\sum_{t=d(i,j)}^{\tau^{(i)}_{k}} \indicate{A^{(j)}_t = k}\right\}
\nonumber\\
+\sum_{i\in V^{\prime}_{\gamma}}\left(\Big |\mathcal{N}_{\gamma}^{(i)}\Big |+1\right)\sum_{t>\tau^{(i)}_{k}}^T \indicate{A^{(i)}_t = k}+2(N -\Bar{\gamma}(G_{\gamma}))\gamma
\label{eq:regdecLFFull}
\end{align}
Now we proceed to upper bound the first two terms of right hand side of (\ref{eq:regdecLFFull}) as follows. Note that we have
\begin{align}
\sum_{i\in V^{\prime}_{\gamma}}\left\{\sum_{t=1}^{\tau^{(i)}_{k}} \indicate{A^{(i)}_t = k}+\sum_{j\in \mathcal{N}_{\gamma}^{(i)} }\sum_{t=d(i,j)}^{\tau^{(i)}_{k}} \indicate{A^{(j)}_t = k}\right\} 
\nonumber\\
\leq \sum_{i\in V^{\prime}_{\gamma}}\left(\eta_k+\mathcal{N}_{\gamma}^{(i)}(\gamma-1)\right)+ \sum_{i\in V^{\prime}_{\gamma}}\Big |\mathcal{N}_{\gamma}^{(i)}\Big |\sum_{t=1}^{\tau_{k}^{(i)}}\indicate{A_{t}^{(i)}=k}\label{eq:noLFFull}
\end{align}

Taking the expectation of (\ref{eq:regdecLFFull})  and (\ref{eq:noLFFull}) we have
\begin{align}
\sum_{i\in V^{\prime}_{\gamma}}\left(\sum_{t=1}^T\P\left(A_t^{(i)}=k\right)+\sum_{j\in \mathcal{N}_{\gamma}^{(i)}}\sum_{t=1}^T\P\left(A_t^{(j)}=k\right)\right)\leq \Bar{\gamma}(G_{\gamma})\eta_k+N+(N-\Bar{\gamma}(G_{\gamma}))(3\gamma-1)
\nonumber\\
+\sum_{i\in V^{\prime}_{\gamma}}\left(\Big |\mathcal{N}_{\gamma}^{(i)}\Big |+1\right)\sum_{t>\tau^{(i)}_{k}}^T \P\left(A^{(i)}_t = k\right)+ \sum_{i\in V^{\prime}_{\gamma}}\Big |\mathcal{N}_{\gamma}^{(i)}\Big |\sum_{t>\tau^{(i)}_{k}}^{\tau_{k}^{(i)}}\P\left(A_{t}^{(i)}=k\right)
\label{eq:suboptLFFull}
\end{align}
Recall that $
N_k^{(i)}(t)> \eta_k, \forall t> \tau^{(i)}_{k}.$  Thus from (\ref{eq:suboptLFFull}) and Lemma \ref{lem:tailrestate} we have
\begin{align}
\sum_{i\in V^{\prime}_{\gamma}}\left(\sum_{t=1}^T\P\left(A_t^{(i)}=k\right)+\sum_{j\in \mathcal{N}_{\gamma}^{(i)}}\sum_{t=1}^T\P\left(A_t^{(j)}=k\right)\right)\leq \Bar{\gamma}(G_{\gamma})\eta_k+(N-\Bar{\gamma}(G_{\gamma}))(3\gamma-1)
\nonumber\\
+\sum_{i\in V^{\prime}_{\gamma}}\left(\Big |\mathcal{N}_{\gamma}^{(i)}\Big |+1\right)\sum_{t>\tau_k^{(i)}}^{T-1} \left[\P \left(\widehat{\mu}_{1}^{(i)}(t)\leq \mu_{1}-C_{1}^{(i)}(t)\right)+\P \left(\widehat{\mu}_{k}^{(i)}(t)\geq \mu_{k}+C_{k}^{(i)}(t)\right)\right].
\label{eq:subsamplLFFull}
\end{align}

The proof of Lemma \ref{lem:RegDecompLFFull} follows from (\ref{eq:rearrangeLFFull}) and (\ref{eq:subsamplLFFull}). 
\end{proof}

Then from Lemmas \ref{lem:tailApp}, \ref{lem:tailsum} and \ref{lem:RegDecompLFFull} it follows that
\begin{align*}
     \expe \left[R(T)\right]= O\left(K\Bar{\gamma}(G_{\gamma})\log T+KN\right).
\end{align*}

%%%%%%%%%%%%%%%%%%%%%%%

\section{Additional Experimental Results}\label{sec:simulation}
In this section we provide additional simulation results. We observe that performance of the algorithms improve when we decrease $\xi.$ Thus for simulations provided in this section we use $\xi=1.001.$ Further when $\gamma$ is increased communication density increases and performance improve. For simulations provided in this section we consider $\gamma = 7.$ We use the same graph structure and reward structures used in the results provided in the main paper.

\begin{figure*}[h]
    \centering
    \includegraphics[width=0.95\textwidth]{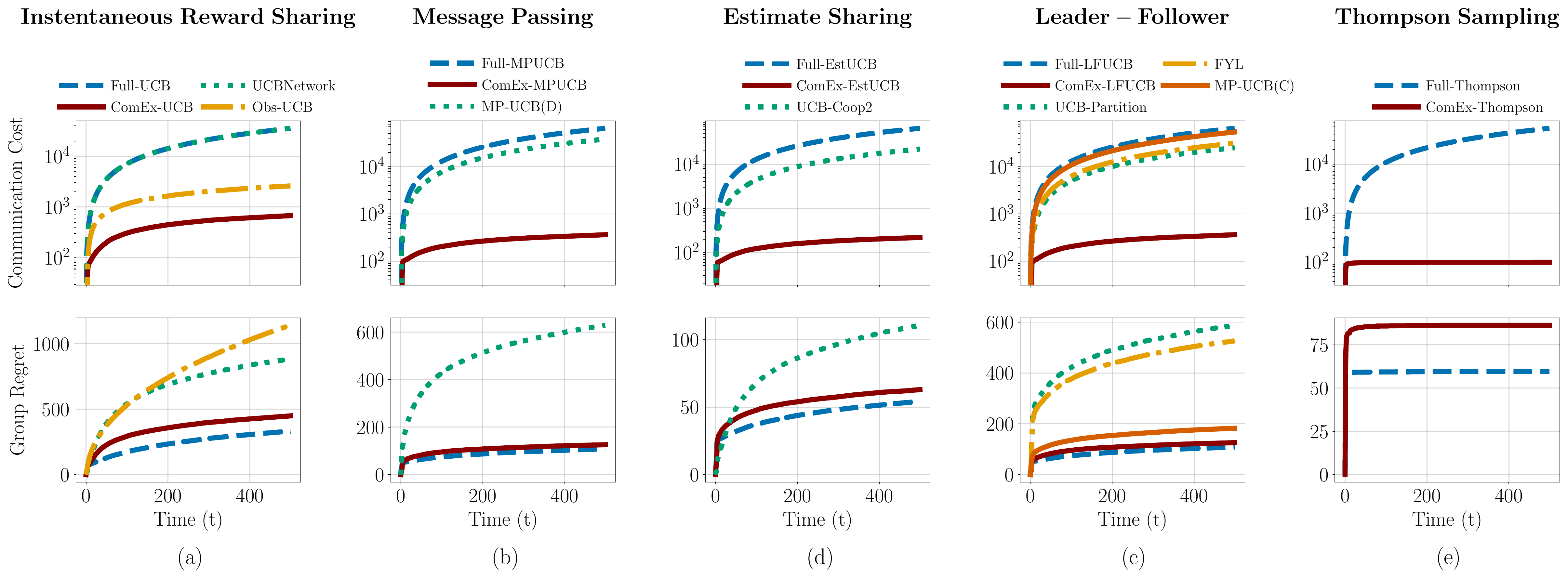}
    % \vspace{-1.0em}
    \caption{\small{ A comparison of expected cumulative group regret and communication cost of our algorithms and existing state-of-the-art algorithms in several benchmark cooperative bandit frameworks.}}
    \label{fig:performanceApp}
    % \vspace{-5pt}
\end{figure*}

\paragraph{Additional details on estimate sharing}
Note that in estimate sharing agents average their estimates of instantaneously suboptimal arms at every time step. Thus at each time step each agent creates $2K$ number of messages (estimated sum of rewards for each arm and estimated number of pulls from each arm). If the number of arms are of same order as time horizon this leads to $O(T^2)$ cost for Full-EstUCB and $O(T\log T)$ cost for ComEx-EstUCB. However we consider that nummber of arms are fixed and $K<< T$ for large $T$ and when providing simulation results for the communication cost we only considered the communication cost associated with initiating messages and passing them through network neglecting the dependence on number of arms. This leads to $O(T)$ cost for Full-EstUCB and $O(\log T)$ cost for ComEx-EstUCB.

%%%%%%%%%%%%%%%%%%%%%%%
\newpage
\section{Pseudo code of ComEx-UCB}\label{sec:algUCB}
\vspace{1em}

\begin{algorithm}[ht!] 
\caption{ComEx-UCB}\label{alg:ComEx-UCB}
    \SetAlgoLined
    \KwIn{Arms $k\in [K],$ variance proxy upper bound $\sigma^2$, parameter $\xi$}
    {\bf  Initialize:} {$ N_k^{(i)}(0)= \widehat{\mu}_k^{(i)}(0)=C_k^{(i)}(0)=0, \forall k,i$}
    
    \For{\text{\normalfont\ each iteration} $t \in [T]$}{
        $E_t\gets \emptyset$
            
         \For{\text{\normalfont\ each agent} $i \in [N]$}{
           \tcc{Sampling phase}
           \If{$t=1$}{
               $A_{t}^{(i)}\gets \textsc{RandomArm} \left([K]\right)$
           }\Else{
            
             $A_{t}^{(i)}\gets\argmax_{k}\widehat{\mu}_k^{(i)}(t-1)+C_k^{(i)}(t-1)$
            }
             \tcc{Send messages}
            \If {$A_{t}^{(i)}\neq \argmax_k{\widehat{\mu}_k^{(i)}(t-1)}$}{ 
             $\textsc{Send}\left(m_t^{(i)}:=\Big \langle A_t^{(i)}, X_t^{(i)}\Big \rangle\right)$
                
            }
            $\mathbf{m}_t^{(i)}\gets m_t^{(i)}$
            
        }
        \For {\text{\normalfont\ each agent} $i \in [N]$}{
            \tcc{Receive messages}
            \For{\text{\normalfont\ each neighbor} $j$ \text{\normalfont\ s. t.} $\{(j\to i)\in E\}$}{$\mathbf{m}_t^{(i)}\gets \mathbf{m}_t^{(i)}\cup m_t^{(j)}$
            }
            
             \For{\text{\normalfont\ each agent $j\in [N]$}}{
             \If{$m_t^{(j)}\in \mathbf{m}_t^{(i)}$}{
                $E_t\gets E_t \cup \{(j\to i)$\}
             }
             }
            \tcc{Update estimates}
            \For {\text{\normalfont\ each arm} $k \in [K]$}{
            
                \textsc{Calculate}   $\left(N_k^{(i)}(t), \widehat{\mu}_k^{(i)}(t),C_k^{(i)}(t)\right)$ 
            }
        }
    }
\end{algorithm}
%%%%%%%%%%%%%%%%%%%%%%%
% \newpage

\section{Pseudo code of ComEx-MPUCB}\label{sec:algMPUCB}
\vspace{1em}

\begin{algorithm}[ht!]
    \SetAlgoLined
    \KwIn{Arms $k\in [K],$ variance proxy upper bound $\sigma_k^2$, parameter $\xi,\gamma$}
    {\bf  Initialize:} {$ N_k^{(i)}(0)= \widehat{\mu}_k^{(i)}(0)=C_k^{(i)}(0)=0, \forall k,i$}
    
    \For{\text{\normalfont\ each iteration} $t \in [T]$}{
        $E_t\gets \emptyset$
            
         \For{\text{\normalfont\ each agent} $i \in [N]$}{
           \tcc{Sampling phase}
           \If{$t=1$}{
               $A_{t}^{(i)}\gets \textsc{RandomArm} \left([K]\right)$
           }\Else{
            
             $A_{t}^{(i)}\gets\argmax_{k}\widehat{\mu}_k^{(i)}(t-1)+C_k^{(i)}(t-1)$
            }
             \tcc{Send messages}
             \If {$A_{t}^{(i)}\neq \argmax_k{\widehat{\mu}_k^{(i)}(t-1)}$}{  $\textsc{Create}\left(m_t^{(i)}:=\Big \langle i, t, A_t^{(i)}, X_t^{(i)}\Big \rangle\right)$
             
             $\mathbf{m}_t^{(i)}\gets m_t^{(i)}$
            }
            
            $\textsc{Send}\left(\mathbf{m}_t^{(i)}\right)$
        }
        \For {\text{\normalfont\ each agent} $i \in [N]$}{
            \tcc{Receive messages}
            \For{\text{\normalfont\ each neighbor} $j$ \text{\normalfont\ s. t.} $\{(j\to i)\in E\}$}{$\mathbf{m}_t^{(i)}\gets \mathbf{m}_t^{(i)}\cup m_t^{(j)}$
            }
            
            \tcc{Discard messages older than $\gamma$}
            \For{\text{\normalfont each neighbor} $j\in [N]$ }{$\mathbf{m}_t^{(i)}\gets \mathbf{m}_{t}^{(i)}\symbol{92} m_{\tau}^{(j)}, \forall \tau$ \text{\normalfont\ s. t.} $\tau < t -\gamma$
            
            \For{\text{\normalfont each time step} $\tau\in \{t-\gamma+1, \ldots, t\}$}{
                \If{$m_{\tau}^{(j)}\in \mathbf{m}_t^{(i)}$}{
                    $E_{\tau}\gets E_{\tau} \cup \{(j\to i)\}$
                }
            }
            
             }
            \tcc{Update estimates}
            \For {\text{\normalfont\ each arm} $k \in [K]$}{
            
                \textsc{Calculate}   $\left(N_k^{(i)}(t), \widehat{\mu}_k^{(i)}(t),C_k^{(i)}(t)\right)$ 
            }
         $\mathbf{m}_{t+1}^{(i)}\gets \mathbf{m}_{t}^{(i)}$
        }
    }
\caption{ComEx-MPUCB}
\label{alg:ComEx-MPUCB}
\end{algorithm}

%%%%%%%%%%%%%%%%%%%%%%%
% \newpage

\section{Pseudo code of ComEx-LFUCB}\label{sec:algLFUCB}
\vspace{1em}
For all $i\in V^{\prime}_{\gamma}$ the indicator variable $I_t^{(i)}$ takes value 1 if $A_t^{(i)}$ is instantaneously suboptimal.

\begin{algorithm}[ht!]
    \SetAlgoLined
    \KwIn{Arms $k\in [K],$ variance proxy upper bound $\sigma_k^2$, parameter $\xi,\gamma$}
    {\bf  Initialize:} {$ N_k^{(i)}(0)= \widehat{\mu}_k^{(i)}(0)=C_k^{(i)}(0)=0, \forall k,i$}
    
    \For{\text{\normalfont\ each iteration} $t \in [T]$}{
        $E_t\gets \emptyset$
            
         \For{\text{\normalfont\ each agent} $i \in V^{\prime}_{\gamma}$}{
           \tcc{Sampling phase}
           Same as ComEx-MPUCB
           
             \tcc{Send messages}
             $\textsc{Create}\left(m_t^{(i)}:=\Big \langle i, t, A_t^{(i)}, I_t^{(i)}\Big \rangle\right)$

            $\mathbf{m}_t^{(i)}\gets m_t^{(i)}$
            
            $\textsc{Send}\left(\mathbf{m}_t^{(i)}\right)$
        
        \For{\text{\normalfont\ each agent} $j \in \mathcal{N}^{(i)}_{\gamma}$}{
        \tcc{Sampling phase}
           \If{$t<d(i,j)$}{
               $A_{t}^{(i)}\gets \textsc{RandomArm} \left([K]\right)$
           }\Else{
            
             $A_{t}^{(j)}\gets A_{t-d(i,j)}^{(i)}$
            }
             \If {$I_{t-d(i,j)}^{(i)}=1$}{  $\textsc{Create}\left(m_t^{(j)}:=\Big \langle j, t, A_t^{(j)}, X_t^{(j)}\Big \rangle\right)$
             
             $\mathbf{m}_t^{(j)}\gets m_t^{(j)}$
            }
        }
        }
        
        \For {\text{\normalfont\ each agent} $i \in V^{\prime}_{\gamma}$}{
            \tcc{Receive messages}
            \For{\text{\normalfont\ each neighbor} $j$ \text{\normalfont\ s. t.} $\{(j\to i)\in E\}$}{$\mathbf{m}_t^{(i)}\gets \mathbf{m}_t^{(i)}\cup m_t^{(j)}$
            }
            
            \tcc{Discard messages older than $\gamma$}
            \For{\text{\normalfont each neighbor} $j\in [N]$ }{$\mathbf{m}_t^{(i)}\gets \mathbf{m}_{t}^{(i)}\symbol{92} m_{\tau}^{(j)}, \forall \tau$ \text{\normalfont\ s. t.} $\tau < t -\gamma$
            
            \For{\text{\normalfont each time step} $\tau\in \{t-\gamma+1, \ldots, t\}$}{
                \If{$m_{\tau}^{(j)}\in \mathbf{m}_t^{(i)}$}{
                    $E_{\tau}\gets E_{\tau} \cup \{(j\to i)\}$
                }
            }
            
             }
            \tcc{Update estimates}
            \For {\text{\normalfont\ each arm} $k \in [K]$}{
            
                \textsc{Calculate}   $\left(N_k^{(i)}(t), \widehat{\mu}_k^{(i)}(t),C_k^{(i)}(t)\right)$ 
            }
         $\mathbf{m}_{t+1}^{(i)}\gets \mathbf{m}_{t}^{(i)}$
        }
    }
\caption{ComEx-LFUCB}
\label{alg:ComEx-LFUCB}
\end{algorithm}

%%%%%%%%%%%%%%%%%%%%%%%
% \newpage

\section{Pseudo code of ComEx-EstUCB}\label{sec:algEstUCB}
\vspace{1em}
Let $\widehat{N}_k^{(i)}(t)$ be the estimated number of pulls from arm $k$ for agent $i$ up to time $t.$
\begin{algorithm}[ht!]
    \SetAlgoLined
    \KwIn{Arms $k\in [K],$ variance proxy upper bound $\sigma_k^2$, parameter $\xi,\gamma$}
    {\bf  Initialize:} {$ \widehat{N}_k^{(i)}(0)= \widehat{\mu}_k^{(i)}(0)=C_k^{(i)}(0)=0, \forall k,i$}
    
    \For{\text{\normalfont\ each iteration} $t \in [T]$}{
        $E_t\gets \emptyset$
            
         \For{\text{\normalfont\ each agent} $i \in [N]$}{
           \tcc{Sampling phase}
           \If{$t=1$}{
               $A_{t}^{(i)}\gets \textsc{RandomArm} \left([K]\right)$
           }\Else{
            
             $A_{t}^{(i)}\gets\argmax_{k}\widehat{\mu}_k^{(i)}(t-1)+C_k^{(i)}(t-1)$
            }
             \tcc{Send messages}
             \If {$A_{t}^{(i)}\neq \argmax_k{\widehat{\mu}_k^{(i)}(t-1)}$}{  $\textsc{Create}\left(m_t^{(i)}:=\Big \langle i, t, \widehat{N}_t^{(i)},\widehat{\mu}_k^{(i)}(t-1)\Big \rangle\right)$
             
             $\mathbf{m}_t^{(i)}\gets m_t^{(i)}$
            }
            
            $\textsc{Send}\left(\mathbf{m}_t^{(i)}\right)$
        }
        \For {\text{\normalfont\ each agent} $i \in [N]$}{
            \tcc{Receive messages}
            \For{\text{\normalfont\ each neighbor} $j$ \text{\normalfont\ s. t.} $\{(j\to i)\in E\}$}{$\mathbf{m}_t^{(i)}\gets \mathbf{m}_t^{(i)}\cup m_t^{(j)}$
            }
            
            \tcc{Discard messages older than $\gamma$}
            \For{\text{\normalfont each neighbor} $j\in [N]$ }{$\mathbf{m}_t^{(i)}\gets \mathbf{m}_{t}^{(i)}\symbol{92} m_{\tau}^{(j)}, \forall \tau$ \text{\normalfont\ s. t.} $\tau < t -\gamma$
            
            \For{\text{\normalfont each time step} $\tau\in \{t-\gamma+1, \ldots, t\}$}{
                \If{$m_{\tau}^{(j)}\in \mathbf{m}_t^{(i)}$}{
                    $E_{\tau}\gets E_{\tau} \cup \{(j\to i)\}$
                }
            }
            
             }
            \tcc{Update estimates}
            \For {\text{\normalfont\ each arm} $k \in [K]$}{
            
                \textsc{Calculate}   $\left(\widehat{N}_k^{(i)}(t), \widehat{\mu}_k^{(i)}(t),C_k^{(i)}(t)\right)$ {\text{\normalfont according to consensus algorithm}} 
            }
         $\mathbf{m}_{t+1}^{(i)}\gets \mathbf{m}_{t}^{(i)}$
        }
    }
\caption{ComEx-EstUCB}
\label{alg:ComEx-EstUCB}
\end{algorithm}

%%%%%%%%%%%%%%%%%%%%%%%
% \newpage

\section{Pseudo code of ComEx-MPThompson}\label{sec:algMPTHmp}
\vspace{1em}
In Thompson sampling for each arm $k$ each agent $i$ maintains a posterior distribution $\phi_k^{(i)}$ and updates the distribution according to the available information. Then draw samples from the posterior distribution and pull the arm with highest sample value.

\begin{algorithm}[ht!]
    \SetAlgoLined
    \KwIn{Arms $k\in [K],$ parameter $\gamma$}
    {\bf  Initialize:} {$ \phi_k^{(i)}(0), \forall k,i$}
    
    \For{\text{\normalfont\ each iteration} $t \in [T]$}{
        $E_t\gets \emptyset$
            
         \For{\text{\normalfont\ each agent} $i \in [N]$}{
           \tcc{Sampling phase}
           \For{\text{\normalfont\ each arm} $k \in [K]$}{
            $y_k^{(i)}(t)\sim \phi_k^{(i)}(t-1)$
            
            $A_{t}^{(i)}\gets\argmax_{k}y_k^{(i)}(t)$
            }
             \tcc{Send messages}
             $\textsc{Create}\left(m_t^{(i)}:=\Big \langle i, t, A_t^{(i)}, X_t^{(i)}\Big \rangle\right)$

            $\mathbf{m}_t^{(i)}\gets m_t^{(i)}$
            
            $\textsc{Send}\left(\mathbf{m}_t^{(i)}\right)$
        }
        \For {\text{\normalfont\ each agent} $i \in [N]$}{
            \tcc{Receive messages}
            \For{\text{\normalfont\ each neighbor} $j$ \text{\normalfont\ s. t.} $\{(j\to i)\in E\}$}{$\mathbf{m}_t^{(i)}\gets \mathbf{m}_t^{(i)}\cup m_t^{(j)}$
            }
            
            \tcc{Discard messages older than $\gamma$}
            \For{\text{\normalfont each neighbor} $j\in [N]$ }{$\mathbf{m}_t^{(i)}\gets \mathbf{m}_{t}^{(i)}\symbol{92} m_{\tau}^{(j)}, \forall \tau$ \text{\normalfont\ s. t.} $\tau < t -\gamma$
            
            \For{\text{\normalfont each time step} $\tau\in \{t-\gamma+1, \ldots, t\}$}{
                \If{$m_{\tau}^{(j)}\in \mathbf{m}_t^{(i)}$}{
                    $E_{\tau}\gets E_{\tau} \cup \{(j\to i)\}$
                }
            }
            
             }
            \tcc{Update estimates}
            \For {\text{\normalfont\ each arm} $k \in [K]$}{
            
                \textsc{Calculate}   $\left( \phi_k^{(i)}(t)\right)$ 
            }
         $\mathbf{m}_{t+1}^{(i)}\gets \mathbf{m}_{t}^{(i)}$
        }
    }
\caption{ComEx-MPThompson}
\label{alg:ComEx-MPThompson}
\end{algorithm}

\end{document}